\DeclareMathOperator{\E}{\mathbb E}
\DeclareMathOperator{\bP}{\mathbb P}
\DeclareBoldMathCommand{\w}{w}
\DeclareBoldMathCommand{\u}{u}
\DeclareBoldMathCommand{\W}{W}
\DeclareBoldMathCommand{\U}{U}
\DeclareBoldMathCommand{\s}{s}
\DeclareBoldMathCommand{\x}{x}
\DeclareBoldMathCommand{\g}{g}
\DeclareBoldMathCommand{\z}{z}
\DeclareBoldMathCommand{\e}{e}
\DeclareBoldMathCommand{\p}{p}
\DeclareBoldMathCommand{\y}{y}
\newcommand{\reals}{\mathbb{R}}
\newcommand\doma[1]{\mathcal{#1}}
\newcommand{\Sset}{\mathcal{S}}
\newcommand{\Qset}{\mathcal{Q}}
\newcommand{\Aset}{\mathcal{A}}
\newcommand{\Eset}{\mathcal{E}}
\newcommand{\Vset}{\mathcal{V}}
\newcommand{\Gset}{\mathcal{G}}
\newcommand{\Mset}{\mathcal{M}}
\newcommand{\regret}{\mathcal{R}}
\newcommand{\sumT}{\sum_{t=1}^T}
\newcommand{\sumt}{\sum_{j=1}^t}
\newcommand{\ystar}{y^\star}
\newcommand{\ystart}{y_t^\star}
\newcommand{\mstart}{m_t^\star}
\newcommand{\lhatt}{\widehat{\ell}_t}
\newcommand{\id}{\mathbbm{1}}
\newcommand{\half}{\tfrac{1}{2}}
\newcommand\inner[2]{\langle #1, #2 \rangle}
\newcommand\Eb[1]{\E\left[ #1\right]}
\newcommand\Ob[1]{O\left( #1\right)}
\newcommand{\zot}{{\id[y_t' \not = y_t]}}
\newcommand{\zopt}{{\sum_{y \in [K]} p_t'(y) \id[y \not = y_t]}}
\newcommand{\vmax}{v_{\textnormal{max}}}
\newcommand{\Vmax}{V_{\textnormal{max}}}
\newcommand{\lmax}{\ell_{\textnormal{max}}}
\newcommand{\lhat}{\widehat{\ell}}
\newcommand{\ghat}{\widehat{\g}}
\newcommand{\vt}{{v}_t}
\newcommand{\revt}{\big|\{t: y^\star_t \not \in \Qset\}\big|}
\DeclareMathOperator*{\argmax}{arg\,max}
\newcommand{\bUhat}{\widehat{\U}}
\newcommand{\Uhat}{\widehat{U}}
\newcommand{\norm}[1]{\big\|{#1}\big\|}
\newcommand{\theset}[2]{\left\{{#1}\,:\,{#2}\right\}}
\newcommand{\dt}{\displaystyle}
\DeclareMathOperator{\myvec}{vec}
\newtheorem{theorem}{Theorem}
\newtheorem{lemma}{Lemma}
\newtheorem{corollary}{Corollary}
\title{
Beyond Bandit Feedback \\ in Online Multiclass Classification
}
\author{%
  Dirk van der Hoeven \\
  Dept.\ of Computer Science \\
  Università degli Studi di Milano, Italy \\
  \and
  Federico Fusco\\
  Dept.\ of Computer, Control and Management Engineering \\
  Sapienza Università di Roma, Italy\\
  \and
  Nicolò Cesa-Bianchi \\
  DSRC \& Dept.\ of Computer Science \\
  Università degli Studi di Milano, Italy
}
\date{}
\newcommand{\todo}[1]{%
\ifmmode
\text{\textcolor{red}{TODO: #1}}
\else
\textcolor{red}{TODO: #1}
\fi
}
\newcommand{\Dirk}[1]{%
\ifmmode
\text{\textcolor{red}{Dirk: #1}}
\else
\textcolor{red}{Dirk: #1}
\fi
}
\newcommand{\NCB}[1]{%
\ifmmode
\text{\textcolor{blue}{NCB: #1}}
\else
\textcolor{blue}{NCB: #1}
\fi
}
\newcommand{\fnote}[1]{%
\ifmmode
\text{\textcolor{green}{FF: #1}}
\else
\textcolor{green}{FF: #1}
\fi
}
\begin{document}

\maketitle

\begin{abstract}
We study the problem of online multiclass classification in a setting where the learner's feedback is determined by an arbitrary directed graph. While including bandit feedback as a special case, feedback graphs allow a much richer set of applications, including filtering and label efficient classification.
We introduce \textproc{Gappletron}, the first online multiclass algorithm that works with arbitrary feedback graphs. For this new algorithm,
we prove surrogate regret bounds that hold, both in expectation and with high probability, for a large class of surrogate losses. Our bounds are of order $B\sqrt{\rho KT}$, where $B$ is the diameter of the prediction space, $K$ is the number of classes, $T$ is the time horizon, and $\rho$ is the domination number (a graph-theoretic parameter affecting the amount of exploration). In the full information case, we show that \textproc{Gappletron} achieves a constant surrogate regret of order $B^2K$. We also prove a general lower bound of order $\max\big\{B^2K,\sqrt{T}\big\}$ showing that our upper bounds are not significantly improvable. Experiments on synthetic data show that for various feedback graphs 
our algorithm is competitive against known baselines.
\end{abstract}

\section{Introduction}

In online multiclass classification a learner interacts with an unknown 
environment in a sequence of rounds. At each round $t$, the learner observes a feature vector $\x_t \in \reals^d$ and outputs a prediction $y_t'$ for the label $y_t \in \{1,\ldots,K\}$ associated with $\x_t$.
If $y_t' \neq y_t$, then the learner is charged with a mistake.
\citet{kakade2008efficient} introduced the bandit version of online multiclass classification, where the only feedback received by the learner after each prediction is the loss $\id[y_t' \not = y_t]$. Hence, if a mistake is made at time $t$ (and $K > 2$), the learner cannot uniquely identify the true label $y_t$ based on the feedback information.

Although bandits are a canonical example of partial feedback, they fail to capture a number of important practical scenarios of online classification
. Consider for example spam filtering, where an online learner is to classify emails as spam or non-spam based on their content. Whenever the learner classifies an email as legitimate, the recipient gets to see it, and can inform the learner whether the email was correctly classified of not. However, when the email is classified as spam, the learner does not get any feedback because the email is not checked by the recipient. Another example is label efficient multiclass classification. Here, instead of making a prediction, the learner can ask a human expert for the true label.
At the steps when predictions are made, however, the learner does not receive any feedback information (not even their own loss).
A further example is disease prevention: if we predict an outburst of disease in a certain area, we can preemptively stop it by vaccinating the local population. This intervention would prevent us from observing whether our prediction was correct, but would still allow us to observe an outburst occurring in a different area.

Unlike bandits, the amount of feedback obtained by the learner in these examples depends on the predicted class, and can vary from full information to no feedback at all. This scenario has been previously considered in the framework of online learning with feedback graphs \citep{Mannor2011from,AlonCDK15,alon2017nonstochastic}.
A feedback graph is a directed graph $\Gset = (\Vset, \Eset)$ where each node in $\Vset$ receives at least one edge from some other node in $\Vset$ (possibly from itself). The nodes in $\Vset$ correspond to actions, and a directed edge $(a,b) \in \Eset$ indicates that by playing action $a$ the learner observes the loss of action $b$. This generalizes the well-known online learning settings of experts (where $\Gset$ is the complete graph, including self-loops) and bandits (where $\Gset$ has only self-loops). Note that it is easy to model spam filtering and label efficient prediction using feedback graphs. For spam filtering, $\Gset$
contains only two actions $s$ and $n$ (corresponding, respectively, to the learner's predictions for spam and non-spam), and the edge set is $\Eset = \big\{(n, n), (n, s)\big\}$. For label efficient multiclass prediction, $\Gset$ contains a node for each class, plus an extra node corresponding to issuing a label request.
It is important to observe that all previous analyses of feedback graphs only apply to the abstract setting of prediction with experts, where any dependence of the loss on feature vectors is ignored. This hampers the application of those results to online multiclass classification.
In this work we build on previous results on online learning and classification with bandit feedback to design and analyze the first algorithm for online multiclass classification with arbitrary feedback graphs. In doing so, we also improve the analyses of the previously studied special cases (full information and bandit feedback) of multiclass classification.

In the online multiclass classification setting, the goal is bound the number of mistakes made by the learner. The mistake bounds take the following form:
\begin{align}\label{eq:surrogate regret def}
    \sumT \zot = \sumT \ell_t(\U) + \regret_T,
\end{align}
where $\ell_t$ is a surrogate loss, $\U \in \doma{W} \subseteq \reals^{d \times K}$ is the matrix of reference predictors, and $\regret_T$ is called the surrogate regret. In this work we provide two types of bounds on the %
surrogate regret:
bounds that hold in expectation and bounds that hold with high probability. Note that equation \eqref{eq:surrogate regret def} could also be written as $\sumT \big(\zot - \ell_t(\U)\big) = \regret_T$. However, we prefer the former former since $\regret_T$ is not a proper regret: because the zero-one loss is non-convex we compare it with a surrogate loss.

Our results build on recent work by \citet{vanderHoeven2020exploiting}, who showed that one can exploit the gap between the surrogate loss and the zero-one loss to derive improved surrogate regret bounds
in the full information and bandit settings of online multiclass classification. We modify the \textproc{Gaptron} algorithm \citep{vanderHoeven2020exploiting} to make it applicable to the feedback graph setting. In the analysis of the resulting algorithm, called \textproc{Gappletron}\footnote{Our algorithm is called after the apple tasting feedback model, which is the original name of the spam filtering graph.}, we use several new insights to show that it has %
$O(B\sqrt{\rho KT})$ surrogate regret in expectation and $O\big(\!\sqrt{\rho KT(B^2 + \ln(1/\delta))}\big)$ surrogate regret with probability at least $1-\delta$ for any feedback graph with domination number\footnote{The domination number is the cardinality of the smallest dominating set.} $\rho$, and for any $\norm{\myvec(\U)} \leq B$ for some norm $\norm{\cdot}$
(if $\norm{\cdot}$ is the Euclidean norm, then $\norm{\myvec(\U)}$ is the Frobenius norm of $\U$). For example, in both spam filtering and label efficient classification we have $\rho=1$.
So in the label efficient setting, where each label request counts as a mistake, with high probability \textproc{Gappletron} makes at most order of $B\sqrt{KT}$ mistakes while requesting at most order of $B\sqrt{KT}$ labels. Note that we are not aware of previously known high-probability bounds on the surrogate regret. Furthermore, whereas the results of \citet{vanderHoeven2020exploiting} only hold for a limited number of surrogate loss functions, our results hold for the larger class of regular surrogate losses.

Interestingly, with feedback graphs the surrogate regret for online multiclass classification has, in general, a better dependence on $T$ than the regret for online learning. Indeed, \citet{AlonCDK15} show that the best possible online learning regret is $\Omega(T^{2/3})$ for certain feedback graphs that are called weakly observable (e.g., the graphs for label efficient classification). In contrast, we prove a $O(T^{1/2})$ upper bound on the surrogate regret for any feedback graph, including weakly observable ones.

Our results cannot be significantly improved in general: we prove a $\Omega(B^2K + \sqrt{T})$ lower bound on the surrogate regret. Due to the new insights required by their proofs, we believe the high-probability bounds and the lower bounds are our strongest technical contributions.
 
We provide several other new results. In the separable case, when there exists a $\U$ for which $\sumT \ell_t(\U) = 0$, \textproc{Gappletron} has $O(B\sqrt{\rho T})$ surrogate regret in expectation. Even though $O(B^2K)$ mistake bounds are possible in the separable setting \citep{beygelzimer2019bandit}, ours is the first algorithm that has satisfactory surrogate regret in the non-separable case and has an improved surrogate regret in the separable case. Note that although \textproc{Banditron} \citep{kakade2008efficient} also makes $O(B\sqrt{KT})$ mistakes in the separable case, it suffers $O(K^{1/3}(BT)^{2/3})$ surrogate regret in the non-separable case. Our results for the separable case in the full information setting improve results of \citet{vanderHoeven2020exploiting} by a factor of $K$: \textproc{Gappletron} suffers $O(B^2)$ surrogate regret both in expectation and with high probability, thus matching the bounds of the classical \textproc{Perceptron} algorithm \citep{rosenblatt1958perceptron, Novikov1962}  --- see Table~\ref{table:intro} for a summary of our theoretical results. Finally, we also evaluated the performance of \textproc{Gappletron} in several experiments, showing that \textproc{Gappletron} is competitive against known baselines in the full information, bandit, and multiclass spam filtering setting, in which predicting a certain class provides full information feedback and all other predictions do not provide any information %
(see Figure~\ref{fig:intro} for an experimental result in the bandit setting).
\begin{figure}
\begin{floatrow}
\capbtabbox{%
    \begin{tabular}{lccc}
    \toprule
    \textbf{Upper bounds}  & \quad \quad & Partial & Full \\
    \midrule
    Non-separable & \quad  & $B \sqrt{\rho KT}$ & $KB^2$ \\
    Separable & \quad & $B \sqrt{\rho T}$ & $B^2$ \\ \midrule\midrule
    \textbf{Lower bounds}  & \quad & \ & \ \\ \midrule
    Non-separable &\quad  & \ & $KB^2$ \\
    Separable &\quad  & $\sqrt{T}$ & $B^2$\ \\
    \bottomrule
    \end{tabular}
}{%
  \caption{
  \label{table:intro}
  \small{Overview of the surrogate regret bounds in the separable and non-separable case. %
  The upper bounds hold with high probability, while the lower bounds apply to any randomized prediction algorithm. All bounds are novel except for the lower bound in the full information separable case \citep[Theorem 11]{beygelzimer2019bandit}}.
  }%
}
\hspace{2mm}
\ffigbox{%
  \includegraphics[width = 0.48\textwidth]{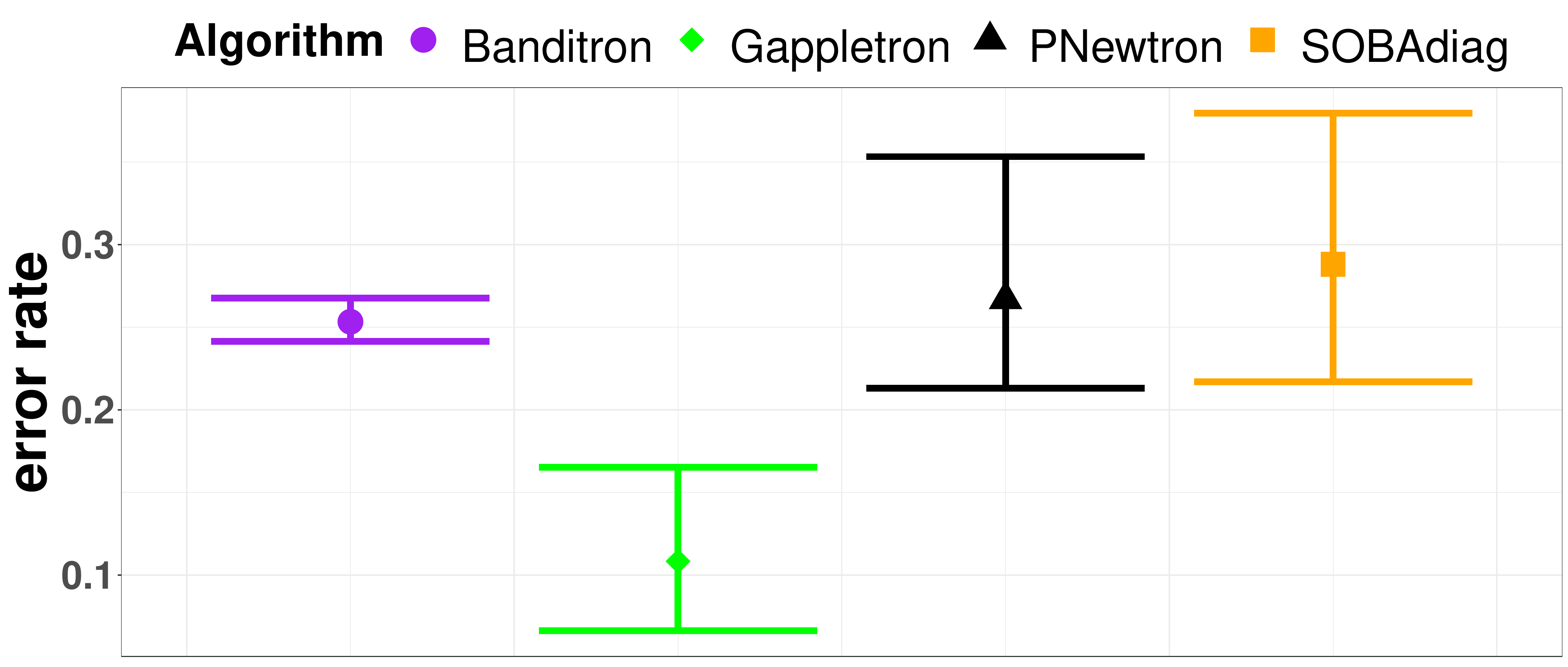}%
}{%
  \caption{
  \label{fig:intro}
  \small{Error rate in non-separable synthetic bandit experiments showcasing \textproc{Gappletron} against known baselines. The points are the means and the whiskers are minimum and maximum error rate over ten repetitions (details in Section \ref{sec:experiments}).
  }}%
}
\end{floatrow}
\end{figure}
\paragraph{Additional related work.}
The full information and bandit versions of the online multiclass classification setting have been extensively studied. Here we provide the most relevant references and defer the reader to \citet{vanderHoeven2020exploiting}  
for a more extensive literature review. Algorithms for the full information setting include: the \textproc{Perceptron}, its multiclass versions \citep{rosenblatt1958perceptron, crammer2003ultraconservative, fink2006online} and many variations thereof, second-order algorithms such as \textproc{Arow} \citep{crammer2009adaptive} and the second-order \textproc{Perceptron} \citep{cesa2005second}, and various algorithms for online logistic regression --- see \citet{foster2018logistic} and references therein. In the bandit setting, we mention the algorithms  \textproc{Banditron} \citep{kakade2008efficient}, \textproc{Newtron} \citep{hazan2011newtron},
\textproc{SOBA} \citep{beygelzimer2017efficient}, and \textproc{Obama} \citep{foster2018logistic}.

Online learning with feedback graphs has been investigated both in the adversarial and stochastic regimes. In the adversarial setting, variants where the graph changes over time and is partially known or stochastic have been studied by \citet{cohen2016online,kocak2016online}. Regret bounds that scale with the loss of the best action have been obtained by \citet{lykouris2018small}. Other variants include sleeping experts \citep{cortes2019online}, switching experts \citep{arora2019bandits}, and adaptive adversaries \citep{feng2018online}. Some works use feedback graphs to bound the regret in auctions \citep{cesa2017algorithmic,feng2018learning,han2020learning}. In the stochastic setting, regret bounds for Thompson sampling and UCB have been analyzed by \citet{tossou2017thompson,liu2018analysis,lykouris2020feedback}. Finally, feedbacks graphs can also be viewed as a special case of the partial monitoring framework for sequential decisions, see  \citep{lattimore2018bandit} for an introduction to the area.

\citet{helmbold2000apple} introduced online filtering as ``apple tasting''. However, their analysis applies to a restricted version of online learning in which instances $x_t$ belong to a finite domain, and the labels $y_t$ are such that $y_t = f(x_t)$ for all $t$ and for some fixed $f$ in a known class of functions. Practical applications of online learning to spam filtering have been investigated by \citet{cesa2003margin,sculley2008advances}. 

\paragraph{Notation.} Let $\1$ and $\0$ denote, respectively, the all-one and all-zero vectors, and let $\e_k$ be the basis vector in direction $k$. Let $[K] = \{1, \ldots, K\}$ and let $\reals_+$ be the non-negative real numbers. We use $\langle \g, \w \rangle$ to denote the inner product between vectors $\g,\w \in \reals^d$. The rows of matrix $\W \in \reals^{K \times d}$ are denoted by $\W^1, \ldots, \W^K$.
Whenever possible, we use the same symbol $\W$ to denote both a $K\times d$ matrix and a column vector $\myvec(\W) = (\W^1, \ldots, \W^K)$ in $\reals^{Kd}$. We use $\|\x\|_2$ to denote the Euclidean norm of a vector $\x$ and $\|\x\|$ to denote an arbitrary norm. The Kronecker product between matrices $\W$ and $\U$ is denoted by $\W \otimes \U$. We assume $\W\in\doma{W}$ for some convex $\doma{W} \subseteq \reals^{K \times d}$. This is equivalent to say that $\myvec(\W)$ belongs to a convex subset of $\reals^{Kd}$, for example a $p$-norm ball.
As in previous works, we assume instance-label pairs $(\x_t,y_t)$ are generated by an adversary who is oblivious to the algorithm's internal randomization.
Finally, for any round $t$, $P_t[\cdot]$ and $\E_t[\cdot]$ denote the conditional probability and expectation, given the randomized predictions $y_1', y_2', \ldots, y_{t-1}'$ and the corresponding feedback.

A feedback graph is any directed graph $\Gset = (\Vset, \Eset)$, with edges $\Eset$ and nodes $\Vset$, such that for any $y\in\Vset$ there exists some $y'\in\Vset$ such that $(y',y) \in \Eset$, where we allow $y'=y$. In online multiclass classification, $\Vset = [K]$ and $\Eset$ specifies which predictions observe which outcomes. Let $\textnormal{out}(y') = \{y \in \Vset : (y',y) \in \Eset\}$ be the out-neighbourhood of $y'$. If the learner predicts $y_t'$ at time $t$, then the feedback received by the learner is the set of pairs $\big(y,\id[y \not = y_t]\big)$ for all $y \in \textnormal{out}(y')$. Due to the structure of the zero-one loss, if a node has $K-1$ outgoing edges, we always add the missing edge to $\Eset$ as this does not change the information available to the learner. We say that an outcome $y'$ is revealing if predicting that outcome provides the learner with full information feedback, i.e., $\textnormal{out}(y')=[K]$, and we denote the set of revealing outcomes by $\Qset$. For example, in label efficient classification, querying the true label $y_t$ corresponds to playing a revealing outcome. We say that a set of nodes $\Sset$ is a dominating set if for each $y \in \Vset$ there is a node $y' \in \Sset$ such that $y \in \textnormal{out}(y')$.
The number of nodes in a smallest dominating set is called the domination number, and we denote it by $\rho$. Note that \textsc{Gappletron} is run using the minimum dominating set $\Sset$, which is known to be hard to recover in general. However, if the algorithm is fed with any other dominating set $\Sset'$ of bigger cardinality $\rho'$, our results continue to hold with $\rho$ replaced by $\rho'$ (recall that a dominating set of size at most $(\ln\rho + 2)\rho$ can be efficiently found via a greedy approximation algorithm).  

\paragraph{Regular surrogate losses.}
Fix a convex domain $\doma{W}$. Let $\ell: \doma{W} \times \reals^d \times [K] \to \reals_+$ be any function convex on $\doma{W}$ such that, for all $\W \in \doma{W}$, $\x\in\reals^d$, and $y \in [K]\setminus\{\ystar\}$ (with $\ystar = \argmax_k \inner{\W^k}{\x}$) we have
\begin{equation}\label{eq:wrong plus right condition}
    \frac{K-1}{K}\,\ell(\W, \x, y) + \frac{1}{K} \ell(\W, \x, \ystar) \geq 1. 
\end{equation}
Then $\ell_t = \ell(\cdot, \x_t, y_t)$ is a regular surrogate loss if 
\begin{equation}\label{eq:gradient condition}
    \|\nabla \ell_t(\W)\|^2 \leq 2 L\,\ell_t(\W) \qquad \W\in\doma{W}
\end{equation}
for some norm $\|\cdot\|$. When $\|\cdot\|$ is the Euclidean norm, the condition on the gradient is satisfied by all $L$-smooth surrogate loss functions (see, for example, \citep[Lemma 4]{zhou2018fenchel}). 

%
%

Examples of regular surrogate losses are the smooth hinge loss \citep{rennie2005loss} and the logistic loss with base $K$, defined by $\ell_t(\W_t) = -\log_K q(\W_t, \x_t, y_t)$, where $q$ is the softmax function.  
Even though the hinge loss is not a regular surrogate loss, in Appendix \ref{app:hinge loss} we show that a particular version of the hinge loss satisfies all the relevant properties of regular surrogate losses. Also, note that in the feedback graph setting, this particular version of the hinge loss we use is random whenever the learner's predictions are randomized.

\section{Gappletron}\label{sec:gappletron}

\begin{algorithm}[h]
\caption{\textproc{Gappletron}}\label{alg:gappletron}
\begin{algorithmic}[1]
\Require Set of revealing actions $\Qset \subseteq [K]$, minimum dominating set $\Sset$, OCO algorithm $\mathcal{A}$ with domain $\doma{W} \subseteq \reals^d$, $\gamma \geq 0$, and gap map $a:\reals^{K \times d} \times \reals^d \rightarrow [0, 1]$
\For{$t = 1 \ldots T$}
\State Obtain $\x_t$ 
\State Let $\ystart = \argmax_{k} \inner{\W_t^k}{\x_t}$ \Comment{max-margin prediction}
\If{$\ystart \in \Qset$}
\State Set $\gamma_t = 0$ 
\Else
\State Set $\gamma_t = \min\left\{\half, \gamma\Big/ \sqrt{\big|\{s \leq t: y^\star_s \not \in \Qset\}\big|}\right\}$ \Comment{exploration rate}
\EndIf
\State Set $\zeta_t = \id[\gamma_t \leq a(\W_t, \x_t)]$ 
\State $\p_t' = \big(1 - \zeta_t a(\W_t, \x_t) - (1 - \zeta_t) \gamma_t\big) \e_{\ystart} + \zeta_t a(\W_t, \x_t) \frac{1}{K}\1 + (1 - \zeta_t) \frac{\gamma_t}{\rho}\1_{S}$ \label{line:sampleprob}
\State Predict with label $y_t' \sim \p_t'$
\State Compute ${\dt v_t = \frac{\id[y_t \in \textnormal{out}(y_t')]}{P_t(y_t \in \textnormal{out}(y_t'))} }$ \Comment{$y_t$ is observed only when $y_t \in \textnormal{out}(y_t')$}
\State Set $\lhat_t(\W_t) = v_t \ell_t(\W_t)$ \Comment{compute loss estimates} \label{line:lhat}
\State Send $\lhat_t$ to $\mathcal{A}$ and get $\W_{t+1}$ in return 
\EndFor
\end{algorithmic}
\end{algorithm}

In this section we introduce \textproc{Gappletron}, whose pseudocode is presented in Algorithm~\ref{alg:gappletron}. As input, the algorithm takes information about the graph $\Gset$ in the form of a minimum dominating set $\Sset$ and a (possibly empty) set of revealing actions $\Qset$. \textproc{Gappletron} maintains a parameter $\W_t \in \doma{W} \subseteq \reals^{d \times K}$ and uses some full information Online Convex Optimization (OCO) algorithm $\Aset$ to update the vector form of $\W_t$. Our results hold whenever $\Aset$ satisfies the condition that $\sumT \big(\lhat_t(\W_t) - \lhat_t(\U)\big)$ be at most of order $h(\U)\sqrt{\sumT \|\nabla \lhat_t(\W_t)\|^2}$, where $\lhat_t$ are the estimated losses computed at line~\ref{line:lhat} of Algorithm~\ref{alg:gappletron} and
$h : \doma{W} \to \reals_+$ is any upper bound on the norm of $\U\in\doma{W}$. Since practically any OCO algorithm can be tuned to have such a guarantee --- see \citep{orabona2018scale} --- this is a mild requirement. Whereas \textproc{Gaptron} %
is only able to use Online Gradient Descent (OGD) with a fixed learning rate,  \textproc{Gappletron} allows for more flexibility, which in turn may lead to different guarantees on the surrogate regret.
For example, if the learner runs an OCO algorithm with a good dynamic regret bound \citep{zinkevich2003}, then \textproc{Gappletron} enjoys a good dynamic surrogate regret bound. Furthermore, the guarantee of $\Aset$ allows us to derive stronger results in the separable setting while maintaining a similar guarantee as \textproc{Gaptron} %
in the non-separable setting, which is not possible when using OGD with a fixed learning rate. Additional inputs to  \textproc{Gappletron} are $\gamma > 0$, which is used to control the exploration rate of the algorithm in the partial information setting, and the gap map $a$, whose role we explain below.

The predictions of Algorithm~\ref{alg:gappletron} are sampled from $\p_t'$ defined in line~\ref{line:sampleprob},
where $\e_{\ystart}$ is the basis vector in the direction of the margin-based linear prediction 
$\ystart = \argmax_{k} \inner{\W_t^k}{\x_t}$. The gap map $a:\reals^{K \times d} \times \reals^d \to [0, 1]$ controls the mixture between $\e_{\ystart}$ and the uniform exploration term $\frac{1}{K}\1$. For brevity, we sometimes write $a_t$ instead of $a(\W_t, \x_t)$.  
The single most important property of \textproc{Gappletron} is presented in the following Lemma.
\begin{lemma}
\label{lem: upper bound zo loss}
Fix any feedback graph $\Gset$ and suppose that, for all $t$, $\ell_t$ is a regular surrogate loss with respect to $\ell$. Then  \textproc{Gappletron}, run on $\Gset$ with %
$a$ such that $a(\W_t, \x_t) = \ell(\W_t, \x_t, \ystart)$, satisfies
\begin{align*}
    \zopt \leq \frac{K-1}{K} \ell_t(\W_t) + \gamma_t.
\end{align*}
\end{lemma}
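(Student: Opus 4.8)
The plan is to expand the left-hand side $\zopt$ using the explicit form of $\p_t'$ given in line~\ref{line:sampleprob}, and then to split into two cases according to whether $y_t=\ystart$ or $y_t\neq\ystart$. Throughout I write $a_t:=a(\W_t,\x_t)=\ell(\W_t,\x_t,\ystart)$ and recall that $\zeta_t=\id[\gamma_t\le a_t]\in\{0,1\}$.

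In the case $y_t=\ystart$ I would read off the $\ystart$-coordinate of $\p_t'$, namely $p_t'(\ystart)=1-\zeta_t a_t-(1-\zeta_t)\gamma_t+\tfrac{\zeta_t a_t}{K}+(1-\zeta_t)\tfrac{\gamma_t}{\rho}\id[\ystart\in\Sset]$, so that $\zopt=1-p_t'(\ystart)=\zeta_t a_t\big(1-\tfrac1K\big)+(1-\zeta_t)\gamma_t\big(1-\tfrac{1}{\rho}\id[\ystart\in\Sset]\big)\le \zeta_t\tfrac{K-1}{K}a_t+(1-\zeta_t)\gamma_t$. Since here $a_t=\ell(\W_t,\x_t,\ystart)=\ell_t(\W_t)$, and using $\ell_t\ge 0$, $\gamma_t\ge 0$ and $\zeta_t\in\{0,1\}$, this is at most $\tfrac{K-1}{K}\ell_t(\W_t)+\gamma_t$.

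In the case $y_t\neq\ystart$ I would instead use $\zopt=1-p_t'(y_t)$ with $p_t'(y_t)=\tfrac{\zeta_t a_t}{K}+(1-\zeta_t)\tfrac{\gamma_t}{\rho}\id[y_t\in\Sset]\ge\tfrac{\zeta_t a_t}{K}$, giving $\zopt\le 1-\tfrac{\zeta_t a_t}{K}$. The key input here is condition \eqref{eq:wrong plus right condition} applied at $(\W_t,\x_t)$ with $y=y_t\neq\ystart$, which reads $\tfrac{K-1}{K}\ell_t(\W_t)+\tfrac1K a_t\ge 1$, i.e.\ $1-\tfrac{a_t}{K}\le\tfrac{K-1}{K}\ell_t(\W_t)$. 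If $\zeta_t=1$ this gives $\zopt\le 1-\tfrac{a_t}{K}\le\tfrac{K-1}{K}\ell_t(\W_t)\le\tfrac{K-1}{K}\ell_t(\W_t)+\gamma_t$. If $\zeta_t=0$ then $\zopt\le 1$, and since $\zeta_t=0$ means $\gamma_t>a_t\ge\tfrac{a_t}{K}$ we get $\tfrac{K-1}{K}\ell_t(\W_t)+\gamma_t\ge\big(1-\tfrac{a_t}{K}\big)+\tfrac{a_t}{K}=1\ge\zopt$. Combining the cases proves the claim.

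This is essentially careful bookkeeping, so I do not expect a genuine obstacle; the one point worth pausing on is why the single condition \eqref{eq:wrong plus right condition} suffices across the two regimes selected by $\zeta_t$. In the ``small-gap'' regime $\zeta_t=1$ the $\tfrac1K$-uniform exploration baked into $\p_t'$ is exactly what converts $\ell_t$ into $\tfrac{K-1}{K}\ell_t$ through \eqref{eq:wrong plus right condition}, whereas in the ``large-gap'' regime $\zeta_t=0$ the exploration rate $\gamma_t$ dominates $a_t$ and the slack $\gamma_t$ on the right-hand side alone closes the gap. I would also check at the outset that $\p_t'$ is a genuine probability vector --- it is, since the coefficients sum to $1$ using $|\Sset|=\rho$ and are nonnegative using $a_t\in[0,1]$ and $\gamma_t\le\tfrac12$ --- although this is not needed for the inequality itself.
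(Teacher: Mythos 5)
Your proof is correct and follows essentially the same path as the paper's: split on whether $\ystart = y_t$, substitute $a_t = \ell_t(\W_t)$ in the first case, and invoke condition~\eqref{eq:wrong plus right condition} in the second. The only cosmetic difference is that the paper compresses the bookkeeping into the single intermediate bound $\zopt \leq (1-a_t)\id[\ystart\neq y_t] + a_t\tfrac{K-1}{K} + \gamma_t$ (justified somewhat tersely), whereas you expand $1-p_t'(y_t)$ directly and handle the $\zeta_t\in\{0,1\}$ subcases explicitly, which makes the role of the threshold $\zeta_t=\id[\gamma_t\le a_t]$ (and in particular the fact that $\zeta_t=0$ forces $\gamma_t>a_t$) more transparent.
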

\begin{proof}
First, observe that $\zopt \leq (1 - a_t)\id[y_t^\star \not = y_t] + a_t \frac{K-1}{K} + \gamma_t$, 
since $\zeta$, $(1 - \zeta)$, and the cost of exploration are at most $1$.
To conclude the proof we claim that the first two terms in the right-hand side are upper bounded by $\frac{K-1}{K} \ell_t(\W_t)$. We show that by considering two cases. In the first case $\ystart = y_t$ and the inequality simply follows by substituting
$a_t = \ell\big(\W_t, \x_t, \ystart\big) = \ell_t(\W_t)$.
In the second case $\ystart \not = y_t$ and we have that
\begin{equation*}
\begin{split}
    &(1 - a_t)\id[\ystart \not = y_t] + a_t\frac{K-1}{K} = 1 - \frac{1}K\ell\big(\W_t, \x_t, \ystart\big) \\
    & = 1 - \frac{1}{K} \ell\big(\W_t, \x_t, \ystart\big) - \frac{K-1}{K}\ell\big(\W_t, \x_t, y_t\big) + \frac{K-1}{K}\ell_t(\W_t)  \leq \frac{K-1}{K}\ell_t(\W_t),
\end{split}
\end{equation*}
where the inequality is due to equation \eqref{eq:wrong plus right condition} in the definition of regular surrogate losses.
\end{proof}
Although the \textproc{Gaptron} algorithm uses similar predictions, it is not clear how to choose $a$ such that a property similar to the one described in Lemma \ref{lem: upper bound zo loss} holds. Rather, \citet{vanderHoeven2020exploiting} derives a different gap map for the hinge loss, the smooth hinge loss, and the logistic loss, and analyses the surrogate regret separately for each loss. With Lemma \ref{lem: upper bound zo loss} in hand, we simplify the analysis and --- at the same time --- also generalize the results of \citet{vanderHoeven2020exploiting} to other surrogate losses. Furthermore, Lemma \ref{lem: upper bound zo loss} also allows us derive surrogate regret bounds that hold with high probability.

What Lemma \ref{lem: upper bound zo loss} states is that with regular surrogate losses and $a(\W_t, \x_t) = \ell\big(\W_t, \x_t, \ystart\big)$ the expected zero-one loss of  \textproc{Gappletron} can be upper bounded by $\frac{K-1}{K} \ell_t(\W_t)$ plus the cost of exploration. While at first this may seem of little interest, note that we want to bound the zero-one loss in terms of $\ell_t$ rather than $\frac{K-1}{K} \ell_t$. Compared to standard algorithms, this gains us a $-\frac{1}{K}\ell_t(\W_t)$ term in \emph{each} round, which we can use to derive our results. To see how,
observe that  \textproc{Gappletron} uses an OCO algorithm $\Aset$ to update $\myvec(\W_t)$ on each round. Suppose that, for some $h:\doma{W} \to \reals$ and $\U \in \doma{W}$, Algorithm $\Aset$ satisfies 
\begin{align}\label{eq:algA requirement}
    \sumT\left(\lhat_t(\W_t)-\lhat_t(\U)\right) \leq h(\U)\sqrt{\sumT \|\ghat_t\|^2},
\end{align}
where $\ghat_t = v_t \nabla \ell_t(\W_t)$. For simplicity, now assume we are in the full information setting (e.g., $v_t = 1$ for all $t$). Since $\ell_t$ is a regular surrogate loss, we can use
$
    \|\nabla \ell_t(\W)\|^2 \leq 2 L\,\ell_t(\W)
$
and $\sqrt{ab} = \half \inf_{\eta > 0} \left\{{a}/{\eta} + \eta b\right\}$ to show that 
\begin{align*}
    h(\U)\sqrt{\sumT \|\ghat_t\|^2} - \sumT\frac{1}{K}\ell_t(\W_t) \leq  h(\U)\sqrt{\sumT 2L \ell_t(\W_t)} - \sumT\frac{1}{K}\ell_t(\W_t) \leq \frac{KLh(\U)^2}{2}.
\end{align*}
This means that in the full information setting
the surrogate regret of \textproc{Gappletron}
is independent of the number of rounds. In the partial information setting some additional steps are required, but the idea remains essentially the same.
We formalize the aforementioned ideas in the following Lemma, whose proof is deferred to Appendix \ref{app:gappletron}.
\begin{restatable}{relemma}{lemsurrogategap}
\label{lem: surrogate gap}
Fix any feedback graph $\Gset$ and suppose that, for all $t$, $\ell_t$ is a regular surrogate loss with respect to $\ell$. If $\doma{A}$ satisfies equation~\eqref{eq:algA requirement} then, for any realization of the randomized predictions $y_1',\ldots,y_T'$,  \textproc{Gappletron}, run on $\Gset$ with gap map $a$ such that $a(\W_t, \x_t) = \ell(\W_t, \x_t, \ystart)$, satisfies
\begin{align*}
    \sumT &\zopt
\le
    \sumT \lhatt(\U) + \sumT \gamma_t
\\&
    + \inf_{\eta > 0}  \bigg\{ \frac{h(\U)^2}{2\eta} + \sumT \left(\frac{K-1}{K}\ell_t(\W_t) - \vt \ell_t(\W_t) + \eta \vt^2 L \ell_t(\W_t) \right)\bigg\} \qquad \forall\,\U \in \doma{W}~.
\end{align*}
\end{restatable}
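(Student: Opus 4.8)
The plan is to combine Lemma~\ref{lem: upper bound zo loss} with the OCO guarantee \eqref{eq:algA requirement}, exactly following the sketch that precedes the statement but carrying the $v_t$ factors through the partial-information case. First I would apply Lemma~\ref{lem: upper bound zo loss} and sum over $t$: since $a(\W_t,\x_t)=\ell(\W_t,\x_t,\ystart)$ is the chosen gap map, the lemma gives $\zopt \le \tfrac{K-1}{K}\ell_t(\W_t) + \gamma_t$ for every $t$, hence
\[
\sumT \zopt \;\le\; \sumT \tfrac{K-1}{K}\ell_t(\W_t) + \sumT \gamma_t .
\]
Next I would insert the OCO regret bound. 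By \eqref{eq:algA requirement}, $\sumT \lhat_t(\W_t) \le \sumT \lhat_t(\U) + h(\U)\sqrt{\sumT\|\ghat_t\|^2}$, and since $\lhat_t(\W_t)=v_t\ell_t(\W_t)$, adding and subtracting $\sumT v_t\ell_t(\W_t)$ to the right-hand side of the displayed inequality yields
\[
\sumT \zopt \;\le\; \sumT \lhat_t(\U) + \sumT \gamma_t + h(\U)\sqrt{\sumT\|\ghat_t\|^2} + \sumT\Big(\tfrac{K-1}{K}\ell_t(\W_t) - v_t\ell_t(\W_t)\Big).
\]

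It then remains to turn the square-root term into the $\inf_{\eta>0}$ expression. Here I would use the regular-surrogate gradient condition \eqref{eq:gradient condition}, $\|\nabla\ell_t(\W_t)\|^2 \le 2L\,\ell_t(\W_t)$, together with $\ghat_t = v_t\nabla\ell_t(\W_t)$, to get $\|\ghat_t\|^2 = v_t^2\|\nabla\ell_t(\W_t)\|^2 \le 2L v_t^2 \ell_t(\W_t)$, so $\sumT\|\ghat_t\|^2 \le 2L\sumT v_t^2\ell_t(\W_t)$ and hence $h(\U)\sqrt{\sumT\|\ghat_t\|^2} \le h(\U)\sqrt{2L\sumT v_t^2\ell_t(\W_t)}$. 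Applying the AM--GM identity $\sqrt{ab} = \tfrac12\inf_{\eta>0}\{a/\eta + \eta b\}$ with $a = h(\U)^2$ and $b = 2L\sumT v_t^2\ell_t(\W_t)$ gives
\[
h(\U)\sqrt{\sumT\|\ghat_t\|^2} \;\le\; \inf_{\eta>0}\Big\{ \tfrac{h(\U)^2}{2\eta} + \eta L \sumT v_t^2 \ell_t(\W_t)\Big\}.
\]
Substituting this into the previous display and absorbing the per-round terms $\tfrac{K-1}{K}\ell_t(\W_t) - v_t\ell_t(\W_t) + \eta v_t^2 L\ell_t(\W_t)$ under the same infimum over $\eta$ produces exactly the claimed bound, valid for every $\U\in\doma{W}$.

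I do not expect a serious obstacle; this is essentially bookkeeping once Lemma~\ref{lem: upper bound zo loss} is available. The one point that needs a little care is that \eqref{eq:algA requirement} is assumed to hold for a fixed $\U$, and the final statement quantifies over all $\U\in\doma{W}$ — but this is fine because $\Aset$'s guarantee is required to hold for the particular $h$ that upper-bounds the norm of every competitor, so the whole derivation goes through pointwise in $\U$. A second small subtlety is that in the partial-information case $\ell_t$ (and hence $\ghat_t$, $v_t$) may be random, but the statement is explicitly conditioned on a realization of $y_1',\ldots,y_T'$, so everything in sight is deterministic given that realization and no expectation is taken at this stage. The randomization is handled later, when Lemma~\ref{lem: surrogate gap} is combined with a bound on $\E_t[v_t]$ and $\E_t[v_t^2]$.
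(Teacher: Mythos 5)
Your proof is correct and uses the same ingredients as the paper's: Lemma~\ref{lem: upper bound zo loss}, the OCO guarantee~\eqref{eq:algA requirement}, the gradient condition~\eqref{eq:gradient condition}, and the $\sqrt{ab}=\inf_{\eta>0}\{a/(2\eta)+\eta b/2\}$ identity. The only cosmetic difference is the order of operations (you apply Lemma~\ref{lem: upper bound zo loss} first and push the per-round terms under the infimum at the end, whereas the paper adds and subtracts $\lhat_t(\W_t)$ first and applies Lemma~\ref{lem: upper bound zo loss} inside the infimum), which yields the identical bound.
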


\section{Bounds that Hold in Expectation}\label{sec:expectation}
In this section we present bounds on the surrogate regret that hold in expectation. For brevity we use $\Mset_T = \sumT \zot$. 
We now state a simplified version of Theorem~\ref{th:expectation bound}, whose full statement and proof can be found in Appendix \ref{app:expectation}.
\begin{theorem}\label{th: informal expectation bound}
Let $\Gset$ be any feedback graph with dominating number $\rho$ and revealing action set $\Qset$. Suppose that, for all $t$, $\ell_t$ is a regular surrogate loss with respect to $\ell$. If $\doma{A}$ satisfies equation~\eqref{eq:algA requirement} then  \textproc{Gappletron}, run on $\Gset$ and $\doma{A}$ with gap map $a$ such that $a(\W_t, \x_t) = \ell(\W_t, \x_t, \ystart)$, satisfies
\begin{align*}
    & \Eb{\regret_T} = O\left(\Eb{\max\bigg\{\frac{K^2 L h(\U)^2}{\max\{1, |\Qset|\}},  h(\U)\sqrt{\rho K L\revt}}\bigg\} \right) \qquad \forall\,\U \in \doma{W}~.
\end{align*}
Furthermore, for all $\U \in \doma{W}$ such that $\sumT \ell_t(\U) = 0$,  \textproc{Gappletron} satisfies:
\begin{align*}
    & \Eb{\Mset_T} =  O\left( \Eb{\max\left\{h(\U)\sqrt{\rho L\revt}, \frac{K L h(\U)^2}{\max\{1, |\Qset|\}}\right\}} - \frac{1}{K}\Eb{\sumT \ell_t(\W_t)}\right).
\end{align*}
\end{theorem}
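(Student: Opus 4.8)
The plan is to start from Lemma~\ref{lem: surrogate gap}, which already gives, for every realization of $y_1',\dots,y_T'$ and every $\U\in\doma{W}$, the bound
\[
\sumT\zopt \le \sumT\lhatt(\U) + \sumT\gamma_t + \inf_{\eta>0}\Big\{\frac{h(\U)^2}{2\eta} + \sumT\big(\tfrac{K-1}{K}-\vt+\eta\vt^2 L\big)\ell_t(\W_t)\Big\}.
\]
First I would take expectations. By the definition of $\vt$ at line~\ref{line:lhat}, $\E_t[\vt]=1$ and $\E_t[\lhatt(\U)]=\ell_t(\U)$, which handles the comparator term; moreover $\vt\le 1/P_t(y_t\in\mathrm{out}(y_t'))$, and the exploration mass put on the dominating set $\Sset$ guarantees $P_t(y_t\in\mathrm{out}(y_t'))\ge \gamma_t/\rho$ whenever $\ystart\notin\Qset$ (and $=1$ when $\ystart\in\Qset$), so $\E_t[\vt^2]\le \rho/\gamma_t$ on the exploring rounds and $\E_t[\vt^2]=1$ on the revealing rounds. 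Also $\E_t[\zopt]=\P_t[y_t'\ne y_t]$, and a standard argument (Lemma~\ref{lem: upper bound zo loss} is the sharper version, but here we just need the relation between $\zopt$ and $\zot$ in expectation) gives $\E[\Mset_T]=\E[\sumT\zopt]$. So after taking expectations the inequality becomes, for a fixed $\eta$,
\[
\E[\Mset_T] \le \E\Big[\sumT\ell_t(\U)\Big] + \E\Big[\sumT\gamma_t\Big] + \frac{h(\U)^2}{2\eta} + \E\Big[\sumT\Big(\tfrac{K-1}{K}-1+\eta L\,\E_t[\vt^2]\Big)\ell_t(\W_t)\Big],
\]
where I have used convexity/Jensen only in the benign direction and $h(\U)$ is deterministic up to the (trivial) randomness in the norm bound; if $\U$ is random one keeps the expectation outside.

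Next I would specialize to the separable comparator: $\sumT\ell_t(\U)=0$, so the comparator term vanishes and, since $\ell_t\ge 0$, every $\ell_t(\W_t)$ term is what remains together with $-\tfrac1K\sum_t\ell_t(\W_t)$. Split the round-dependent sum according to whether $\ystart\in\Qset$. On revealing rounds $\gamma_t=0$ and $\E_t[\vt^2]=1$, so the bracket is $(-\tfrac1K+\eta L)\ell_t(\W_t)$, which I will make non-positive by forcing $\eta\le 1/(KL)$ — this is exactly where the $-\tfrac1K\sum_t\ell_t(\W_t)$ surplus from the regularity condition is spent. On exploring rounds ($\ystart\notin\Qset$), $\E_t[\vt^2]\le\rho/\gamma_t$ and $\gamma_t=\min\{1/2,\gamma/\sqrt{N_t}\}$ with $N_t=|\{s\le t:\ystart_s\notin\Qset\}|$; the contribution of the $\eta L\vt^2\ell_t(\W_t)$ part is then at most $\eta L(\rho/\gamma_t)\ell_t(\W_t)$, but here I would instead retain the $-\tfrac1K\ell_t(\W_t)$ and, using $\gamma_t^2\cdot(\text{something})$, note that $\eta L\rho/\gamma_t-\tfrac1K\le 0$ unless $\gamma_t$ is small, in which case we are in the "$\gamma/\sqrt{N_t}$" regime and the residual is bounded by $\eta L\rho\sqrt{N_t}/\gamma$ summed over the $n:=\revt$ exploring rounds, which telescopes to $O(\eta L\rho n^{1/2}/\gamma)\cdot$... — more cleanly, the standard trick is to pick $\gamma$ so that the per-round exploration cost $\gamma_t$ and the variance cost balance, giving $\sumT\gamma_t=O(\gamma\sqrt n)$ and the variance term $O(\eta L\rho\sqrt n/\gamma)$, whose sum over the two parameters $\eta$ (capped at $1/(KL)$) and $\gamma$ is optimized to $O(h(\U)\sqrt{\rho L n})$ when the $\eta$ cap is not binding and to $O(KL h(\U)^2/\max\{1,|\Qset|\})$ when it is — the two regimes producing exactly the stated $\max\{\,\cdot\,,\cdot\,\}$.

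Concretely the two terms arise as follows: choosing $\eta=\min\{1/(KL),\ h(\U)/\sqrt{2L\rho n}\}$ and $\gamma\asymp h(\U)\sqrt{\rho}/\sqrt{L}$ (or whatever the precise tuning in Appendix~\ref{app:expectation} dictates), the $h(\U)^2/(2\eta)$ term becomes $\max\{\tfrac12 KL h(\U)^2,\ h(\U)\sqrt{L\rho n/2}\}$ and the $\sum\gamma_t$ plus residual-variance terms are of the same order, while the $\ell_t(\W_t)$ terms on non-exploring rounds are absorbed non-positively and on exploring rounds leave the explicit $-\tfrac1K\E[\sum_t\ell_t(\W_t)]$ that appears in the statement (the part of it "used up" to cancel the $\eta L\vt^2$ cost on exploring rounds being itself $O(h(\U)\sqrt{\rho L n})$, hence reabsorbed into the $\max$). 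Finally replacing $n=\revt$ and noting $|\{t:\ystart_t\notin\Qset\}|\le\revt$ (since on a round with $\ystart_t\notin\Qset$ one can charge to the event $y_t^\star\notin\Qset$, with $\ystart_t=y_t^\star$ whenever no mistake-inducing tie, and otherwise the round is already a mistake) yields the claimed
\[
\E[\Mset_T]=O\Big(\E\big[\max\{h(\U)\sqrt{\rho L\,\revt},\ \tfrac{KL h(\U)^2}{\max\{1,|\Qset|\}}\}\big]\Big)-\tfrac1K\E\Big[\sumT\ell_t(\W_t)\Big].
\]

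The main obstacle I expect is the bookkeeping on the exploring rounds: controlling $\E_t[\vt^2]\le\rho/\gamma_t$ with the \emph{time-varying} $\gamma_t=\min\{1/2,\gamma/\sqrt{N_t}\}$, correctly summing $\sum_t \gamma_t$ and $\sum_t (\rho/\gamma_t)\mathbf{1}[\ystart_t\notin\Qset]\asymp (\rho/\gamma)\sqrt n$ over exactly the $\revt$ rounds that count, and then juggling the two free parameters $\eta$ and $\gamma$ so that the $\eta$-cap $1/(KL)$ produces the first branch of the max and the uncapped choice the second — all while keeping the $-\tfrac1K\sum_t\ell_t(\W_t)$ term genuinely available (not double-counted) to kill the $\eta L\vt^2 L\ell_t(\W_t)$ terms. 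The $|\Qset|$ in the denominator is the subtlety that makes the bound uniform: when $|\Qset|\ge 1$ the revealing rounds are "free" and only the $\le\revt$ truly-exploring rounds incur variance, so the constant term improves by that factor; getting that exactly right from Lemma~\ref{lem: surrogate gap} is the delicate part.
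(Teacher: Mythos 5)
Your plan for the non-separable bound is in the right spirit and shares the paper's starting point (Lemma~\ref{lem: surrogate gap}, take expectations, balance $\eta$ and $\gamma$), but it diverges in a way that creates trouble. You keep the per-round variance bound $\E_t[v_t^2]\le\rho/\gamma_t$ inside the sum and then try to split by revealing/exploring rounds with a fixed $\eta$. This leaves you with a term of the form $\eta L\rho\sum_t \ell_t(\W_t)/\gamma_t$ on the exploring rounds, which you then claim is $O(\eta L\rho\sqrt{n}/\gamma)$. That step is not justified: the $\ell_t(\W_t)$ factors do not disappear, and estimating them by a constant brings $\lmax$ into the bound (which is not there in the expectation result; it only appears in the high-probability theorem). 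The paper's proof avoids this entirely by using a single (random) $\vmax=\max\{1,\max_t v_t\}$, plugging in $\eta=1/(KL\vmax)$ inside the realization-wise $\inf_\eta$, so that $\eta\vt^2 L\ell_t(\W_t)\le\frac{1}{K}\vt\ell_t(\W_t)$ and the entire variance term is cancelled by the $-\frac{1}{K}\ell_t(\W_t)$ gain; the only cost is $\E[\vmax]KLh(\U)^2/2$, which is then bounded deterministically via eq.~(\ref{eq:vmaxE}) before tuning $\gamma$. A smaller point: your claim $\E_t[v_t^2]=1$ on rounds with $\ystart\in\Qset$ is false in general, since the algorithm still explores with probability $a_t$; the correct bound is $\E_t[v_t^2]=1/P_t(y_t\in\textnormal{out}(y_t'))\le K/\max\{1,|\Qset|\}$.

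The more serious gap is the separable statement. Your plan derives it from Lemma~\ref{lem: surrogate gap} with the comparator term set to zero, keeping the same $\eta$ cap $\eta\le 1/(KL)$ (or, in the partial-information version, $\eta\lesssim 1/(KL\vmax)$). With that cap, $h(\U)^2/(2\eta)\gtrsim KLh(\U)^2\vmax$, and the optimal $\gamma$ remains of order $h(\U)\sqrt{KL\rho}$, which gives back $h(\U)\sqrt{K\rho L\revt}$ — i.e., the non-separable rate with the extra $\sqrt K$. The paper's improved separable bound $h(\U)\sqrt{\rho L\revt}$ does \emph{not} come from Lemma~\ref{lem: surrogate gap} at all: it relies on a separate application of the OCO guarantee~\eqref{eq:algA requirement} with $\lhatt(\U)=0$ to establish eq.~\eqref{eq:algAtrick sepa}, namely $\E\big[\sumT\ell_t(\W_t)\big]\le 2\E[\vmax]Lh(\U)^2$, and then applies Lemma~\ref{lem: upper bound zo loss} directly to $\E[\zopt]$. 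It is only because $\E[\sumT\ell_t(\W_t)]$ is already bounded by $O(\E[\vmax]Lh(\U)^2)$ with no $K$ factor that the smaller exploration rate $\gamma=h(\U)\sqrt{L\rho}$ is admissible; your sketch has no substitute for this step and therefore cannot remove the $\sqrt K$.
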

In the full information setting we clearly have that $\Qset = [K]$. Hence, using OGD as $\doma{A}$ with an appropriated learning rate, the second statement in Theorem \ref{th: informal expectation bound} reduces to $\E\big[\Mset_T\big] \le 4 L\|\U\|_2^2 - \sumT \frac{1}{K} \ell_t(\W_t),$ which improves the results of \textproc{Gaptron} in the separable case by at least a factor $K$. Interestingly, compared to standard bounds for the separable case, such as the \textproc{Perceptron} bound, there is a negative term which seems to further lower the cost of learning how to separate the data.
Similarly, in the partial information setting, the bound for the separable case in Theorem \ref{th: informal expectation bound} has a reduced dependency on $K$ compared to the non-separable case, obtaining similar improvements over \textproc{Gaptron} as in the full information setting. 

For the non-separable case, Theorem \ref{th: informal expectation bound} generalizes \textproc{Gaptron} in two directions. The most prominent direction is the extension is to feedback graphs,
where our analysis reveals a surprising phenomenon: 
Theorem \ref{th: informal expectation bound} in fact shows that the surrogate regret in the label efficient setting (and in any setting where $\rho < K$) is actually smaller than in the bandit setting, where $\rho=K$.
Intuitively, this is due to the fact that our algorithm only updates when $y_t$ is known. In the bandit setting, we need to explore all labels to find $y_t$, while in label efficient classification we can just play whichever action is the revealing action, and find $y_t$. This implies that exploration in label efficient classification is easier than in the bandit setting. Note that in the bandit setting, playing $y_t' \not = y_t$ also provides the learner with information. Perhaps by using this information effectively, one is able to improve our surrogate regret bounds, but as of yet it is not clear how to use knowledge of the wrong label. The second extension is that the bounds in Theorem \ref{th: informal expectation bound} hold for all regular surrogate loss functions with the same gap map defined by the surrogate loss, rather than only for a limited number of loss functions and ad-hoc gap maps as it was the case with \textproc{Gaptron}.

\section{Bounds that Hold with High Probability}\label{sec:high probability}
We 
now present bounds on the surrogate regret that hold with high probability. After proving a general surrogate regret bound, we derive a corresponding bound, with improved guarantees, for the full information setting. The bound for the partial information setting can be found in Theorem~\ref{th:high probability} in Appendix \ref{app:high prob}, which implies Theorem~\ref{th:informal high probability} below.
Let the maximum loss over all rounds be $\lmax = \max_{t, \W \in \doma{W}} \ell_t(\W)$. 
\begin{theorem}\label{th:informal high probability}
With probability at least $1 - \delta$,  \textproc{Gappletron} satisfies: 
\begin{align*}
    \regret_T = \Ob{\sqrt{\left(Lh(\U)^2 + \lmax \ln(1/\delta)\right)K\rho T}} \qquad \forall\,\U\in\doma{W}
\end{align*}
Furthermore, for all $\U \in \doma{W}$ such that $\sumT \ell_t(\U) = 0$, with probability at least $1-\delta$  \textproc{Gappletron} satisfies:
\begin{align*}
    \Mset_T =  \Ob{\sqrt{ \left(L h(\U)^2 + K \lmax \ln(1/\delta)\right) \rho T}}.
\end{align*}
\end{theorem}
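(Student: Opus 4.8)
}
The plan is to turn the per-realization inequality of Lemma~\ref{lem: surrogate gap} into a high-probability statement. Starting from Lemma~\ref{lem: surrogate gap} with a single $\eta>0$, substituting $\lhatt(\U)=v_t\ell_t(\U)$, and choosing $\eta$ small enough that $\eta L\,v_t\le\tfrac1{2K}$ whenever $v_t\neq0$ (so that $\eta L\,v_t^2\ell_t(\W_t)\le\tfrac1{2K}v_t\ell_t(\W_t)$ on every round), one gets, for every realization of $y_1',\dots,y_T'$,
\begin{align*}
  \regret_T \le{}& \sumT(\zot-\zopt) + \sumT(v_t-1)\ell_t(\U) - \Big(1-\tfrac1{2K}\Big)\sumT(v_t-1)\ell_t(\W_t) \\&{} - \tfrac1{2K}\sumT\ell_t(\W_t) + \sumT\gamma_t + \tfrac{h(\U)^2}{2\eta}.
\end{align*}
Since $\E_t[\zot]=\zopt$ and $\E_t[v_t]=1$, the first three sums, minus their conditional means, are martingale difference sums; after subtracting these the only non-stochastic leftover is $-\tfrac1{2K}\sumT\ell_t(\W_t)+\sumT\gamma_t+\tfrac{h(\U)^2}{2\eta}$, so the whole argument reduces to bounding three martingales and then optimizing the free parameters $\eta$ (subject to the constraint above) and $\gamma$.

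The deterministic facts I would use come from line~\ref{line:sampleprob}: whenever $\ystart\notin\Qset$ one has $\gamma_t>0$ and $P_t(y_t\in\textnormal{out}(y_t'))\ge\gamma_t/\rho$ (if $\zeta_t=0$, from the dominating-set mass) or $\ge a_t/K\ge\gamma_t/K$ (if $\zeta_t=1$, from the uniform mass), while if $\ystart\in\Qset$ then $\ystart$ is revealing and $P_t(y_t\in\textnormal{out}(y_t'))\ge1/K$; thus $v_t\le K/\gamma_{\min}$, where $\gamma_{\min}=\min_t\gamma_t$, which by the schedule $\gamma_t=\min\{\tfrac12,\gamma/\sqrt{N_t}\}$ (with $N_t$ the running count of rounds with $y^\star_s\notin\Qset$) satisfies $\gamma_{\min}\ge\min\{\tfrac12,\gamma/\sqrt{\revt}\}$ and $\sumT\gamma_t\le2\gamma\sqrt{\revt}$, so I can take $\eta$ of order $\gamma_{\min}/(K^2L)$ in the display. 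The martingale $\sumT(\zot-\zopt)$ is handled by a Bernstein/Freedman inequality with range $1$ and cumulative conditional variance $\le\sumT\zopt\le T$, contributing $O\big(\sqrt{T\ln(1/\delta)}\big)$, dominated by the leading term. The martingale $-(1-\tfrac1{2K})\sumT(v_t-1)\ell_t(\W_t)$ is the key easy one: its upper tail has \emph{bounded} increments $(1-v_t)\ell_t(\W_t)\le\ell_t(\W_t)\le\lmax$ and cumulative conditional variance at most $\lmax\sumT\ell_t(\W_t)/P_t(y_t\in\textnormal{out}(y_t'))$, so Freedman plus $\sqrt{ab}\le\tfrac a{2c}+\tfrac{cb}2$ lets the ``$a$''-part (a constant multiple of $\sumT\ell_t(\W_t)/P_t(y_t\in\textnormal{out}(y_t'))$, which is at most $\tfrac{K}{\gamma_{\min}}\sumT\ell_t(\W_t)$) be re-absorbed into the negative mass $-\tfrac1{2K}\sumT\ell_t(\W_t)$ by taking $c$ of order $\gamma_{\min}/K^2$, at the cost of a term of order $\tfrac{\lmax K^2}{\gamma_{\min}}\ln(1/\delta)$.

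Then I would optimize: with $\eta=\Theta(\gamma_{\min}/(K^2L))$ and $\gamma_{\min}=\Theta(\gamma/\sqrt{\revt})$ the surviving terms are $\sumT\gamma_t=O(\gamma\sqrt{\revt})$, $\tfrac{h(\U)^2}{2\eta}=O\big(h(\U)^2K^2L\sqrt{\revt}/\gamma\big)$ and $\tfrac{\lmax K^2}{\gamma_{\min}}\ln(1/\delta)=O\big(\lmax K^2\sqrt{\revt}\ln(1/\delta)/\gamma\big)$; choosing $\gamma=\Theta\big(K\sqrt{L h(\U)^2+\lmax\ln(1/\delta)}\big)$ balances them and, using $\revt\le T$, gives $\regret_T=\Ob{\sqrt{(Lh(\U)^2+\lmax\ln(1/\delta))K^2T}}$, which is the claim for $\rho=K$; the $\rho$-dependent version follows from tracking the sharper rate $\gamma_t/\rho$ on the dominating-set rounds, which governs the dominant variance contribution. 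The separable case $\sum_t\ell_t(\U)=0$ is cleaner: then $\lhatt(\U)=v_t\ell_t(\U)\equiv0$, so the martingale $\sumT(v_t-1)\ell_t(\U)$ disappears and the OCO guarantee becomes $\sumT v_t\ell_t(\W_t)\le h(\U)\sqrt{2L\sumT v_t^2\ell_t(\W_t)}\le h(\U)\sqrt{2LK\,\sumT v_t\ell_t(\W_t)/\gamma_{\min}}$, a self-bounding inequality giving $\sumT v_t\ell_t(\W_t)=O\big(LKh(\U)^2/\gamma_{\min}\big)$ with only one power of $K$; bounding $\Mset_T=\sumT\zot=\sumT(\zot-\zopt)+\sumT\zopt$ and re-running the above then yields the stated $\Ob{\sqrt{(Lh(\U)^2+K\lmax\ln(1/\delta))\rho T}}$, where (compared to the non-separable bound) only $\sqrt K$ rather than $K$ multiplies the $Lh(\U)^2$ term --- exactly the improvement coming from this bootstrap.

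The step I expect to be the main obstacle is the martingale $\sumT(v_t-1)\ell_t(\U)$ in the \emph{non}-separable case. Its increments scale like $v_t\ell_t(\U)\le\lmax K/\gamma_{\min}=\Theta(\lmax K\sqrt{\revt}/\gamma)$ and its cumulative conditional variance is $\sumT\ell_t(\U)^2/P_t(y_t\in\textnormal{out}(y_t'))$, and --- unlike the $\W_t$-martingale --- there is no negative ``$\sumT\ell_t(\U)$''-mass available to re-absorb a self-bounding correction, since such a term would change the surrogate regret by a constant multiple of $\sumT\ell_t(\U)$, which is not allowed. A naive Freedman step therefore only yields an $O(T^{3/4})$ bound here, and getting down to $O(\sqrt T)$ seems to require a sharper device: either combining this martingale with the OCO inequality before invoking $\sqrt{ab}$, or a peeling argument over the dyadic scales of $N_t$ (so that on each scale the exploration rate $\gamma_t$, and hence the increment and variance bounds, is essentially constant). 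Making this work while keeping the $\lmax\ln(1/\delta)$ and $K\rho$ factors exactly as in the statement is, I expect, the technical heart of the proof.
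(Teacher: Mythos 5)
Your framework matches the paper's: start from Lemma~\ref{lem: surrogate gap}, use a deterministic bound $v_t\le\Vmax$ inherited from the exploration schedule, control $\sumT(\zot-\zopt)$ by the Freedman--Bernstein inequality of \citet{beygelzimer2011contextual} (Lemma~\ref{lem: freedman} here), and tune $\eta$ and $\gamma$. Your treatment of the separable case is also essentially the paper's: $\lhat_t(\U)\equiv0$ gives the self-bounding inequality $\sumT\lhat_t(\W_t)\le 2\Vmax Lh(\U)^2$ (equation~\eqref{eq: hp algA sepa trick}), after which a single Freedman step on $\ell_t(\W_t)-v_t\ell_t(\W_t)$ closes the argument.

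The non-separable case has a genuine gap, and you diagnose its location correctly: the martingale $\sumT(v_t-1)\ell_t(\U)$ has no accompanying negative $\sum_t\ell_t(\U)$ mass in the regret, so a naive Freedman step plus $\sqrt{ab}\le\frac{a}{2c}+\frac{cb}{2}$ leaves a residual $\Theta\big(\frac{c}{2}\sumT\ell_t(\U)\big)$ that cannot be absorbed without changing the coefficient of the comparator loss. However, neither of the devices you float (folding the martingale into the OCO inequality before $\sqrt{ab}$, or dyadic peeling over $N_t$) is what closes it. The paper's resolution is two-fold and more elementary. First, it replaces your two separate loss-martingales with the single martingale $v_t r_t - r_t$ where $r_t=\ell_t(\U)-\frac{K-\eta'}{K}\ell_t(\W_t)$, so that the conditional variance is bounded by $2\Vmax\lmax\big(\ell_t(\U)+\ell_t(\W_t)\big)$, and a single Freedman application produces a residual $\frac{\eta'}{K}\sumT\ell_t(\U)+\frac{\eta'}{K}\sumT\ell_t(\W_t)$. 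Second --- and this is the missing idea --- it performs a case split on the sign of $\sumT\ell_t(\U)-\sumT\ell_t(\W_t)$. When $\sumT\ell_t(\U)\le\sumT\ell_t(\W_t)$, the residual $\frac{\eta'}{K}\sumT\ell_t(\U)$ is upper-bounded by $\frac{\eta'}{K}\sumT\ell_t(\W_t)$ and is then re-absorbed by the negative $-\frac{1}{K}\sumT\ell_t(\W_t)$ mass from Lemma~\ref{lem: upper bound zo loss} once $\eta'$ is set to $\frac{1}{4}$. When $\sumT\ell_t(\U)>\sumT\ell_t(\W_t)$, the $\ell_t(\U)$-martingale is never touched: Lemma~\ref{lem: upper bound zo loss} already gives $\sumT\zopt\le\frac{K-1}{K}\sumT\ell_t(\W_t)+\sumT\gamma_t<\sumT\ell_t(\U)+\sumT\gamma_t$, so the surrogate regret is controlled by $\sumT\gamma_t$ and the zero-one Freedman term alone. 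With this case split the $T^{3/4}$ obstruction you worry about simply never arises, and no peeling is needed --- the deterministic worst-case bound $\Vmax\le\max\{\rho\sqrt{T}/\gamma,\,2K/\max\{1,|\Qset|\}\}$ from equation~\eqref{eq:vmax_det} suffices for all the variance and range bounds.
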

Theorem \ref{th:informal high probability} shows that Algorithm \ref{alg:gappletron} has $O(h(\U)\sqrt{\rho KT})$ surrogate regret in the worst case, with high probability. As far as the authors are aware, this is the first high-probability surrogate regret bound for a margin-based classifier in the partial information setting. Similarly to the bounds in expectation, the worst-case surrogate regret is the largest in the bandit setting ($\rho = K$) and the smallest in label efficient classification ($\rho = 1$). Unlike the bounds in expectation, where the surrogate regret was at least a factor $\sqrt{K}$ smaller in the separable case, the improvement in Theorem \ref{th:informal high probability} is less apparent, but the surrogate regret still has a better dependence on $K$ in the separable case. In particular, all the terms with $h(\U)$ have a better dependence on $K$. 

In the full information setting the dependence on $\lmax$ can be removed. This cannot be achieved in the partial information setting, due to the necessity of estimating the surrogate loss.
If $\doma{W}$ has a bounded radius $B$ and $\ell_t$ has gradients bounded by $G$, then $\lmax \leq 1 + BG$ by convexity. The bound for the full information setting can be found in Theorem \ref{th:fullinfo hp}. In the separable case of the full information setting, the bound does not depend on $K$, which is not the case for Theorem \ref{th:informal high probability} due to the need to control the surrogate loss estimates.
\begin{restatable}{retheorem}{thfullinfohp}
\label{th:fullinfo hp}
Under the conditions of Lemma \ref{lem: surrogate gap}, with probability at least $1 - \delta$,  \textproc{Gappletron} satisfies
\begin{align*}
    \Mset_T \leq \sumT \ell_t(\U) + K L h(\U)^2 + \frac{3K+1}{2}\ln\frac{1}{\delta} \qquad \forall\,\U\in\doma{W}~.
\end{align*}
Furthermore, for all $\U \in \doma{W}$ such that $\sumT \ell_t(\U) = 0$, then with probability at least $1 - \delta$,  \textproc{Gappletron} satisfies
$
    \Mset_T \leq 4 L h(\U)^2 + \frac{11}{4}\ln\frac{1}{\delta} 
$.
\end{restatable}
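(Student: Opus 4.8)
The plan is to specialize Lemma~\ref{lem: surrogate gap} to the full-information regime and then concentrate. In full information every action is revealing, so $\Qset = [K]$ forces $\gamma_t = 0$ for all $t$, and $y_t \in \textnormal{out}(y_t')$ always, so $v_t = 1$, $\lhat_t = \ell_t$ and $\ghat_t = \nabla\ell_t(\W_t)$; in particular $\W_t$, $\ell_t(\W_t)$ and $p_t := \zopt$ are deterministic and the only randomness is in the independent draws $y_t' \sim \p_t'$. Plugging $v_t=1$, $\gamma_t = 0$ into Lemma~\ref{lem: surrogate gap} and using $\tfrac{K-1}{K}\ell_t(\W_t) - \ell_t(\W_t) + \eta L\,\ell_t(\W_t) = \big(\eta L - \tfrac1K\big)\ell_t(\W_t)$, the choice $\eta = \tfrac{1}{2KL}$ gives the deterministic inequality
\begin{equation}\label{eq:plan-det}
    \sumT p_t \;\le\; \sumT \ell_t(\U) + KL\,h(\U)^2 - \frac{1}{2K}\sumT\ell_t(\W_t)~.
\end{equation}
The negative term is kept on purpose: it will absorb the variance contribution from the concentration step.

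I would then turn~\eqref{eq:plan-det} into a high-probability bound on $\Mset_T = \sumT\zot$. Writing $Z_t := \zot \in\{0,1\}$, the $Z_t$ are independent with deterministic means $\E_t Z_t = p_t$ and $\E_t[(Z_t-p_t)^2] = p_t(1-p_t)\le p_t$, so a Bernstein-type inequality gives, with probability at least $1-\delta$, $\sumT(Z_t - p_t) \le \epsilon\sumT p_t + O\!\big(\epsilon^{-1}\ln(1/\delta)\big)$ for any $\epsilon\in(0,1]$. Bounding the variance proxy via Lemma~\ref{lem: upper bound zo loss} (again $\gamma_t=0$), $\sumT p_t \le \tfrac{K-1}{K}\sumT\ell_t(\W_t)\le\sumT\ell_t(\W_t)$, and adding the result to~\eqref{eq:plan-det} through $\Mset_T = \sumT p_t + \sumT(Z_t - p_t)$ yields
\begin{equation*}
    \Mset_T \;\le\; \sumT\ell_t(\U) + KL\,h(\U)^2 + O\!\big(\epsilon^{-1}\ln(1/\delta)\big) + \Big(\epsilon - \frac{1}{2K}\Big)\sumT\ell_t(\W_t)~.
\end{equation*}
Choosing $\epsilon = \tfrac{1}{2K}$ kills the last term and leaves overhead $KL\,h(\U)^2 + O(K\ln(1/\delta))$; carefully tracking the Bernstein constants (and splitting the $\tfrac1K$ slightly unevenly between $\eta$ and $\epsilon$) recovers exactly the stated $\tfrac{3K+1}{2}\ln\tfrac1\delta$.

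For the separable case ($\sumT\ell_t(\U)=0$) the extra ingredient is a self-bound on \textproc{Gappletron}'s own surrogate loss. In full information the guarantee~\eqref{eq:algA requirement} reads $\sumT\ell_t(\W_t)\le h(\U)\sqrt{\sumT\|\nabla\ell_t(\W_t)\|^2}$, and inserting the regular-surrogate gradient bound~\eqref{eq:gradient condition} gives $\sumT\ell_t(\W_t)\le h(\U)\sqrt{2L\sumT\ell_t(\W_t)}$, hence $\sumT\ell_t(\W_t)\le 2L\,h(\U)^2$. This is the step that removes the factor $K$: combined with Lemma~\ref{lem: upper bound zo loss} it gives $\sumT p_t \le 2L\,h(\U)^2$, and then the same concentration bound with $\epsilon$ a fixed constant yields $\Mset_T = \sumT p_t + \sumT(Z_t - p_t) \le \big(1+O(1)\big)\,2L\,h(\U)^2 + O(\ln(1/\delta))$, which after constant-tracking fits inside $4L\,h(\U)^2 + \tfrac{11}{4}\ln\tfrac1\delta$.

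The only genuinely delicate point is the self-bound $\sumT\ell_t(\W_t)\le 2L\,h(\U)^2$ in the separable case — it is exactly what makes the separable full-information surrogate regret a constant independent of both $T$ and $K$, matching the \textproc{Perceptron} bound. A related point needing care everywhere is that the concentration step must retain its variance term (a crude $O(\sqrt T)$ range bound would destroy $T$-independence), which is why the negative $-\tfrac{1}{2K}\sumT\ell_t(\W_t)$ from Lemma~\ref{lem: surrogate gap}, respectively the self-bound, is carried along to absorb it; everything else is routine constant-chasing.
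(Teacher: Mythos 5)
Your proposal is correct and mirrors the paper's own proof step for step: specialize Lemma~\ref{lem: surrogate gap} with $v_t=1$, $\gamma_t=0$; bound $\E_t[z_t^2]\le\frac{K-1}{K}\ell_t(\W_t)$ via Lemma~\ref{lem: upper bound zo loss}; apply the Freedman-type concentration of Lemma~\ref{lem: freedman}; and split the $\tfrac1K$ surplus between the OCO-regret infimum and the concentration infimum (the paper uses $\tfrac{1}{2K}$ each, with $\eta=\tfrac{1}{2KL}$ and $\eta'=\tfrac{1}{K-1}$, yielding exactly $\tfrac{3K+1}{2}\ln\tfrac1\delta$), while the separable case indeed hinges on the self-bound $\sumT\ell_t(\W_t)\le 2Lh(\U)^2$, which is the paper's equation~\eqref{eq: hp algA sepa trick} with $\Vmax=1$.
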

We provide a proof sketch of the full information versions of Theorem~\ref{th:fullinfo hp}.
The proof for the partial information setting is essentially the same, with some extra steps to control the estimates of the surrogate losses. Let $z_t = \left(\zot - \zopt\right)$. The proof
relies on \citep[Theorem 1]{beygelzimer2011contextual} --- see Lemma \ref{lem: freedman} in this paper, which, when translated to our setting, states that with probability at least $1-\delta$, 
\begin{equation*}
    \sumT z_t \leq \sqrt{3 \ln\frac{1}{\delta} \sumT \E_t\left[z_t^2\right]} + 2\ln\frac{1}{\delta}~.
\end{equation*}
Since the variance is bounded by the second moment,
$
    \E_t\left[z_t^2\right] \leq \E_t\left[\zot\right] \leq \frac{K-1}{K} \ell_t(\W_t)
$,
where the last inequality is due to Lemma \ref{lem: upper bound zo loss}. By using $\sqrt{ab} = \inf_{\eta > 0} \frac{a}{2\eta} + \frac{\eta}{2}b$ and applying Lemma~\ref{lem: surrogate gap}, we find that
\begin{equation*}
    \Mset_T \leq \sumT \ell_t(\U) + \inf_{\eta \in (0, 1]} \left\{\frac{h(\U) - 3\ln\delta}{2\eta} + \sumT \left(\eta \left(L + \frac{K-1}{K}\right) - \frac{1}{K}\right) \ell_t(\W_t)\right\}, 
\end{equation*}
with probability at least $1 - \delta$. After choosing an appropriate $\eta$, this gives us a $O\big(K h(\U)^2\big)$ surrogate regret bound with high probability.

\section{Lower Bounds}\label{sec:lower bounds}
Corollary \ref{cor:lower} below here shows that the bound of Theorem \ref{th: informal expectation bound} cannot be significantly improved.
\begin{corollary}
\label{cor:lower}
Let $A$ be a possibly randomized algorithm for the online multiclass classification setting with feedback graphs. Then, for any $B = \Omega(1)$, the surrogate regret of $A$ with respect to the smooth hinge loss must satisfy
\[
    \Eb{\Mset_T}
=
    \min_{\U\in\doma{W}}\sumT \ell_t(\U) + \Omega\big(KB^2 + \sqrt{T}\big)
\]
where $K$ is the number of classes, the feature vectors $\x_t$ satisfy $\norm{\x_t}_2 = \Theta(1)$ for all $t$, and $\doma{W} = \theset{\W}{\norm{\W} \le B}$.
\end{corollary}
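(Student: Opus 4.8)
The plan is to prove the lower bound as a combination of two separate constructions, one yielding the $\Omega(KB^2)$ term and one yielding the $\Omega(\sqrt T)$ term, since these are governed by different phenomena (approximation error of the reference class versus stochastic fluctuation). Because the statement is phrased as a $\min_{\U}\sum_t\ell_t(\U)+\Omega(KB^2+\sqrt T)$ bound, it suffices to exhibit, for each term, a distribution over instance--label sequences on which \emph{every} (possibly randomized) algorithm incurs expected zero-one loss at least $\min_\U\sum_t\ell_t(\U)$ plus the claimed quantity; taking the harder of the two instances (or concatenating scaled copies) then gives the max, hence the sum up to constants.

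\textbf{The $\sqrt T$ term.} First I would handle the $\Omega(\sqrt T)$ lower bound, which should hold even in the full-information separable-looking regime and even for $K=2$. Fix a single feature vector $\x$ with $\norm{\x}_2=\Theta(1)$ and draw $y_t\in\{1,2\}$ i.i.d.\ uniform. Then no algorithm can predict better than chance, so $\E[\Mset_T]\ge T/2$. On the other hand, the best fixed $\U\in\doma W$ in hindsight can exploit the empirical imbalance: the count $N_1-N_2$ of the two labels has magnitude $\Theta(\sqrt T)$ with constant probability, and a linear predictor that confidently outputs the majority label pays smooth-hinge loss close to $0$ on the $\approx T/2 + \Theta(\sqrt T)$ majority rounds and close to $\Theta(1)$ on the minority rounds, giving $\min_\U\sum_t\ell_t(\U)\le T/2-\Omega(\sqrt T)$ in expectation (one must check the smooth hinge loss is small enough when the margin $B=\Omega(1)$; this uses that $\norm{\x}_2$ and $B$ are both $\Theta(1)$ so a margin of constant size is achievable). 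Subtracting gives $\E[\Mset_T-\min_\U\sum_t\ell_t(\U)]=\Omega(\sqrt T)$. This is essentially a repackaging of the classical $\sqrt T$ experts lower bound adapted to the surrogate-regret formulation, closely following \citet[Theorem 11]{beygelzimer2019bandit} for the structure.

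\textbf{The $KB^2$ term.} For this I would use $T$ large (say $T\gg KB^2$) and a construction that forces an additive $\Omega(KB^2)$ gap between the algorithm's zero-one loss and the best achievable surrogate loss. The natural idea: present each of $K$ "hard" directions in turn, arranged so that on the rounds devoted to direction $k$ the optimal $\U$ must spend $\Theta(B^2/K)$ units of squared norm to drive the smooth-hinge loss down, and spreading a norm budget of $B^2$ across $K$ directions means the per-direction margin is small ($\Theta(B/\sqrt K)$ effective), leaving a residual surrogate-loss-minus-zero-one gap of $\Omega(B^2)$ per direction, hence $\Omega(KB^2)$ total; this matches the upper bound $KLh(\U)^2$ in Theorem~\ref{th:fullinfo hp} and the $KB^2$ entry of Table~\ref{table:intro}. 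Concretely one picks $K$ orthonormal feature vectors (or rescaled versions with $\norm{\x_t}_2=\Theta(1)$), assigns a block of rounds to each, chooses labels so that the only way to get zero surrogate loss on a block is a large margin in that block's direction, and computes $\min_\U\sum_t\ell_t(\U)$ by a direct optimization over the norm ball $\norm{\W}\le B$ — a convex problem that decouples across the $K$ orthogonal directions and whose optimum exhibits the $\Theta(B^2/K)$-per-direction tradeoff. The algorithm, meanwhile, can be forced to make $\Omega(KB^2)$ mistakes above this optimum because early in each block it has no information and the surrogate-optimal predictor is far from uniform.

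\textbf{Main obstacle.} The delicate step is the $\Omega(KB^2)$ construction: I need the smooth-hinge-loss optimization $\min_{\norm{\W}\le B}\sum_t\ell_t(\U)$ to have a clean closed-ish form exhibiting the right $B^2/K$ scaling, \emph{and} simultaneously argue the algorithm's excess zero-one loss is $\Omega(KB^2)$ rather than just $\Omega(B^2)$ — i.e., that the hardness genuinely multiplies by $K$ and is not amortized away over many rounds. This requires carefully choosing the block lengths (large enough that the surrogate optimum is well-approximated within the round budget, small enough that the algorithm cannot learn the direction "for free" from cross-block information) and exploiting the specific curvature of the smooth hinge loss near the kink. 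The $\sqrt T$ part, by contrast, is routine anti-concentration plus a margin calculation. I would present the two constructions as separate lemmas and combine them at the end by running the $\Omega(KB^2)$ instance for the first $\Theta(KB^2)$ rounds and the $\Omega(\sqrt T)$ instance thereafter, noting the surrogate losses are nonnegative and additive so the bounds add.
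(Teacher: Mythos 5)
Your decomposition into a $\sqrt{T}$ piece and a $KB^2$ piece matches the paper's plan (Theorems~\ref{th:apple lower bound} and~\ref{th:full info lower bound}), but both of your individual constructions have real problems.

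\paragraph{The $\sqrt{T}$ construction fails in full information.}
You propose to get $\Omega(\sqrt T)$ via a single feature vector and i.i.d.\ uniform labels, arguing by anti-concentration that the best fixed $\U$ has surrogate loss $\le T/2 - \Omega(\sqrt T)$. That inference is false for the smooth hinge loss. Write $N_1 = T/2 + S$, $N_2 = T/2 - S$. Since the predictor is linear in a one-dimensional feature, its margin is $+m$ on one class and $-m$ on the other, so the surrogate objective is $N_1\,h(m) + N_2\,h(-m)$ with $h(m) = \tfrac12(1-m)^2$ on $[0,1]$ and $h(-m) = \tfrac12 + m$. Optimizing over $m$ gives $m^\star \approx 4S/T$ and objective value $T/2 - \Theta(S^2/T)$, i.e.\ $T/2 - \Theta(1)$ when $S = \Theta(\sqrt T)$, \emph{not} $T/2 - \Theta(\sqrt T)$. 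The surrogate loss charges $\Theta(1)$ per confidently-wrong round, so the comparator cannot tilt toward the majority the way a zero-one comparator can, and there is no $\sqrt T$ gap to exploit. Indeed, this must fail: Theorem~\ref{th:fullinfo hp} shows that in the full-information setting the surrogate regret is $O(KLh(\U)^2)$, independent of $T$, so no full-information instance can witness $\Omega(\sqrt T)$ surrogate regret. The $\sqrt{T}$ lower bound genuinely requires the partial-feedback structure. The paper's proof uses the spam-filtering graph (one blind class, one revealing class) and a segment argument in the style of \citet{daniely2015strongly}: either the learner predicts the blind class $\ge R=\Theta(\sqrt T)$ times and is charged for that, or there is a segment of length $T/(2R)$ in which, with constant probability, it never predicts the blind class, and the adversary then moves the labels to that class in that segment undetected.

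\paragraph{The $KB^2$ construction has the wrong scaling.}
Your plan allocates $B^2/K$ squared norm to each of $K$ orthogonal directions, giving per-direction margin $\Theta(B/\sqrt K)$; but once $B \gtrsim \sqrt K$ that margin exceeds $1$ and the smooth hinge loss of the comparator is \emph{zero} on correct rounds, so the per-direction gap collapses. Moreover, with only $K$ blocks the learner is ``uninformed'' for at most $O(K)$ rounds, bounding the excess by $O(K)$, not $\Omega(KB^2)$. The paper's Theorem~\ref{th:full info lower bound} fixes both issues by using $M = (B^2-1)K^2$ orthogonal feature directions, \emph{one round each}, with labels drawn uniformly at random. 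The comparator puts weight $1/K$ per direction (total squared norm $1 + M/K^2 = B^2$), so its per-round surrogate loss is $(1-1/K)^2$ while the algorithm's expected zero-one loss is $(K-1)/K$, giving a gap of $(K-1)/K^2$ per round and $(B^2-1)(K-1)$ in total. The per-direction margin stays at $1/K$ regardless of $B$, and since each direction appears exactly once the algorithm cannot learn anything useful, eliminating the delicate block-length calibration you flag as the main obstacle.

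In short: the decomposition idea is right and your $KB^2$ sketch is in the right spirit, but the $\sqrt T$ argument cannot be carried out in full information at all, and the $KB^2$ construction needs $\Theta(B^2K^2)$ fresh directions rather than $K$ blocks.
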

Corollary \ref{cor:lower} is implied by Theorems~\ref{th:apple lower bound} and~\ref{th:full info lower bound} in Appendix \ref{app:lower bounds}. The proof of Theorem~\ref{th:apple lower bound} builds on the lower bound of \citet{daniely2015strongly} for strongly-adaptive regret. The feedback graph considered in the proof is filtering with two classes: a blind class (no outgoing edges) and a revealing class. In the proof, we show that the algorithm either explores too much, in which case the lower bound trivially holds, or the algorithm explores too little, in which case the environment can trick the algorithm into playing the wrong action by exploiting the blind class.  

\section{Experiments}\label{sec:experiments}

\begin{figure}
    \centering
    \includegraphics[width = \textwidth]{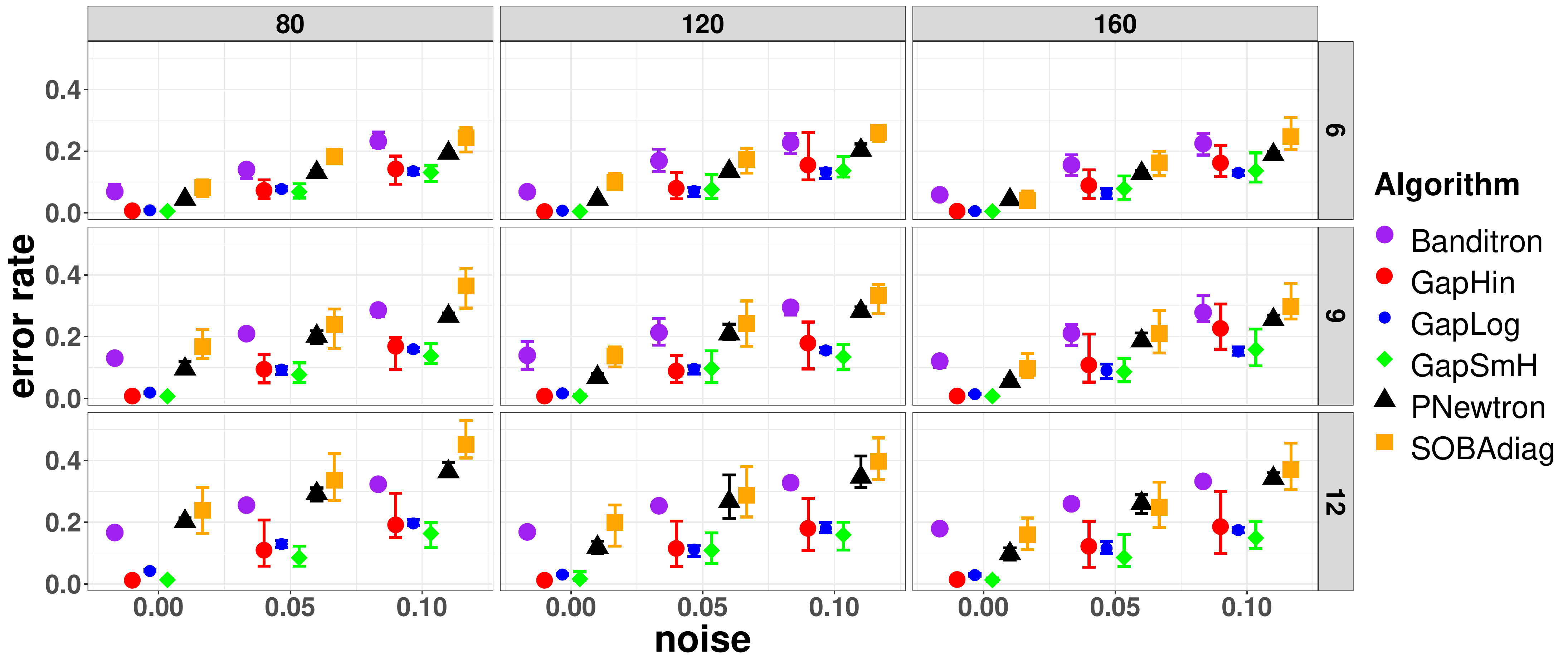}
    \caption{Results of the synthetic experiments for the bandit setting. The plot shows the best results of algorithms with parameters suggested by theory, or tuned with all parameters set to 1, except for $T$. The rows indicate different values for $K$ and the columns different values for $d$. Whiskers show the minimum and the maximum error rate over ten repetitions.}
    \label{fig:Synbestbandit}
\end{figure}
We empirically evaluated the performance of \textproc{Gappletron} on synthetic data in the bandit, multiclass filtering, and full information settings. Similarly to the SynSep and NonSynSyp datasets described in \citep{kakade2008efficient}, we generated synthetic datasets with $d \in \{80, 120, 160\}$, $K \in \{6, 9, 12\}$, and the label noise rate in $\{0, 0.05, 0.1\}$. Due to space constraints, we only report part of the experiments for the bandit setting in the main text, see Figure \ref{fig:Synbestbandit}. In the bandit setting we used worst case tuning for the algorithms with the parameters suggested by theory, or set all parameters to 1, except for $T$. Initially we only used theoretical tuning for all algorithms, but we found that two algorithms we compared with did not have satisfactory results. A more detailed description of the results, including how we generated data and tuned the algorithms, can be found in Appendix \ref{app:experiments}. 

In the bandit setting, we compared  \textproc{Gappletron} with the following baselines: PNewtron (the diagonal version of Newtron, by \citet{hazan2011newtron}), SOBAdiag (the diagonal version of SOBA, by \citet{beygelzimer2017efficient}), and the importance weighted version of Banditron\footnote{This is a version different from the one described by \citet{kakade2008efficient}, in particular, we replaced $\widetilde{U}^t$ in their Algorithm~1 with the gradient of the importance weighted hinge loss.} \citep{kakade2008efficient}. We opted to use the diagonal versions of Newtron and SOBA for computational reasons. We chose the importance weighted version of Banditron because the standard version did not produce satisfactory results. We used three surrogate losses for  \textproc{Gappletron}: the logistic loss $\ell_t(\W_t) = -\log_K q(\W_t, \x_t, y_t)$ where $q$ is the softmax, the hinge loss defined in \eqref{eq:multiclass hinge}, and the smooth hinge loss \citep{rennie2005loss}, denoted by GapLog, GapHin, and GapSmH respectively. The OCO algorithm used with all losses is Online Gradient Descent, with learning rate $\eta_t =\left(10^{-8} + \sumt \|\nabla \lhat_j(\W_t)\|_2^2\right)^{-1/2}$ and no projections. 

As shown in Figure \ref{fig:Synbestbandit}, on average all versions of \textproc{Gappletron} outperform the baselines in the bandit setting. GapHin appears to be more unstable than the other versions of \textproc{Gappletron}. We suspect this is due to the fact that GapHin explores less than its counterparts. In multiclass spam filtering (Figure \ref{fig:SynbestRA} in Appendix \ref{app:experiments}), we see that GapLog makes more mistakes than its counterparts for $K > 6$. We suspect this is due to the fact that with logistic loss, the gap map is never zero, which implies that GapLog picks an outcome uniformly at random more often than GapHin and GapSmH, while not gaining any information. Due to this behaviour GapLog makes more mistakes than necessary, which we also observe in the full information setting. In the bandit setting, the additional exploration leads to additional stability for GapLog, as indicated by the small range of performance of GapLog. In all cases, increasing the exploration rate increased the stability of \textproc{Gappletron}, which is very much in agreement with Theorem~\ref{th:high probability}.

\section{Future work}
There are several intriguing research directions left open to pursue. While our lower bound holds for general feedback graphs, it is not clear whether our bounds are tight for the bandit setting. Either providing a lower bound or an improved algorithm for the bandit setting remains thus open. Our results show that it is possible to obtain improved bounds for the separable case while maintaining satisfactory results for the non-separable case. However, as \citet{beygelzimer2019bandit} show, it is possible to obtain even better guarantees in the separable case of the bandit setting. An algorithm guaranteeing $O(K\|\U\|^2)$ mistakes in the separable case and $O(K\sqrt{T})$ surrogate regret in the non-separable case, without prior knowledge of the separability, would therefore be an interesting contribution. 

\paragraph{Acknowledgements} Van der Hoeven and Cesa-Bianchi are supported by the MIUR PRIN grant Algorithms, Games, and Digital Markets (ALGADIMAR) and by the EU Horizon 2020 ICT-48 research and innovation action under grant agreement 951847, project ELISE (European Learning and Intelligent Systems Excellence). Fusco is supported by the ERC Advanced Grant 788893 AMDROMA “Algorithmic and Mechanism Design
Research in Online Markets” and the MIUR PRIN project ALGADIMAR “Algorithms, Games, and Digital Markets".

\bibliography{myBib}



\appendix

\section{Hinge loss}\label{app:hinge loss}
The multiclass hinge loss is defined by
\begin{equation}\label{eq:multiclass hinge}
    \ell_t(\W) = 
    \begin{cases}
        \max\{1 - m_t(\W, y_t), 0\} & \text{ if } \mstart < \kappa  \\
        \max\{1 - m_t(\W, y_t), 0\} & \text{ if } \ystart \not = y_t \text{ and } \mstart \geq \kappa  \\
        0 & \text{ if } \ystart = y_t \text{ and } \mstart \geq \kappa,
    \end{cases}
\end{equation}
where $m_t(\W, y) = \inner{\W^y}{\x_t} - \max_{k \not= y} \inner{\W^k}{\x_t}$, $\mstart = \max_k m_t(\W_t, k)$, and $\kappa \in [0, 1]$. Setting $\kappa = 0$ yields the multiclass hinge loss used in common implementations of the Perceptron. An alternative version of Lemma \ref{lem: upper bound zo loss} which holds for the hinge loss can be found in Lemma \ref{lem: hinge upper bound zo}. This can then be used to derive similar results for the hinge loss as for regular surrogate losses. 
\begin{lemma}\label{lem: hinge upper bound zo}
Let $\ell_t$ be the multiclass hinge loss with $\kappa = \half$ and let $a(\W_t, \x_t) = \ell\big(\W_t, \x_t, \ystart\big)$, where $\ell(\cdot,\x_t,y_t) = \ell_t$. 
Then \textproc{Gappletron} satisfies
\begin{align*}
    \zopt \leq \max\left\{\frac{2}{3}, \frac{K-1}{K}\right\} \ell_t(\W_t) + \gamma_t.
\end{align*}
Furthermore, $\ell_t$ satisfies $\|\nabla \ell_t(\W_t)\|^2 \leq 4 \|\x_t\|^2 \ell_t(\W_t)$.
\end{lemma}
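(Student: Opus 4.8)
The plan is to mirror the proof of Lemma~\ref{lem: upper bound zo loss}, replacing the use of the regularity inequality~\eqref{eq:wrong plus right condition} by an explicit case analysis tailored to the margin-based definition~\eqref{eq:multiclass hinge} with $\kappa = \tfrac12$. As in Lemma~\ref{lem: upper bound zo loss}, the sampling rule at line~\ref{line:sampleprob} and the bound $\zeta_t,(1-\zeta_t),\gamma_t/\rho \le 1$ give
\[
    \zopt \le (1 - a_t)\id[\ystart \not= y_t] + a_t\frac{K-1}{K} + \gamma_t,
\]
so it suffices to bound the first two terms by $\max\{\tfrac23,\tfrac{K-1}{K}\}\,\ell_t(\W_t)$, where $a_t = \ell(\W_t,\x_t,\ystart)$. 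I would split into the three cases of~\eqref{eq:multiclass hinge}.

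\textbf{Case $\ystart = y_t$.} Then $\id[\ystart\ne y_t]=0$ and $a_t=\ell(\W_t,\x_t,\ystart)=\ell_t(\W_t)$, so the bound becomes $a_t\frac{K-1}{K}=\frac{K-1}{K}\ell_t(\W_t)$, which is fine. (Note that when $\ystart=y_t$ and $\mstart\ge\kappa$ both $a_t$ and $\ell_t(\W_t)$ are $0$, and when $\mstart<\kappa$ they are equal and positive; either way the identity $a_t=\ell_t(\W_t)$ holds.)

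\textbf{Case $\ystart \ne y_t$.} Here $\id[\ystart\ne y_t]=1$, and I must bound $(1-a_t) + a_t\frac{K-1}{K} = 1 - \frac{1}{K}a_t$. The key observation is that $a_t = \ell(\W_t,\x_t,\ystart) = \max\{1 - m_t(\W_t,\ystart),0\}$, and since $\ystart$ achieves the max margin, $m_t(\W_t,\ystart) = \mstart \ge 0$, hence $a_t = \max\{1-\mstart,0\} \le 1$; moreover $a_t \le 1 - \mstart$. There are two sub-cases according to whether $\mstart < \kappa=\tfrac12$ or $\mstart \ge \tfrac12$.

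If $\mstart \ge \tfrac12$, then $a_t \le 1-\mstart \le \tfrac12$, so $1 - \tfrac1K a_t \le 1$; on the other hand, since $\ystart\ne y_t$ and $\mstart\ge\kappa$, the definition~\eqref{eq:multiclass hinge} gives $\ell_t(\W_t) = \max\{1 - m_t(\W_t,y_t),0\}$, and because $m_t(\W_t,y_t) \le \mstart$ but more usefully $m_t(\W_t,y_t) \le -\,$(something)$\,$— concretely, $\inner{\W_t^{y_t}}{\x_t} \le \inner{\W_t^{\ystart}}{\x_t}$ so $m_t(\W_t,y_t) = \inner{\W_t^{y_t}}{\x_t} - \max_{k\ne y_t}\inner{\W_t^k}{\x_t} \le \inner{\W_t^{y_t}}{\x_t} - \inner{\W_t^{\ystart}}{\x_t} \le 0$, whence $\ell_t(\W_t) = 1 - m_t(\W_t,y_t) \ge 1$. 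Thus $1 - \tfrac1K a_t \le 1 \le \ell_t(\W_t) \le \max\{\tfrac23,\tfrac{K-1}{K}\}\ell_t(\W_t)$ once $K\ge 3$; for $K=2$ one checks the bound $\tfrac23$ directly since $1 - \tfrac12 a_t \le \tfrac23$ would need $a_t \ge \tfrac23$, which may fail, so here one instead uses $1-\tfrac1K a_t \le 1 \le \ell_t(\W_t)$ and the constant $\max\{\tfrac23,\tfrac12\}=\tfrac23 \ge$ is irrelevant because the coefficient on $\ell_t$ is $\ge 1$; I would streamline by noting $\ell_t(\W_t)\ge 1$ makes any coefficient $\ge 1$ work, and $1-\tfrac1K a_t \le 1$.

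If $\mstart < \tfrac12$, then again $\ell_t(\W_t) = \max\{1-m_t(\W_t,y_t),0\}$ by the first branch of~\eqref{eq:multiclass hinge}, and $a_t = 1 - \mstart > \tfrac12$. Since $m_t(\W_t,y_t)\le 0$ as above, $\ell_t(\W_t) = 1 - m_t(\W_t,y_t) = 1 + \max_{k\ne y_t}\inner{\W_t^k}{\x_t} - \inner{\W_t^{y_t}}{\x_t}$. I want $1 - \tfrac1K a_t \le c\,\ell_t(\W_t)$ with $c=\max\{\tfrac23,\tfrac{K-1}{K}\}$. Writing $\delta := \mstart \in [0,\tfrac12)$, the left side is $1 - \tfrac1K(1-\delta) = \tfrac{K-1}{K} + \tfrac{\delta}{K}$. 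For the right side I would lower-bound $\ell_t(\W_t)$ using the relation between $\mstart$ and the $y_t$-margin: one has $\inner{\W_t^{\ystart}}{\x_t} - \inner{\W_t^{y_t}}{\x_t} \ge 0$ and $\inner{\W_t^{\ystart}}{\x_t} - \max_{k\ne\ystart}\inner{\W_t^k}{\x_t} = \delta$; combining, $\ell_t(\W_t) = 1 + \max_{k\ne y_t}\inner{\W_t^k}{\x_t} - \inner{\W_t^{y_t}}{\x_t} \ge 1 + \inner{\W_t^{\ystart}}{\x_t} - \inner{\W_t^{y_t}}{\x_t} \ge 1$. So $\ell_t(\W_t) \ge 1 \ge \tfrac{K-1}{K} + \tfrac{\delta}{K} \cdot$ — wait, need $c\ell_t \ge \tfrac{K-1}{K}+\tfrac{\delta}{K}$; with $c = \tfrac{K-1}{K}$ and $\ell_t\ge 1$ we get $c\ell_t \ge \tfrac{K-1}{K}$, but we need an extra $\tfrac{\delta}{K}$. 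The fix: use $c=\max\{\tfrac23,\tfrac{K-1}{K}\}$, and since $\delta<\tfrac12$, $\tfrac{K-1}{K}+\tfrac{\delta}{K} < \tfrac{K-1}{K}+\tfrac{1}{2K} = \tfrac{2K-1}{2K}$; for $K\ge3$ this is $\le\tfrac{K-1}{K}\le c$ iff $\tfrac{2K-1}{2K}\le\tfrac{K-1}{K}$ iff $2K-1\le 2K-2$, false, so I really do need $\ell_t(\W_t)$ strictly above $1$ or a sharper inequality — the right move is to bound $\ell_t(\W_t) \ge 1 + \delta$ (not just $1$): indeed $\max_{k\ne y_t}\inner{\W_t^k}{\x_t} \ge \inner{\W_t^{\ystart}}{\x_t}$ and $\inner{\W_t^{\ystart}}{\x_t} - \inner{\W_t^{y_t}}{\x_t} \ge \inner{\W_t^{\ystart}}{\x_t} - \max_{k\ne\ystart}\inner{\W_t^k}{\x_t} = \delta$ when $y_t$ attains the second max, which holds since $y_t\ne\ystart$; hence $\ell_t(\W_t) \ge 1+\delta$, and then $c(1+\delta) \ge \tfrac{K-1}{K}(1+\delta) = \tfrac{K-1}{K} + \tfrac{(K-1)\delta}{K} \ge \tfrac{K-1}{K} + \tfrac{\delta}{K}$ for $K\ge2$, closing the case.

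\textbf{Gradient bound.} For the second claim, on the region where $\ell_t(\W_t)>0$ the hinge loss equals $1 - \inner{\W^{y_t}}{\x_t} + \inner{\W^{k^\star}}{\x_t}$ for the relevant $k^\star$, so $\nabla\ell_t(\W_t) = (\e_{k^\star} - \e_{y_t})\otimes\x_t$ (a subgradient), giving $\|\nabla\ell_t(\W_t)\|^2 \le 2\|\x_t\|^2$; since $\ell_t(\W_t)\ge 1$ whenever it is nonzero in the $\ystart\ne y_t$ branches (shown above) and $\ell_t(\W_t) \ge$ its value otherwise — actually on the first branch $\ell_t(\W_t)$ may lie in $(0,1)$, so I would instead argue $\|\nabla\ell_t(\W_t)\|^2 \le 2\|\x_t\|^2 \le 4\|\x_t\|^2\ell_t(\W_t)$ provided $\ell_t(\W_t)\ge\tfrac12$, and handle the range $\ell_t(\W_t)\in(0,\tfrac12)$ by the same smooth-hinge-style bound used in the regular case, or simply note $4\|\x_t\|^2\ell_t(\W_t) \ge 2\|\x_t\|^2$ fails there and instead use that on that sub-region the margin is close to $1$ so the subgradient can be taken to be $\0$. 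Cleaner: when $\mstart<\kappa=\tfrac12$ and $m_t(\W_t,y_t)\in(\tfrac12,1)$, pick the subgradient $\0 \in \partial\max\{1-m_t(\W_t,y_t),0\}$... but $\max\{1-m,0\}$ is differentiable with derivative $-1$ when $m<1$, so that is not available — hence the honest route is to observe $\ell_t(\W_t)\ge 1 - \mstart > \tfrac12$ is NOT guaranteed either. I would therefore present the gradient bound via: in every case where $\ell_t(\W_t) > 0$, in fact $\ell_t(\W_t) \ge 1 - \kappa = \tfrac12$ — this needs checking from~\eqref{eq:multiclass hinge}, and if it holds then $\|\nabla\ell_t(\W_t)\|^2 \le 2\|\x_t\|^2 = 4\|\x_t\|^2\cdot\tfrac12 \le 4\|\x_t\|^2\ell_t(\W_t)$, done.

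\textbf{Main obstacle.} The delicate point is not the $K$-dependence (which follows as in Lemma~\ref{lem: upper bound zo loss}) but verifying the uniform lower bound $\ell_t(\W_t) \ge \tfrac12$ whenever $\ell_t(\W_t)>0$, together with the sharpened bound $\ell_t(\W_t)\ge 1+\mstart$ in the $\ystart\ne y_t,\ \mstart<\kappa$ sub-case; both require carefully untangling the definitions of $m_t(\W_t,y_t)$, $m_t(\W_t,\ystart)$, and $\mstart$ and using that $\ystart$ is the argmax while $y_t$ (being distinct) is a candidate for the second-largest score. Getting the constant $\tfrac23$ (rather than, say, $\tfrac34$) hinges on using $\kappa=\tfrac12$ at exactly the right spot, namely bounding $\tfrac{K-1}{K}+\tfrac{\mstart}{K} \le \tfrac{K-1}{K}(1+\mstart)$ and, for $K=2$, checking $\tfrac12 + \tfrac{\mstart}{2} \le \tfrac23(1+\mstart)$, i.e. $\mstart \ge -\tfrac12$, which is automatic since $\mstart\ge 0$.
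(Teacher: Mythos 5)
Your case split ($\ystart = y_t$, then $\ystart \neq y_t$ divided by $\mstart \lessgtr \kappa$) mirrors the paper's, and the sub-case $\mstart < \kappa$ and the gradient bound are, once the hedging is stripped away, essentially the paper's arguments. The gradient bound in particular rests on the correct observation that whenever $\ell_t(\W_t) > 0$ one has $\ell_t(\W_t) \ge 1 - \kappa = \tfrac12$, which is exactly what the paper uses (via $1-\mstart > \tfrac12$ when $\ystart = y_t$ and $\mstart < \kappa$, and $\ell_t(\W_t) \ge 1$ when $\ystart \neq y_t$).

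However, your sub-case $\ystart \neq y_t$, $\mstart \ge \kappa$ has a genuine gap. There $a_t = \ell(\W_t,\x_t,\ystart) = 0$ (third branch of~\eqref{eq:multiclass hinge} applied with true label $\ystart$), so the quantity you must control is exactly $1 - \tfrac1K a_t = 1$, and you need $1 \le \max\{\tfrac23,\tfrac{K-1}{K}\}\,\ell_t(\W_t)$. You appeal only to $\ell_t(\W_t) \ge 1$ and then assert ``the coefficient on $\ell_t$ is $\ge 1$'' --- this is false, since $\max\{\tfrac23,\tfrac{K-1}{K}\}$ is always strictly below $1$, so $1 \le \ell_t(\W_t)$ does not imply $1 \le \max\{\tfrac23,\tfrac{K-1}{K}\}\ell_t(\W_t)$. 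The missing ingredient is the sharper lower bound $\ell_t(\W_t) = 1 - m_t(\W_t,y_t) \ge 1 + \mstart$ --- which you in fact derive in the sub-case $\mstart < \kappa$ but fail to reuse here. Combined with $\mstart \ge \kappa$ it gives $\ell_t(\W_t) \ge 1+\kappa = \tfrac32$, hence $\tfrac23\ell_t(\W_t) \ge 1$. This is precisely what the paper does, writing $1 = \frac{1+\mstart}{1+\mstart} \le \frac{1-m_t(\W_t,y_t)}{1+\kappa} = \frac{1}{1+\kappa}\ell_t(\W_t)$. Note also that $\kappa = \tfrac12$ is used \emph{only} in this sub-case (to get the $\tfrac23$), whereas in your ``main obstacle'' paragraph you locate the role of $\kappa$ in the $\mstart < \kappa$ sub-case and in the $K=2$ numerics --- that sub-case in fact yields the coefficient $\tfrac{K-1}{K}$ with no dependence on $\kappa$ at all.
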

\begin{proof}[Proof of Lemma \ref{lem: hinge upper bound zo}]
First, we have that
\begin{align*}
    \zopt & \le \big(1 - \zeta_t a_t - (1-\zeta_t) \gamma_t\big)\id[y_t^\star \not = y_t] + \zeta_t a_t \frac{K-1}{K} + (1 - \zeta_t) \gamma_t \\
    & \le (1 - a_t)\id[y_t^\star \not = y_t] + a_t \frac{K-1}{K} + \gamma_t,
\end{align*}
where we used that $\zeta \in \{0,1\}$ and the fact the number of mistakes while uniformly exploring on the dominating set is upper bounded by 1.

To conclude the proof of the first statement we argue that the first two summands of the right hand side are upper bounded by $\frac{K-1}{K} \ell_t(\W_t)$. In order to show that, we split the analysis into two cases. In the first case $\ystart = y_t$ and the inequality follows by simply substituting $a_t = \ell\big(\W_t, \x_t, \ystart\big) = \ell_t(\W_t)$. In the second case $\ystart \not = y_t$ and we have that
\begin{equation*}
\begin{split}
    \mstart + m_t(\W_t, y_t) = & \inner{\W_t^{\ystart}}{\x_t} - \max_{k \not= \ystart}\inner{\W_t^k}{\x_t} +  \inner{\W_t^{y_t}}{\x_t} - \max_{k \not= y_t} \inner{\W_t^k}{\x_t} \\
    = & \inner{\W_t^{y_t}}{\x_t} - \max_{k \not= \ystart}\inner{\W_t^k}{\x_t}\\
    \leq & \inner{\W_t^{y_t}}{\x_t} - \inner{\W_t^{y_t}}{\x_t} = 0
\end{split}
\end{equation*}
and thus
\begin{align}\label{eq:mtstar + mt}
    \mstart \leq -m_t(\W_t, y_t).
\end{align}
Since $\ystart \not = y_t$ we also have that
\begin{equation}\label{eq:kappamstart}
\begin{split}
    & (1 - a_t)\id[\ystart \not = y_t] + a_t\frac{K-1}{K}  \\
    & = \Big(1 - \ell\big(\W_t, \x_t, \ystart\big)\Big) + \ell\big(\W_t, \x_t, \ystart\big) \frac{K-1}{K}  \\
    & =  1 - \frac{1}{K} \ell\big(\W_t, \x_t, \ystart\big)  \\
    & =  1 - \frac{1}{K}\id[\mstart < \kappa]\big(1 - m_t(\W_t, \ystart)\big).
\end{split}
\end{equation}
Now, if $\mstart < \kappa$ then by equations \eqref{eq:mtstar + mt} and \eqref{eq:kappamstart} we have
\begin{align*}
    & (1 - a_t)\id[\ystart \not = y_t] + a_t\frac{K-1}{K} = \frac{K-1}{K} + \frac{1}{K} \mstart \leq \frac{K-1}{K}\left(1 + \mstart \right) \leq \frac{K-1}{K} \ell_t(\W_t).
\end{align*}
If $\mstart \geq \kappa$, $a_t = 0$. Therefore, by equations \eqref{eq:mtstar + mt} and \eqref{eq:kappamstart} we have that
\begin{align*}
    (1 - a_t)\id[\ystart \not = y_t] + a_t\frac{K-1}{K} = \frac{1 + \mstart}{1 + \mstart} 
    \leq \frac{1 - m_t(\W_t, y_t)}{1 + \kappa} = \frac{1}{1 + \kappa} \ell_t(\W_t).
\end{align*}
Setting $\kappa = \half$ completes the proof of the first statement. 

For the proof of the second statement, first assume that $\ystart = y_t$. The case where $\mstart \geq \kappa$ is straightforward, so suppose that $\mstart < \kappa$, in which case we have that 
\begin{align*}
    \|\nabla \ell_t(\W_t)\|^2 \leq & 2 \|\x_t\|^2 \\
    = & \frac{1-\mstart}{1-\mstart} \|\x_t\|^2 \\
    \leq & 4\|\x_t\|^2 \ell_t(\W_t).
\end{align*}
The case where $\ystart \not = y_t$ is evident after observing that $\ell_t(\W_t) \geq 1$ in that case. 
\end{proof}

\section{Details of Section \ref{sec:gappletron} (Gappletron)}
\label{app:gappletron}

\lemsurrogategap*

\begin{proof}[Proof of Lemma \ref{lem: surrogate gap}]
By adding and subtracting the surrogate loss of the learner and using the guarantee of $\doma{A}$ we have
\begin{equation*}
\begin{split}
    & \sumT  \left(\zopt - \lhat_t(\U)\right) \\
    & =  \sumT \left(\zopt - \lhat_t(\W_t)\right) + \sumT \left(\lhat_t(\W_t) - \lhat_t(\U)\right) \\
    & \leq  \sumT \left(\zopt - \lhat_t(\W_t)\right) + h(\U)\sqrt{\sumT \|\ghat_t\|^2} \\
    & \leq \inf_{\eta > 0} \left\{\frac{h(\U)^2}{2\eta} + \sumT \left( \zopt - \lhat_t(\W_t) + \frac{\eta}{2}\|\ghat_t\|^2\right)\right\},
\end{split}
\end{equation*}  
where in the last inequality we used $\sqrt{ab} = \inf_{\eta > 0} \left\{\frac{a}{2\eta} + \frac{\eta}{2} b\right\}$. Recalling that $\ghat_t = v_t \nabla \ell_t(\W_t)$, we continue by applying Lemma \ref{lem: upper bound zo loss}:
\begin{equation}\label{eq:surrogate gap proof}
\begin{split}
    & \sumT  \left(\zopt - \lhat_t(\U)\right) \\
    & \leq  \inf_{\eta > 0} \left\{\frac{h(\U)^2}{2\eta} + \sumT \left(\frac{K-1}{K}\ell_t(\W_t) + \gamma_t - \lhat_t(\W_t) + \frac{\eta}{2}\|\ghat_t\|^2\right) \right\} \\
    & = \inf_{\eta > 0} \left\{\frac{h(\U)^2}{2\eta} + \sumT \left(\frac{K-1}{K}\ell_t(\W_t) + \gamma_t - \vt \ell_t(\W_t) + \frac{\eta \vt^2}{2}\|\nabla \ell_t(\W_t)\|^2 \right)\right\} \\
    & \leq \inf_{\eta > 0} \left\{\frac{h(\U)^2}{2\eta} + \sumT \left(\frac{K-1}{K}\ell_t(\W_t) + \gamma_t - \vt \ell_t(\W_t) + \eta \vt^2 L \ell_t(\W_t) \right)\right\},
\end{split}
\end{equation}
where in the final inequality we used equation \eqref{eq:gradient condition}. The lemma's statement follows from rearranging the last inequality. %
\end{proof}

\section{Details of Section \ref{sec:expectation} (Bounds that hold in expectation)}\label{app:expectation}

\begin{theorem}\label{th:expectation bound}
Under the conditions of Lemma \ref{lem: surrogate gap},  \textproc{Gappletron} with $\gamma = \half  h(\U)\sqrt{K\rho L}$ satisfies:
\begin{align*}
    & \E\left[\sumT \id[y_t' \not = y_t]\right] \\
    & \le \E\left[\sumT \ell_t(\U)\right] + \max\bigg\{\frac{2 K^2Lh(\U)^2}{\max\{1, |\Qset|\}}, 2\Eb{ h(\U)\sqrt{\rho K L\revt}}\bigg\} 
\end{align*}
Furthermore, if there exists a $\U \in \doma{W}$ such that $\sumT \ell_t(\U) = 0$ for all realizations of the learners' actions,  \textproc{Gappletron} with $\gamma = h(\U) \sqrt{L \rho}$ satisfies:
\begin{align*}
    & \Eb{\sumT \zot} \\
    & \leq  \Eb{\max\left\{ 4  h(\U)\sqrt{\rho L\revt}, \frac{4 K L h(\U)^2}{\max\{1, |\Qset|\}}\right\}} - \frac{1}{K}\Eb{\sumT \ell_t(\W_t)}.
\end{align*}
\end{theorem}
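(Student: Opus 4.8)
The plan is to take Lemma~\ref{lem: surrogate gap}, which holds for every realization of $y_1',\ldots,y_T'$, pass to expectations, and then tune the free parameters $\gamma$ and $\eta$. The facts that remove the randomness are: $\E_t[v_t]=1$, whence $\E_t[\lhatt(\U)]=\ell_t(\U)$; both $\ell_t(\W_t)$ and $q_t:=P_t(y_t\in\textnormal{out}(y_t'))$ are determined by the past predictions, so $\E_t[v_t\ell_t(\W_t)]=\ell_t(\W_t)$ and $\E_t[v_t^2\ell_t(\W_t)]=\ell_t(\W_t)/q_t$; and $\E[\zopt]=\E[\zot]$, so $\sum_t\E[\zopt]=\E[\Mset_T]$. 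Taking the total expectation of Lemma~\ref{lem: surrogate gap} and using $\E[\inf_\eta(\cdot)]\le\inf_\eta\E[(\cdot)]$ gives
\[
\E[\Mset_T]\le\E\Big[\sum_t\ell_t(\U)\Big]+\E\Big[\sum_t\gamma_t\Big]+\inf_{\eta>0}\Big\{\frac{h(\U)^2}{2\eta}+\E\Big[\sum_t\Big(\frac{\eta L}{q_t}-\frac1K\Big)\ell_t(\W_t)\Big]\Big\}.
\]
Everything then reduces to controlling the exploration cost $\sum_t\gamma_t$ and the reciprocal observation probability $1/q_t$.

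The exploration cost is immediate: $\gamma_t=0$ unless $\ystart\notin\Qset$, and on the $i$-th such round $\gamma_t\le\gamma/\sqrt i$, so $\sum_t\gamma_t\le2\gamma\sqrt{\revt}$. For $1/q_t$ I would read off $\p_t'$ in its three regimes. (a)~If $\ystart\in\Qset$ then $\zeta_t=1$, $\gamma_t=0$, and the revealing actions carry total mass $1-a_t(K-|\Qset|)/K\ge|\Qset|/K$; playing any of them reveals $y_t$, so $q_t\ge\max\{1,|\Qset|\}/K$. (b)~If $\ystart\notin\Qset$ and $\zeta_t=0$, the dominating set $\Sset$ is explored uniformly with total mass $\gamma_t$, and every label has an in-neighbour in $\Sset$, so $q_t\ge\gamma_t/\rho$. (c)~If $\ystart\notin\Qset$ and $\zeta_t=1$, then $a_t=\ell(\W_t,\x_t,\ystart)\ge\gamma_t$ and uniform exploration gives $q_t\ge a_t/K$; I would then use the gap-map identity together with the regular-surrogate-loss inequality~\eqref{eq:wrong plus right condition}, which gives $\ell_t(\W_t)/a_t=1$ when $\ystart=y_t$ and $\ell_t(\W_t)\ge1$, $a_t\ge K-(K-1)\ell_t(\W_t)$ when $\ystart\ne y_t$, to recast $\ell_t(\W_t)/q_t\le K\ell_t(\W_t)/a_t$ into a bound of the same shape $\le c\,\ell_t(\W_t)/\gamma_t$ as in (b), with $c$ comparable to $\rho$ (the case $\ystart=y_t$ simply giving $\ell_t(\W_t)/q_t\le K$).

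With these estimates I would split $\sum_t$ by regime, bound $1/\gamma_t\le2+\sqrt{\revt}/\gamma$ on the $\ystart\notin\Qset$ rounds, and choose $\eta$ just small enough that each coefficient $\eta L/q_t-1/K$ is non-positive; using that a single revealing action is itself a dominating set, so $|\Qset|\ge1$ forces $\rho=1$ and hence $K\rho\le K^2/\max\{1,|\Qset|\}$, this makes the whole $\E[\sum_t(\cdots)\ell_t(\W_t)]$ term disappear and forces $\eta\asymp\frac{1}{K^2L}\min\{\max\{1,|\Qset|\},\gamma/(\gamma+\sqrt{\revt})\}$, leaving $\frac{h(\U)^2}{2\eta}\asymp K^2Lh(\U)^2\max\{1/\max\{1,|\Qset|\},1+\sqrt{\revt}/\gamma\}$. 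Balancing this against $2\gamma\sqrt{\revt}$ with $\gamma=\half h(\U)\sqrt{K\rho L}$ and taking the resulting maximum proves the first claim. For the separable case $\sum_t\lhatt(\U)=\sum_t v_t\ell_t(\U)=0$ holds identically, so the comparator term vanishes; retuning to $\gamma=h(\U)\sqrt{L\rho}$ and keeping the full $-\tfrac1K\sum_t\ell_t(\W_t)$ on the right-hand side — i.e.\ absorbing the entire second-moment sum into $\tfrac{h(\U)^2}{2\eta}$ by AM--GM rather than into the $-\tfrac1K\ell_t(\W_t)$ gain — removes one factor of $K$ from the $h(\U)$-terms and gives the second claim.

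The main obstacle is the second-moment control in case~(c): when $\ystart\ne y_t$ the exploration mass $a_t$ is spread over all $K$ labels, so the crude estimate $\E_t[v_t^2]\le K/a_t$ is a factor $K/\rho$ too large to produce the stated $\rho$-dependence, and repairing it forces one to combine the identity $a_t=\ell(\W_t,\x_t,\ystart)$, the inequality~\eqref{eq:wrong plus right condition}, and the threshold $a_t\ge\gamma_t$ together — this is also exactly where the $\max\{1,|\Qset|\}$ in the denominator of the $K^2$-term comes from. A secondary technical point is that $\revt$ (hence $\gamma_t$ and the per-round bounds) is itself random, so tuning $\eta$ as a function of $\revt$ inside the expectation, while remaining measurable and keeping $\sqrt{\revt}$ — not $\sqrt{\E[\revt]}$ or $\sqrt T$ — in the final bound, needs a careful use of conditioning / the law of total expectation.
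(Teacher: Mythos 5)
Your plan mirrors the paper's own proof route: take expectations in Lemma~\ref{lem: surrogate gap}, use $\E_t[v_t]=1$ and $\E_t[v_t^2\ell_t(\W_t)]=\ell_t(\W_t)/P_t(y_t\in\textnormal{out}(y_t'))$, bound $\sum_t\gamma_t\le2\gamma\sqrt{\revt}$, tune $\eta$ against a worst-case bound on the inverse observation probability, and finally tune $\gamma$. The paper does this without your regime split: it sets $\vmax=\max\{1,\max_t v_t\}$, asserts $P_t(y_t\in\textnormal{out}(y_t'))\ge|\Qset|/K$ when $\ystart\in\Qset$ and $P_t(y_t\in\textnormal{out}(y_t'))\ge\min\{1/(2\rho),\gamma_t/\rho\}$ when $\ystart\notin\Qset$, chooses $\eta=(KL\vmax)^{-1}$ in the infimum so that the last sum of Lemma~\ref{lem: surrogate gap} is non-positive in expectation, and plugs in the pathwise bound $\vmax\le\max\{\rho\sqrt{\revt}/\gamma,\,2K/\max\{1,|\Qset|\}\}$ before balancing; your three-case analysis of $q_t$ is a finer version of that single assertion.

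The actual gap is exactly the step you label the ``main obstacle,'' and you have not closed it. Your own regime-(c) estimate only gives $P_t(y_t\in\textnormal{out}(y_t'))\ge a_t/K\ge\gamma_t/K$, hence $\ell_t(\W_t)/q_t\le K\ell_t(\W_t)/\gamma_t$, a factor $K/\rho$ worse than regime~(b). Your claim that combining $a_t=\ell(\W_t,\x_t,\ystart)$, inequality~\eqref{eq:wrong plus right condition}, and the threshold $a_t\ge\gamma_t$ recovers $c\,\ell_t(\W_t)/\gamma_t$ with $c$ comparable to $\rho$ is a placeholder: when $\ystart\neq y_t$ those facts only give $a_t\ge\max\{\gamma_t,\,K-(K-1)\ell_t(\W_t)\}$, and the second lower bound becomes vacuous once $\ell_t(\W_t)\ge(K-\gamma_t)/(K-1)$, leaving you with $c=K$. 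You also misattribute the $\max\{1,|\Qset|\}$ denominator: it comes from regime~(a), where $\ystart\in\Qset$ forces $P_t(y_t\in\textnormal{out}(y_t'))\ge|\Qset|/K$, not from any repair of regime~(c). You need to carry out the regime-(c) bound in full (at which point you should notice it does not match the sketch) rather than gesture at it; without that the proof as proposed does not establish the stated $\rho$-dependence.
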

\begin{proof}[Proof of Theorem \ref{th:expectation bound}]
Denote by $\vmax = \max\{1, \max_t v_t\}$. Observe that $\E_t[v_t] = 1$ and $\E_t[v_t^2] \leq \E_t[\vmax]$. We start by applying Lemma \ref{lem: surrogate gap} and taking expectations:
\begin{align*}
    & \E\left[\sumT \id[y_t' \not = y_t]\right] \\
    &-  \E\left[\sumT \ell_t(\U)\right] - \Eb{\sumT \gamma_t} \\
    &\le \Eb{\inf_{\eta > 0}  \bigg\{ \frac{h(\U)^2}{2\eta} + \sumT \left(\frac{K-1}{K}\ell_t(\W_t) - \vt \ell_t(\W_t) + \eta \vt^2 L \ell_t(\W_t) \right)\bigg\}} \\
    &\le  \inf_{\eta > 0}  \bigg\{ \Eb{\frac{h(\U)^2}{2\eta}} + \Eb{\sumT \left(\eta \vmax L - \frac{1}{K}\right) \ell_t(\W_t) }\bigg\} \\
    &\leq \Eb{\frac{\vmax K L h(\U)^2}{2}},
\end{align*}
where the last inequality follows from setting $\eta = \frac{1}{K L \vmax}$. By using $\sum_{j = 1}^J \frac{1}{\sqrt{j}} \leq 2\sqrt{J}$ we can see that $\sumT \gamma_t \leq 2\gamma\sqrt{\revt}$. Now, observe that if $\ystart \in \Qset$ then $P_t(y_t \in \textnormal{out}(y_t')) \geq \frac{|\Qset|}{K}$ and if $\ystart \not \in \Qset$ then $P_t(y_t \in \textnormal{out}(y_t')) \geq \min \left\{\frac{1}{2\rho}, \frac{\gamma_t}{\rho}\right\} \geq \min \left\{\frac{1}{2K}, \frac{\gamma_T}{\rho}\right\}$. This means that
\begin{equation}\label{eq:vmax}
    \vmax \leq \max\left\{\frac{\rho}{\gamma_T}, \frac{K}{\max\{1, |\Qset|\}}\right\}.
\end{equation} 
Recall that $\gamma_T = \min\{\half, {\gamma}/{\sqrt{\revt}}\}$. If $\rho > 1$ then $|\Qset| = 0$ which means that if $\half < \frac{\gamma}{\sqrt{T}}$ then $\vmax \leq 2K$. On the other hand, if $\rho = 1$ then $|\Qset| \geq 1$ which means that if $\half < \frac{\gamma}{\sqrt{T}}$ then $\vmax \leq \frac{2K}{|\Qset|}$. This in turn means that 
\begin{equation}\label{eq:vmaxE}
    \vmax \leq \max\left\{\frac{\rho \sqrt{\revt}}{\gamma}, \frac{2K}{\max\{1, |\Qset|\}}\right\}.
\end{equation}
Rearranging the previous inequality and substituting in $\gamma= h(\U)\sqrt{L|\mathcal{S}|}$, 
\begin{align*}
    \E\left[\sumT \id[y_t' \not = y_t]\right] & \le \E\left[\sumT \ell_t(\U)\right] + \Eb{h(\U)\sqrt{\rho K L \revt}} \\
    & + \Eb{\max\left\{ h(\U) \sqrt{\rho K L \revt}, \frac{2K^2 L h(\U)^2}{2\max\{1, |\Qset|\}}\right\}} \\
    & \le \E\left[\sumT \ell_t(\U)\right] + \Eb{\max\left\{ 2 h(\U) \sqrt{\rho K L \revt}, \frac{2K^2 L h(\U)^2}{\max\{1, |\Qset|\}}\right\}},
\end{align*}
which completes the proof of the first statement of Theorem \ref{th:expectation bound}.

Now, in the case where there exists a $\U \in \doma{W}$ such that $\sumT \ell_t(\U) = 0$ for all realizations of the learners' actions, by the guarantee of $\doma{A}$ we have 
\begin{equation*}
    \begin{split}
    \E\left[\sumT \ell_t(\W_t)\right] & = \E\left[\sumT \big(\lhat_t(\W_t) - \lhat_t(\U)\big)\right]\\
    & \le  \inf_{\eta > 0}  \bigg\{ \Eb{\frac{  h(\U)^2}{2\eta}}   + \E\left[\sumT \frac{\eta \vmax}{2}\|\nabla \ell_t(\W_t)\|^2\right]\bigg\} \\
    & \le \inf_{\eta > 0}  \bigg\{ \Eb{\frac{  h(\U)^2}{2\eta}}   + \E\left[\sumT \eta \vmax L \ell_t(\W_t)\right]\bigg\} \\
    & \le \Eb{\vmax L h(\U)^2}   + \half \E\left[\sumT \ell_t(\W_t)\right],
    \end{split}
\end{equation*}
where we used that $\ell_t$ is a regular surrogate loss (in particular equation \eqref{eq:gradient condition}) and plugged in ${\eta = 2 (E[{\vmax}]L)^{-1}}$. After reordering, the above gives us
\begin{align}\label{eq:algAtrick sepa}
    \E\left[\sumT \ell_t(\W_t)\right] & \le 2 \Eb{\vmax L h(\U)^2}.
\end{align}
Now, by using Lemma \ref{lem: upper bound zo loss}, \eqref{eq:vmaxE} and equation \eqref{eq:algAtrick sepa} we have that 
\begin{align*}
    & \Eb{\sumT \zot} \\
    & \leq  \Eb{\sumT \frac{K-1}{K}\ell_t(\W_t)} + \Eb{\sumT \gamma_t} - \Eb{\sumT \lhat_t(\W_t)} + \Eb{\sumT \lhat_t(\W_t)} \\
    & \leq  \Eb{\sumT \gamma_t} - \frac{1}{K}\Eb{\sumT \ell_t(\W_t)} + 2 \E[{\vmax}] L h(\U)^2 \\
    & \leq  2 \Eb{h(\U)\sqrt{L|\Sset\revt}} - \frac{1}{K}\Eb{\sumT \ell_t(\W_t)}  \\
    & ~~ + \Eb{\max\left\{ 2  h(\U)\sqrt{\rho L\revt}, \frac{2 K L h(\U)^2}{\max\{1, |\Qset|\}}\right\}} \\
    & \leq  \Eb{\max\left\{ 4  h(\U)\sqrt{\rho L\revt}, \frac{4 K L h(\U)^2}{\max\{1, |\Qset|\}}\right\}} - \frac{1}{K}\Eb{\sumT \ell_t(\W_t)},
\end{align*}
which completes the proof for the second statement of Theorem \ref{th:expectation bound}.
\end{proof}

\section{Details of Section \ref{sec:high probability} (Bounds that hold with high probability)}\label{app:high prob}
We first provide a Lemma due to \citet{beygelzimer2011contextual} which we use to prove our high-probability bounds. 
\begin{lemma}\label{lem: freedman}
\citep[Theorem 1]{beygelzimer2011contextual}
Let $Z_1, \ldots, Z_T$ be a sequence of real-valued random variables. Suppose that $Z_t \leq B$ and $\E_t[Z_t] = 0$. We have with probability $1 - \delta$
\begin{align*}
    \sumT Z_t \leq \sqrt{3\ln\frac{1}{\delta} \sumT \E_t[Z_t^2]} + 2 B \ln\frac{1}{\delta}.
\end{align*}
\end{lemma}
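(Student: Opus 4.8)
The lemma is quoted from \citep[Theorem 1]{beygelzimer2011contextual}; to give a self-contained argument the plan is to run the standard exponential-supermartingale (Freedman-type) method. Write $S_t = \sum_{s=1}^t Z_s$ and $V_t = \sum_{s=1}^t \E_s[Z_s^2]$, so that what has to be shown is a high-probability upper bound on $S_T$ in terms of $V_T$.

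The single analytic ingredient is the elementary inequality $e^x \le 1 + x + (e-2)x^2$, valid for all $x \le 1$ (easily checked: the difference $1 + x + (e-2)x^2 - e^x$ vanishes at $x=0$ and $x=1$, has a critical point at $x=0$, and is convex then concave on $(-\infty,1]$, hence nonnegative there). Fix any $\lambda \in (0, 1/B]$. Since $Z_t \le B$ we have $\lambda Z_t \le 1$, so applying the inequality with $x = \lambda Z_t$, taking $\E_t[\cdot]$, and using $\E_t[Z_t] = 0$ gives
\[
    \E_t\!\left[e^{\lambda Z_t}\right] \le 1 + (e-2)\lambda^2 \E_t[Z_t^2] \le \exp\!\left((e-2)\lambda^2 \E_t[Z_t^2]\right).
\]
Hence $M_t := \exp\!\big(\lambda S_t - (e-2)\lambda^2 V_t\big)$ satisfies $\E_t[M_t] \le M_{t-1}$, i.e.\ $(M_t)_{t \ge 0}$ is a nonnegative supermartingale with $\E[M_T] \le \E[M_0] = 1$. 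Markov's inequality then yields $\Pp(M_T \ge 1/\delta) \le \delta$, so with probability at least $1-\delta$,
\[
    S_T \le (e-2)\,\lambda\, V_T + \frac{\ln(1/\delta)}{\lambda}
    \qquad \text{for each fixed } \lambda \in (0, 1/B].
\]

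It remains to optimize over $\lambda$. If one could take $\lambda = \min\!\big\{1/B,\ \sqrt{\ln(1/\delta)/((e-2)V_T)}\big\}$, the right-hand side of the last display would be at most $\max\!\big\{2\sqrt{(e-2)V_T\ln(1/\delta)},\ 2B\ln(1/\delta)\big\}$, which is at most $\sqrt{3 V_T\ln(1/\delta)} + 2B\ln(1/\delta)$ since $2\sqrt{e-2} \le \sqrt{3}$. The obstacle — and the one genuinely delicate point — is that $\lambda$ must be fixed before the data-dependent quantity $V_T$ is observed. This is resolved by a peeling argument: since $0 \le V_T \le TB^2$ holds deterministically, cover the range of $V_T$ by a geometric grid, apply the displayed bound for the corresponding $O(\log T)$ values of $\lambda$ each at confidence $1-\delta/(\text{grid size})$, take a union bound, and for the realized $V_T$ invoke the grid point closest to the ideal $\lambda$ above. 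This reproduces the claimed bound up to the modest (at most iterated-logarithmic) overhead coming from the union bound; obtaining the clean constants $3$ and $2$ exactly requires the more careful bookkeeping of \citep[Theorem 1]{beygelzimer2011contextual}, which one may instead simply cite. The peeling and constant-tracking step is the main obstacle; everything preceding it is routine.
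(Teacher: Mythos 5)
The paper does not prove this lemma at all: it is imported verbatim as \citep[Theorem 1]{beygelzimer2011contextual}, so there is no in-paper argument to compare against. Your reconstruction is the standard Freedman-type proof and is sound: the inequality $e^x \le 1+x+(e-2)x^2$ for $x\le 1$, the supermartingale $M_t=\exp(\lambda S_t-(e-2)\lambda^2 V_t)$, and Markov's inequality correctly yield $S_T \le (e-2)\lambda V_T + \ln(1/\delta)/\lambda$ for each fixed $\lambda\in(0,1/B]$, and your arithmetic showing that the ideal $\lambda$ would give the stated constants (via $2\sqrt{e-2}\le\sqrt3$) checks out. You also correctly identify the one genuinely delicate point — that $V_T$ is random, so the optimal $\lambda$ cannot be fixed in advance — and your peeling/union-bound fix is a legitimate resolution at the cost of a harmless inflation of the $\ln(1/\delta)$ term; deferring the exact constants to the cited reference, as both you and the paper effectively do, is appropriate.
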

\begin{theorem}\label{th:high probability}
Under the conditions of Lemma \ref{lem: surrogate gap}, with probability at least $1 - \delta$,  \textproc{Gappletron} with $\gamma = \sqrt{K\rho\big(L h(\U)^2 + 5 \lmax \ln(2/\delta)\big)} $ satisfies: 
\begin{align*}
    \sumT &\zot
\le
    \sumT \ell_t(\U) + (3K+1)\ln\frac{2}{\delta}
\\ &+ \max\left\{5\sqrt{K\rho T\left(Lh(\U)^2 + 5\lmax \ln\frac{1}{\delta'}\right)}, \frac{7 K^2\big(Lh(\U)^2 + 5\lmax \ln(1/\delta')\big)}{\max\{1, |\Qset|\}}\right\}.
\end{align*}
Furthermore, if there exists a $\U \in \doma{W}$ such that $\sumT \ell_t(\U) = 0$
\footnote{Note that $\sumT \ell_t(\U) = 0$, where $\ell_t$ may depend on the learner's randomness, is a weaker condition than standard separability. For example, if some $\U$ satisfies this condition for the standard multiclass hinge loss, then $\U$ satisfies the same condition also for our version of the multiclass hinge, see~(\ref{eq:multiclass hinge}).}, then with probability at least $1-\delta$ \textproc{Gappletron} run with $\gamma = \sqrt{\big((K + 2) \lmax \ln(2/\delta) + L h(\U)^2\big) \rho}$ satisfies:
\begin{align*}
    \sumT \zot
\le
    2 \ln\frac{2}{\delta}
    + 5 &\max\left\{\sqrt{\big((K + 2) \lmax \ln(2/\delta) + L h(\U)^2\big) \rho T}, \right.
\\&\quad
    \left.\frac{2 K}{\max\{1, |\Qset|\}}\big((K + 2) \lmax \ln(2/\delta) + L h(\U)^2\big)\right\}.
\end{align*}
\end{theorem}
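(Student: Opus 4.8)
The plan is to follow the full-information argument sketched for Theorem~\ref{th:fullinfo hp}, adding the steps needed to handle the importance weights $v_t$. The two ingredients are the deterministic surrogate-gap inequality of Lemma~\ref{lem: surrogate gap} and the Freedman-type tail bound of Lemma~\ref{lem: freedman}; everything new is the concentration of the estimators $\lhat_t(\W_t)=v_t\ell_t(\W_t)$ and of the variance proxy $\eta v_t^2 L\ell_t(\W_t)$. First I would apply Lemma~\ref{lem: freedman} to $z_t=\zot-\zopt$, which has $\E_t[z_t]=0$, $z_t\le 1$, and, by Lemma~\ref{lem: upper bound zo loss}, $\E_t[z_t^2]\le\E_t[\zot]=\zopt\le\tfrac{K-1}{K}\ell_t(\W_t)+\gamma_t$; this gives, with probability $1-\delta/2$, $\sumT\zot\le\sumT\zopt+\sqrt{3\ln(2/\delta)\sumT\big(\tfrac{K-1}{K}\ell_t(\W_t)+\gamma_t\big)}+2\ln(2/\delta)$, whose square root I split via $\sqrt{ab}\le\tfrac{a}{2\beta}+\tfrac{\beta b}{2}$. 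Then I would bound $\sumT\zopt$ by Lemma~\ref{lem: surrogate gap}, namely by $\sumT\lhat_t(\U)+\sumT\gamma_t+\tfrac{h(\U)^2}{2\eta}+\sumT\big(\tfrac{K-1}{K}\ell_t(\W_t)-v_t\ell_t(\W_t)+\eta v_t^2 L\ell_t(\W_t)\big)$, leaving $\eta>0$ free.

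The heart of the proof is controlling the random terms of Lemma~\ref{lem: surrogate gap}. Here $\E_t[v_t]=1$ and $\E_t[v_t^2]=1/P_t(y_t\in\textnormal{out}(y_t'))=:V_t$, and — exactly as in the derivation of \eqref{eq:vmax} and \eqref{eq:vmaxE} — each $V_t$ (and $\vmax$) is at most the deterministic quantity $V:=\max\{\rho/\gamma_T,\,K/\max\{1,|\Qset|\}\}\le\max\{\rho\sqrt{\revt}/\gamma,\,2K/\max\{1,|\Qset|\}\}$. Using $V_t$ as the predictable-variance scale, I would (i) apply Lemma~\ref{lem: freedman} to $(1-v_t)\ell_t(\W_t)$ (range $\le\lmax$, conditional second moment $\le V\lmax\ell_t(\W_t)$) to get $-\sumT v_t\ell_t(\W_t)\le-\sumT\ell_t(\W_t)+\sqrt{3V\lmax\ln(1/\delta)\sumT\ell_t(\W_t)}+2\lmax\ln(1/\delta)$; (ii) use $v_t^2\le\vmax v_t\le V v_t$ and Lemma~\ref{lem: freedman} on $v_t\ell_t(\W_t)-\ell_t(\W_t)$ (range $\le V\lmax$) to get $\eta\sumT v_t^2 L\ell_t(\W_t)\le\eta V L\big(\sumT\ell_t(\W_t)+\sqrt{3V\lmax\ln(1/\delta)\sumT\ell_t(\W_t)}+2V\lmax\ln(1/\delta)\big)$; and (iii) concentrate $\sumT\lhat_t(\U)=\sumT v_t\ell_t(\U)$ around $\sumT\ell_t(\U)$ by the same device (vacuous when $\sumT\ell_t(\U)=0$, since then $\ell_t(\U)\equiv0$ and $\lhat_t(\U)\equiv0$). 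Every square-root residual is then split by AM--GM.

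At this point every term is a multiple of $\sumT\ell_t(\W_t)$, of $\sumT\gamma_t\le 2\gamma\sqrt{\revt}$, of $\tfrac{h(\U)^2}{2\eta}$, or a constant $\times\ln(2/\delta)$. Choosing $\beta$, $\eta$ of order $1/(KVL)$, and the AM--GM parameters so that the aggregate coefficient of $\sumT\ell_t(\W_t)$ is nonpositive makes those terms drop, leaving a surplus of order $KV(Lh(\U)^2+\lmax\ln(1/\delta))$ plus the constant $\asymp(K{+}1)\ln(2/\delta)$; substituting the two branches of $V$ turns the surplus into $\max\{\tfrac{K\rho\sqrt{\revt}}{\gamma}(Lh(\U)^2+\lmax\ln(1/\delta)),\ \tfrac{K^2}{\max\{1,|\Qset|\}}(Lh(\U)^2+\lmax\ln(1/\delta))\}$. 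Finally setting $\gamma=\sqrt{K\rho(Lh(\U)^2+5\lmax\ln(2/\delta))}$ balances $2\gamma\sqrt{\revt}$ against the first branch, giving the $\sqrt{K\rho T(\cdots)}$ term (via $\revt\le T$) and leaving the second branch as the $K^2(\cdots)/\max\{1,|\Qset|\}$ term, so the outer maximum appears exactly; a union bound over the $O(1)$ uses of Lemma~\ref{lem: freedman} at level $\delta/2$ produces the $\ln(2/\delta)$. For the separable bound, Step~(iii) is vacuous and the negative $-\tfrac1K\sumT\ell_t(\W_t)$ is not needed for cancellation; instead one proves the high-probability analogue of \eqref{eq:algAtrick sepa} from \eqref{eq:algA requirement}, $\|\ghat_t\|^2\le 2Lv_t^2\ell_t(\W_t)$, $v_t^2\le Vv_t$, and one more Freedman step, obtaining $\sumT\ell_t(\W_t)\lesssim V(Lh(\U)^2+\lmax\ln(1/\delta))$, plugs this into the residual $\sqrt{V\lmax\ln(1/\delta)\sumT\ell_t(\W_t)}$, and re-tunes $\gamma=\sqrt{\rho\big((K{+}2)\lmax\ln(2/\delta)+Lh(\U)^2\big)}$.

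The main obstacle is exactly the high-probability control in the second paragraph: since $v_t$ can be as large as $V\asymp\rho\sqrt{T}/\gamma$, crude concentration is useless, and one must exploit that the predictable quadratic variation $\sumT\E_t[(v_t\ell_t(\W_t))^2]\le V\lmax\sumT\ell_t(\W_t)$ is small relative to the per-step range; this is what forces $\lmax$ into the bound and why, unlike in the full-information Theorem~\ref{th:fullinfo hp}, $\lmax$ cannot be removed. A secondary nuisance, inherited from the expectation proof, is that $V$ depends on $\gamma_T=\gamma/\sqrt{\revt}$, so the tuning of $\gamma$ is mildly self-referential and the two regimes of $V$ must be carried in parallel throughout.
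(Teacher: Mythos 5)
Your outline correctly identifies the three ingredients (Lemma~\ref{lem: surrogate gap}, Lemma~\ref{lem: freedman}, the deterministic cap on $\vmax$) and the separable branch is essentially right, but there is a concrete gap in the non-separable branch. When you concentrate $\sumT v_t\ell_t(\U)$ around $\sumT\ell_t(\U)$ in step (iii), the predictable variance is of order $\Vmax\lmax\sumT\ell_t(\U)$, so Freedman plus AM--GM leaves behind a term of the form $\tfrac{\eta'}{K}\sumT\ell_t(\U)$ in addition to the constants. This contradicts your claim that ``at this point every term is a multiple of $\sumT\ell_t(\W_t)$, of $\sumT\gamma_t$, of $h(\U)^2/(2\eta)$, or a constant $\times\ln(2/\delta)$'': the $\ell_t(\U)$-residual is \emph{not} of that form, and it cannot be cancelled by the negative $-\tfrac1K\sumT\ell_t(\W_t)$ gap. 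Left untreated it inflates the coefficient of $\sumT\ell_t(\U)$ above $1$, which is incompatible with the stated surrogate-regret form.

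The paper closes this gap with a case split on whether $\sumT\ell_t(\U)\le\sumT\ell_t(\W_t)$. If yes, the $\tfrac{\eta'}{K}\sumT\ell_t(\U)$ surcharge is dominated by $\tfrac{\eta'}{K}\sumT\ell_t(\W_t)$ and joins the other $\ell_t(\W_t)$-coefficients, which are then tuned to zero by $\eta'=\tfrac14$. If no, the estimator concentration is skipped entirely: one simply writes $\sumT\zopt\le\sumT\big(\tfrac{K-1}{K}\ell_t(\W_t)+\gamma_t\big)\le\sumT\ell_t(\W_t)+\sumT\gamma_t$ and uses $\sumT\ell_t(\W_t)<\sumT\ell_t(\U)$, so only the $z_t$-concentration is needed (see~\eqref{eq:hp final 2}). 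You should add this dichotomy to your plan.

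A secondary, stylistic difference: instead of running three separate Freedman applications (on $v_t\ell_t(\W_t)$ twice and on $v_t\ell_t(\U)$), the paper folds everything into a single martingale $v_tr_t-r_t$ with $r_t=\ell_t(\U)-\tfrac{K-\eta'}{K}\ell_t(\W_t)$, after bounding $v_t^2\le\Vmax v_t$ \emph{inside} Lemma~\ref{lem: surrogate gap} before invoking concentration. This yields a cleaner union bound (two events at level $\delta/2$ instead of three or four) and better constants, but your decomposition would also work once the case split above is added. Also note a small inaccuracy in your description of the separable branch: the analogue of~\eqref{eq:algAtrick sepa} that bounds $\sumT\lhat_t(\W_t)\le 2\Vmax Lh(\U)^2$ is \emph{deterministic} (it follows from the $\doma{A}$-guarantee in~\eqref{eq:algA requirement} alone); the only Freedman step there is for $\sumT\big(\ell_t(\W_t)-v_t\ell_t(\W_t)\big)$, not for the OCO regret itself.
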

\begin{proof}
Before starting, we find a deterministic upper bound on the right-hand side of~(\ref{eq:vmax}) in the proof of Theorem \ref{th:expectation bound}. First, consider $\gamma_T$. By definition it depends on $|\{t: y^*_t \notin \Qset\}|$, which is random, however for any realization we can exploit the trivial bound $|\{t: y^*_t \notin \Qset\}|\le T$ to argue that $\gamma_T \ge \min\left\{\frac 12, \frac{\gamma}{\sqrt{T}}\right\}$. Furthermore, if $\rho > 1$ then $|\Qset| = 0$ which means that if $\half < \frac{\gamma}{\sqrt{T}}$ then $\vmax \leq 2K$. On the other hand, if $\rho = 1$ then $|\Qset| \geq 1$ which means that if $\half < \frac{\gamma}{\sqrt{T}}$ then $\vmax \leq \frac{2K}{|\Qset|}$. With that in mind, we can further bound equation \eqref{eq:vmax}:
\begin{equation}
\label{eq:vmax_det}
    \vmax \leq \max\left\{\frac{\rho}{\gamma_T}, \frac{K}{\max\{1, |\Qset|\}}\right\} \le \max\left\{\frac{\rho\sqrt{T}}{\gamma}, \frac{2K}{\max\{1,|\Qset|\}} \right\}= \Vmax
\end{equation}
As a first step in the actual proof, we study the concentration of the random variables $\zot$ around their means $\zopt$. In order to do so, consider their differences
$z_t = \zot - \zopt$, 
which have zero mean and are bounded in $[-1,1]$. By Lemma \ref{lem: upper bound zo loss} we have
\[
    \E_t\big[z_t^2\big] \le \E_t\big[\zot\big] \le \left(\frac{K-1}{K} \ell_t(\W_t) + \gamma_t\right).
\]
Thus, we can use Lemma \ref{lem: freedman} and, with probability at least $1-\delta'$ we have that
\begin{equation}\label{eq:hp zero-one to surrogate}
    \begin{split}
    \sumT z_t
& \le
    \inf_{\eta > 0}\left\{ \frac{3\ln(1/\delta')}{2\eta} + \frac{\eta}{2}\sumT \left(\frac{K-1}{K} \ell_t(\W_t) + \gamma_t\right)\right\} + 2 \ln\frac{1}{\delta'}
\\&\le
    \frac{3(K-1)\ln(1/\delta')}{4\eta'} + \sumT \left(\frac{\eta'}{K} \ell_t(\W_t) + \frac{\eta'}{K-1}\gamma_t\right) + 2 \ln\frac{1}{\delta'},  
    \end{split}
\end{equation}
where last inequality follows by scaling the argument of the infimum $\eta = \frac{2\eta'}{K-1}$, thus the inequality holds for all $\eta'>0.$
Similarly, we can argue about the concentration of $v_tr_t$ around $r_t$, where $r_t = \ell_t(\U) - \frac{K-\eta'}{K} \ell_t(\W_t)$. Note that $\E_t[v_tr_t-r_t]=0$ and $|v_tr_t - r_t| \leq 2\lmax \Vmax$. Moreover
\[
\E_t[(v_tr_t - r_t)^2] \le \E_t\big[(v_t r_t)^2\big] \le 2\Vmax \lmax |r_t| \le 2\Vmax \lmax\left( \ell_t(\W_t) + \ell_t(\U)\right).
\]
We can finally apply Lemma \ref{lem: freedman} on $v_t r_t - r_t$. Therefore, with probability at least $1-\delta'$ it holds that
\begin{equation}\label{eq: hp vt sur to sur}
    \begin{split}
    \sumT (v_t r_t - r_t)
    & \le \sqrt{6\ln\frac{1}{\delta'} \sumT\Vmax \lmax\left( \ell_t(\W_t) + \ell_t(\U)\right)} + 4 \Vmax \lmax \ln\frac{1}{\delta'} \\
    & \le \frac{3 \Vmax \lmax K }{2\eta'}\ln\frac{1}{\delta'} + \frac{\eta'}{K} \sumT  \big(\ell_t(\W_t) + \ell_t(\U)\big) + 4 \Vmax \lmax \ln\frac{1}{\delta'},
    \end{split}
\end{equation}
where the inequality holds for all $\eta'>0.$

Choosing $\delta'=\frac \delta 2$, we can conclude that both (\ref{eq:hp zero-one to surrogate}) and (\ref{eq: hp vt sur to sur}) hold with probability at least $1-\delta$, for any $\eta'>0$.
The rest of the proof consists then in showing that (\ref{eq:hp zero-one to surrogate}) and (\ref{eq: hp vt sur to sur}) deterministically imply the claimed bound. In particular, we study two different cases, i.e., when $\sumT \ell_t(\U) > \sumT \ell_t(\W_t)$ and its converse.

We first consider $\sumT \ell_t(\U) \le \sumT \ell_t(\W_t)$. By Lemma \ref{lem: surrogate gap} we find that for any $\eta' \in (0, 1]$
\begin{equation*}
    \begin{split}
    \sumT &\zopt -\sumT \gamma_t \\
    &\le \sumT v_t \ell_t(\U) + \inf_{\eta > 0}  \bigg\{ \frac{ h(\U)^2}{2\eta} + \sumT \left(\frac{K-1}{K}\ell_t(\W_t) - \vt \ell_t(\W_t) + \eta \vt^2 L \ell_t(\W_t) \right)\bigg\} \\
    &\le \sumT v_t \ell_t(\U) + \inf_{\eta > 0}  \bigg\{ \frac{ h(\U)^2}{2\eta} + \sumT \left(\frac{K-1}{K}\ell_t(\W_t) - \vt \ell_t(\W_t) + \eta \Vmax \vt L \ell_t(\W_t) \right)\bigg\} \\
    &\le \sumT v_t \ell_t(\U) +  \frac{ K L \Vmax h(\U)^2}{2\eta'} + \sumT \left(\frac{K-1}{K}\ell_t(\W_t) - \frac{K-\eta'}{K} \vt \ell_t(\W_t) \right) \\
    &= \sumT \ell_t(\U)  + \sumT \left(v_t r_t -r_t\right) + \frac{ K L \Vmax h(\U)^2}{2\eta'} + \sumT \frac{\eta'-1}{K}\ell_t(\W_t),
    \end{split}
\end{equation*}
where we have scaled down $\eta' = KL\Vmax\eta$. Substituting in~(\ref{eq: hp vt sur to sur}), we get
\begin{align}
\notag
    \sumT \zopt &\le \sumT \gamma_t + \left(1+\frac{\eta'}K\right)\sumT \ell_t(\U) + \frac{ K L \Vmax h(\U)^2}{2\eta'} \\
    \label{eq:aux1}
    &+\sumT \frac{2\eta'-1}{K}\ell_t(\W_t) + \frac{3 \Vmax \lmax K }{2\eta'}\ln\frac{1}{\delta'} + 4 \Vmax \lmax \ln\frac{1}{\delta'}.
\end{align}
Equations (\ref{eq:hp zero-one to surrogate}) and (\ref{eq:aux1}) are all the ingredients we need to conclude the first case, in fact:
\begin{align*}
    &\sumT \zot - \ell_t(\U)\\
    &\le \sumT \left(\zot - \zopt\right) + \sumT \zopt - \ell_t(\U)\\
    &\le \frac{4\eta'-1}{K} \sumT \ell_t(W_t) + \left(1+\frac{\eta'}{K-1}\right)\sumT \gamma_t + 2 \ln\frac{1}{\delta'}\\
    & + \frac{3(K-1)}{4\eta'} \ln\frac{1}{\delta'} + \frac{LK\Vmax h(\U)^2}{2\eta'}+ 4 \Vmax \lmax  \ln\frac{1}{\delta'} + \frac{3\Vmax\lmax K}{2\eta'}\ln\frac{1}{\delta'}\\
    &\le \frac{5}{4}\sumT \gamma_t+ (3K+1) \ln\frac{1}{\delta'} + 2 \Vmax K\left(Lh(\U)^2 + 5\lmax \ln\frac{1}{\delta'}\right),
\end{align*}
where in the last step make the substitution $\eta'=\frac{1}{4}$.
Now, we have that $\sumT \gamma_t \leq 2 \gamma \sqrt{T}$ and hence we obtain
\begin{equation}\label{eq:hp final 1}
    \begin{split}
    \sumT \zot &- \sumT \ell_t(\U) \\
    &\le 3 \gamma \sqrt{T} + (3K+1) \ln\frac{1}{\delta'} \\ 
    &+ \max\left\{\frac{\rho 2 K \sqrt{T}}{\gamma}, \frac{4K^2}{\max\{1,|\Qset|\}} \right\}
    \left(Lh(\U)^2 + 5\lmax \ln\frac{1}{\delta'}\right) \\
    &\le (3K+1) \ln\frac{1}{\delta'} \\
    &+\max\left\{5\sqrt{K\rho T\left(Lh(\U)^2 + 5\lmax \ln\frac{1}{\delta'}\right)}, \frac{7 K^2\big(Lh(\U)^2 + 5\lmax \ln(1/\delta')\big)}{\max\{1, |\Qset|\}}\right\}.
    \end{split}
\end{equation}
Consider now the second case, where $\sumT \ell_t(\U) > \sumT \ell_t(\W_t)$. We are still assuming~(\ref{eq:hp zero-one to surrogate}) and~(\ref{eq: hp vt sur to sur}) both hold, even though in this case we need only~(\ref{eq:hp zero-one to surrogate}). The $\sumT \zopt$ term is in fact upper bounded using Lemma~\ref{lem: upper bound zo loss}. We have:
\begin{equation}\label{eq:hp final 2}
    \begin{split}
    \sumT& \big(\zot - \ell_t(\U)\big) \\
    &= \sumT (\zot-\zopt) + \sumT \zopt -\sumT \ell_t(\U)\\
    &\le \sqrt{3\ln\frac{1}{\delta'} \sumT \left(\frac{K-1}{K} \ell_t(\W_t) + \gamma_t\right)} + 2 \ln\frac{1}{\delta'} + \sumT \zopt - \sumT \ell_t(\U)  \\
    & \le \sqrt{3\ln\frac{1}{\delta'} \sumT \left(\ell_t(\W_t) + \gamma_t\right)} + 2 \ln\frac{1}{\delta'} +\sumT \frac{K-1}{K} \ell_t(\W_t) + \gamma_t -\sumT \ell_t(\W_t) \\
    & \le \half\inf_{\eta > 0}\left\{\frac{3\ln\frac{1}{\delta'}}{\eta} + \eta \sumT \left(\ell_t(\W_t) + \gamma_t\right)\right\} + 2 \ln\frac{1}{\delta'} +\sumT \frac{K-1}{K} \ell_t(\W_t) + \gamma_t -\sumT \ell_t(\W_t) \\
    & \le \frac{3}{4} K \ln\frac{1}{\delta'} + \frac{5}{2} \sumT \gamma_t + 2 \ln\frac{1}{\delta'}\\
    & \le 3K \ln\frac{1}{\delta'} + 5 \gamma \sqrt{T}\\
    &=3K\ln\frac{1}{\delta'} + \sqrt{K\rho T\left(Lh(\U)^2 + 5\lmax \ln\frac{1}{\delta'}\right)}.
    \end{split}
\end{equation}
The first inequality is due to~(\ref{eq:hp zero-one to surrogate}), while the second one to Lemma \ref{lem: upper bound zo loss}. The third inequality follows from choosing $\eta = \frac{1}{K}$ and finally, the last equality follows by substituting the stated $\gamma.$

In order to prove the second statement, we assume there exists a $\U \in \doma{W}$ such that $\sumT \ell_t(\U) = 0$ for all realizations of the learners' predictions. By the guarantee on $\Aset$ and inequality~\eqref{eq:gradient condition} we have
\begin{equation*}
    \begin{split}
    \sumT \lhat_t(\W_t) & = \sumT \Big(\lhat_t(\W_t) - \lhat_t(\U)\Big)   \\
    & \le  \inf_{\eta > 0}  \bigg\{ \frac{  h(\U)^2}{2\eta} + \frac{\eta}{2} \sumT v_t^2\|\nabla \ell_t(\W_t)\|^2 \bigg\} \\
    & \le  \inf_{\eta > 0}  \bigg\{ \frac{ h(\U)^2}{2\eta} + \eta \Vmax L \sumT v_t \ell_t(\W_t) \bigg\} \\
    & \le \Vmax L h(\U)^2   + \frac{1}{2} \sumT v_t \ell_t(\W_t),
    \end{split}
\end{equation*}
which, after recalling that $\lhat_t(\W_t) = v_t \ell_t(\W_t)$, and reordering, gives us
\begin{equation}\label{eq: hp algA sepa trick}
  \sumT \lhat_t(\W_t) \leq  2\Vmax L h(\U)^2.
\end{equation}
By Lemma \ref{lem: freedman}, we have that with probability at least $1 - \delta'$
\begin{equation}\label{eq: hp sepa vt sur to sur}
\begin{split}
    \sumT \Big(\ell_t(\W_t) - v_t\ell_t(\W_t)\Big) & \le \sqrt{3\lmax\Vmax\ln\frac{1}{\delta'}\sumT\ell_t(\W_t)} + 2\lmax\Vmax\ln\frac{1}{\delta'} \\
    & \le \frac{3 K \lmax\Vmax}{4\eta'}\ln\frac{1}{\delta'} + \frac{\eta'}{K}\sumT\ell_t(\W_t) + 2\lmax\Vmax\ln\frac{1}{\delta'},
\end{split}
\end{equation}
for all $\eta'>0$.
By equation \eqref{eq:hp zero-one to surrogate}, with probability at least $1 - \delta'$ we have that for all $\eta'>0$
\begin{align*}
    \sumT \zot & = \sumT \zopt + \sumT\left(\zot - \zopt\right) \\
    & \le \sumT \zopt + \frac{3(K-1)\ln(1/\delta')}{4\eta'} + 2 \ln(1/\delta') \\
    & ~~ + \sumT \left(\frac{\eta'}{K} \ell_t(\W_t) + \frac{\eta'}{K-1}\gamma_t\right). 
\end{align*}
We continue by using Lemma \ref{lem: upper bound zo loss} to bound $\zopt$:
\begin{align*}
    \sumT \zot \leq &\sumT \frac{K-1}{K} \ell_t(\W_t) + (1 + \frac{\eta'}{K-1})\sumT \gamma_t + \frac{3(K-1)\ln(1/\delta')}{4\eta'} \\
    & + \sumT \frac{\eta'}{K} \ell_t(\W_t)  + 2 \ln(1/\delta') \\
    = & (1 + \frac{\eta'}{K-1})\sumT \gamma_t + \frac{3(K-1)\ln(1/\delta')}{4\eta'} + \frac{\eta' - 1}{K} \ell_t(\W_t)  \\
    & + 2 \ln(1/\delta')  + \sumT v_t\ell_t(\W_t) + \sumT \left(\ell_t(\W_t) - v_t\ell_t(\W_t)\right).
\end{align*}
By equation \eqref{eq: hp sepa vt sur to sur} and the union bound, with probability at least $1 - 2 \delta'$:
\begin{align*}
    \sumT \zot \leq &\sumT \frac{K-1}{K} \ell_t(\W_t) + (1 + \frac{\eta'}{K-1})\sumT \gamma_t + \frac{3(K-1)\ln(1/\delta')}{4\eta'} \\
    & + \sumT \frac{\eta'}{K} \ell_t(\W_t)  + 2 \ln(1/\delta') + 2\lmax\Vmax\ln(1/\delta') \\
    = & (1 + \frac{\eta'}{K-1})\sumT \gamma_t + \frac{3 K \lmax\Vmax\ln(1/\delta')}{4\eta'} + \frac{2\eta' - 1}{K} \sumT \ell_t(\W_t)  \\
    & + (\frac{3}{4\eta'} (K-1) + \half) \ln(1/\delta')  + \sumT v_t\ell_t(\W_t) + 2\lmax\Vmax\ln(1/\delta').
\end{align*}
We use equation \eqref{eq: hp algA sepa trick}, $\sumT \gamma_t \leq 2\gamma\sqrt{T}$, set $\eta' = \half$, set $\delta' = \half \delta$, and the definition of $\Vmax$ in equation \eqref{eq:vmax_det} to continue: 
\begin{align*}
    \sumT \zot & \le \frac{3}{2}\sumT \gamma_t + 2 \Vmax ((K + 2) \lmax \ln(2/\delta) + L h(\U)^2) + 2 \ln(2/\delta) \\
    & \le 2 \ln(2/\delta) + 5 \max\bigg\{\sqrt{((K + 2) \lmax \ln(2/\delta) + L h(\U)^2) \rho T}, \\
    &  ~~~~ \frac{2 K}{\max\{1, |\Qset|\}}((K + 2) \lmax \ln(2/\delta) + L h(\U)^2)\bigg\},
\end{align*}
which holds with probability at least $1 - \delta$ and completes the proof of the second statement of Theorem \ref{th:high probability}. 
\end{proof}
We now restate Theorem \ref{th:fullinfo hp}, after which we prove it. 
\thfullinfohp*
\begin{proof}[Proof of Theorem \ref{th:fullinfo hp}]
Denote by $z_t = \left(\zot - \zopt\right)$. By Lemma \ref{lem: freedman}, with probability at least $1 - \delta$ we have that \begin{align*}
    \sumT z_t \le \sqrt{3 \ln(1/\delta) \sumT \E_t\left[z_t^2\right]} + 2 \ln(1/\delta)
    = \inf_{\eta > 0} \left\{\frac{3 \ln(1/\delta)}{2 \eta} + \frac{\eta}{2}\sumT \E_t\left[z_t^2\right]\right\} + 2 \ln(1/\delta).
\end{align*}
Since the variance is bounded by the second moment, we have that 
\begin{equation*}
    \E_t\left[z_t^2\right] \leq \E_t\left[\zot\right] \leq \frac{K-1}{K} \ell_t(\W_t),
\end{equation*}
where the last inequality is due to Lemma \ref{lem: upper bound zo loss}. By applying Lemma \ref{lem: surrogate gap}, and recalling that $\gamma_t=0$ and $v_t = 1$ because we are in the full information setting, we find that with probability at least $1 - \delta$
\begin{align*}
 \sumT \zot & \le \sumT \ell_t(\U)  + \inf_{\eta > 0} \left\{ \frac{h(\U)^2}{2\eta} + \sumT \left(\eta L - \frac{1}{2K}\right)\ell_t(\W_t)\right\} \\
 & + \inf_{\eta > 0} \left\{\frac{3 \ln(1/\delta)}{2 \eta} + \sumT \left(\frac{\eta}{2}\frac{K-1}{K} - \frac{1}{2K} \right)\ell_t(\W_t)\right\} + 2 \ln(1/\delta) \\
 & \le \sumT \ell_t(\U) + K L h(\U)^2 + \left(\frac{3}{2}K+\frac{1}{2}\right)\ln(1/\delta),
\end{align*}
which completes the proof in the non-separable case. In the separable case, when there exists a $\U \in \doma{W}$ such that $\sumT \ell_t(\U) = 0$, we can use Lemma~\ref{lem: upper bound zo loss} to show that, with probability at least $1 - \delta$
\begin{align*}
    \sumT \zot & \le \inf_{\eta > 0} \left\{\frac{3 \ln(1/\delta)}{2 \eta} + \sumT \frac{\eta}{2}\frac{K-1}{K}\ell_t(\W_t)\right\} + 2 \ln(1/\delta) + \sumT \zopt \\
    & \le \inf_{\eta > 0} \left\{\frac{3 \ln(1/\delta)}{2 \eta} + \sumT \frac{\eta}{2}\frac{K-1}{K}\ell_t(\W_t)\right\} + 2 \ln(1/\delta) + \frac{K-1}{K} \sumT \ell_t(\W_t) \\
    & \le  \frac{11}{4} \ln(1/\delta) + 2 \sumT \ell_t(\W_t) \\
    & \le  \frac{11}{4} \ln(1/\delta) + 4 L h(\U)^2,
\end{align*}
where in the last inequality we used equation \eqref{eq: hp algA sepa trick} with $\Vmax = 1$.
\end{proof}

\section{Details of Section \ref{sec:lower bounds} (Lower Bounds)} \label{app:lower bounds}

\begin{theorem}\label{th:apple lower bound}
In the spam filtering classification setting with smooth hinge loss, the surrogate regret of any (possibly randomized) algorithm satisfies
\[
    \Eb{\sumT \id[y_t' \neq y_t]} = \sumT \ell_t(\bUhat) + \Omega\big(\!\sqrt{T}\big)
\]
for some label sequence $y_1,\ldots,y_T \in \{1,2\}$, for some sequence of feature vectors $\x_t$ such that $\norm{\x_t}_2 = \sqrt{2}$ for all $t$, and for some $\bUhat$ such that $\norm{\bUhat}_2 \le \sqrt{5}$.
\end{theorem}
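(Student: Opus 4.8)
The plan is to prove the bound via Yao's minimax principle, instantiated on the fixed spam‑filtering feedback graph, where class $1=s$ is the blind class (playing it reveals nothing) and class $2=n$ is the revealing class (playing it reveals $y_t$ regardless). It suffices to exhibit an oblivious randomized adversary --- a distribution $\mathcal{D}$ over instance sequences $(\x_1,y_1),\dots,(\x_T,y_T)$ with $\norm{\x_t}_2=\sqrt{2}$, together with a (possibly sequence‑dependent) reference matrix $\bUhat$ with $\norm{\bUhat}_2\le\sqrt{5}$ --- such that every \emph{deterministic} algorithm satisfies $\E_{\mathcal{D}}\!\big[\sum_t\id[y_t'\ne y_t]-\sum_t\ell_t(\bUhat)\big]=\Omega(\sqrt{T})$; averaging then produces a single fixed sequence witnessing the claimed inequality against any randomized algorithm, which is all the oblivious‑adversary assumption allows. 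I will take $\mathcal{D}$ supported on \emph{separable} instances, so that $\sum_t\ell_t(\bUhat)=0$ throughout the support and the goal reduces to: any algorithm makes $\Omega(\sqrt{T})$ expected mistakes on $\mathcal{D}$ even though each realized sequence is perfectly classified with margin by a predictor of Frobenius norm $\le\sqrt{5}$ using feature vectors of norm $\sqrt{2}$.

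Following the strongly‑adaptive regret lower bound of \citet{daniely2015strongly}, $\mathcal{D}$ will encode a hidden state --- a random change‑point (and a random orientation bit) --- realized by a fixed linear margin predictor once the feature vectors are chosen as a small fixed family of ``contexts'' of norm $\sqrt{2}$ together with a time‑encoding coordinate, so that flipping the hidden state only relabels a block of rounds while keeping the instance separable within the $\sqrt{5}$ budget (this is where the concrete constants $\sqrt{2},\sqrt{5}$ come from). The design ensures that on ``non‑critical'' rounds the true label is $s$ (optimal prediction: the blind class), that on the $\Theta(\sqrt{T})$‑long ``critical'' block the true label is $n$, and --- crucially --- that the hidden state is observable to the learner \emph{only} through plays of the revealing action, since the blind class produces no feedback at all. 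With this in place I carry out the dichotomy the authors describe: fix a deterministic algorithm and let $N=\E_{\mathcal{D}}[\,\#\{t:y_t'=n,\ y_t=s\}\,]$ be its expected amount of ``wasted exploration''. If $N\ge c\sqrt{T}$, those rounds are all mistakes and, since $\sum_t\ell_t(\bUhat)=0$, the surrogate regret is already $\Omega(\sqrt{T})$. If $N<c\sqrt{T}$, a change‑of‑measure (Le Cam‑type) argument shows the learner's behaviour on the $\Theta(\sqrt{T})$ critical rounds is essentially independent of the hidden state --- the total information it has acquired is controlled by its few revealing plays, the blind plays contributing exactly zero --- so it mispredicts a constant fraction of them, again giving $\Omega(\sqrt{T})$ mistakes. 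The scale $\sqrt{T}$ is exactly the balance point: probing the revealing action with frequency $p$ costs $\approx pT$ wasted‑exploration mistakes but localizes the hidden state after $\approx 1/p$ critical rounds (each a mistake if the learner plays the blind class there), and $pT=1/p$ at $p=1/\sqrt{T}$.

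The main obstacle, I expect, is engineering the construction so that it is \emph{simultaneously} (i) realizable with a bounded‑norm margin predictor on every sequence in the support --- which forces the careful choice of the fixed contexts and of how the change‑point is embedded (in particular a time coordinate, so that the ``same'' context can be labelled $s$ or $n$ in different sequences while staying linearly separable) --- and (ii) genuinely information‑hiding, so that a learner that inspects the observed feature vectors cannot shortcut the detection without paying. The second, quantitative obstacle is the low‑exploration branch: one must convert ``few revealing plays'' into ``the posterior over the hidden state barely moves'' via a KL/Fano computation whose decisive input is that the blind action has empty feedback on the relevant outcomes, so all information passes through the revealing plays, whose count is $N+o(T)$. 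Once both are handled, the $\Omega(\sqrt{T})$ bound follows, and combined with the full‑information part (Theorem~\ref{th:full info lower bound}) it yields Corollary~\ref{cor:lower}.
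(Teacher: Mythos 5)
Your plan and the paper's proof share the same high-level skeleton --- divide the horizon into $\Theta(\sqrt{T})$ blocks, give each block its own feature coordinate so that a bounded-norm comparator can perfectly fit any single-block relabeling, and then argue a dichotomy on how often the learner plays the revealing class --- and you even cite the same ingredient (\citet{daniely2015strongly}). Where you diverge is in the machinery used to close the low-exploration branch, and the paper's route is noticeably lighter. You propose Yao's minimax principle over a random change-point plus a Le Cam / KL / Fano computation to show the posterior barely moves. The paper instead fixes a (possibly randomized) algorithm $A$, runs it against the all-$s$ label sequence, and shows by Markov's inequality that in \emph{some} block $T_j$ the probability $A$ ever plays the revealing action is at most $1/2$; it then flips the labels in $T_j$ only and observes that, because the blind action yields \emph{zero} feedback and all labels outside $T_j$ are unchanged, the two label sequences induce \emph{identical} feedback trajectories on the event ``$A$ never plays the revealing action in $T_j$.'' This exact coupling replaces your approximate indistinguishability bound: there is no KL to control because the observations literally coincide on that event, so the probability is preserved and $A$ mispredicts every round of $T_j$ with probability at least $1/2$. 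The paper also lets the adversary pick $j$ \emph{after} seeing the algorithm (still oblivious to its random seed), eliminating the need to randomize the change-point at all, and then constructs an explicit $\bUhat$ (a unit weight on the shared coordinate for class $1$, a weight of $2$ on the $T_j$-coordinate for class $2$) achieving zero surrogate loss with $\norm{\bUhat}_2 = \sqrt{5}$ exactly. So your proposal is directionally correct and identifies the genuine obstacles (the time-indexed coordinate, the need for bounded-norm realizability, and the ``all information flows through revealing plays'' principle), but it would be worth noticing that the spam-filtering graph gives you exact, not approximate, indistinguishability, which makes the information-theoretic apparatus unnecessary and shortens the argument considerably. Also note that the ``random orientation bit'' you introduce does not appear and is not needed: a single flip from $s$ to $n$ inside one block suffices.
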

\begin{proof}
We adapt an argument from \citet[Lemma~3]{daniely2015strongly}.
Let $R = \big\lfloor\sqrt{T/2}\big\rfloor$ and divide the $T$ rounds in $2R$ segments $T_1,\ldots,T_{2R}$ of size $T/(2R)$ each (without loss of generality, assume that $2R$ divides $T$). For each segment $T_i$, define the components $x_{t,z}$ of the feature vectors $\x_t$ at rounds $t \in T_i$ as follows: $x_{t,z} = 1$ if $z \in \{1,i+1\}$ and $x_{t,z} = 0$ otherwise.

Fix an algorithm $A$ and assume $y_t = 1$ for all $t$. Denote by $N_2$ the rounds in which $A$ predicts $2$. If $\Eb{|N_2|} \ge R$, then $A$ makes more than $R = \Omega\big(\!\sqrt{T}\big)$ mistakes and we are done because
\begin{align*}
    \sumT \ell_t(\bUhat)
=
	\sumT \max\Big\{\big(1 - \hat{U}^1_1 + \hat{U}^2_1\big)^2,\, 0\Big\} = 0
\end{align*}
for $\bUhat$ defined as follows: $\Uhat^1_1 = 1$, $\Uhat^1_z = 0$ for $z > 1$, and $\Uhat^2_z = 0$ for all $z$. 
Consider then $\Eb{|N_2|} \le R$. Since there are $2R$ segments, we must have that $\Eb{|N_2 \cap T_j|} \leq \frac{1}{2}$ for some $j \in [2R]$, because otherwise $\Eb{|N_2|} = \Eb{\sum_{i=1}^{2R}|N_2 \cap T_i|} > R$. Now, by Markov's inequality we have that 
$
    \bP(|N_2 \cap T_j| \geq 1) \leq \frac{1}{2}
$
which means that $A$ does not predict $2$ in segment $j$ with probability at least $\frac{1}{2}$.

Now set $y_t=2$ for all $t \in T_j$. Since we are in the spam filtering setting, if label $2$ is never predicted in segment $j$, $A$ cannot detect that the label has changed, and so it makes a mistake on each round in $T_j$, which has length $T/(2R)$. Hence
\begin{align*}
	\Eb{\sumT \id[y_t' \neq y_t]}
\ge
	\frac{T}{2R}\bP\left(\sumT \id[y_t' \neq y_t] \ge \frac{T}{2R}\right)
\ge
	\frac{T}{4R}
=
	\Omega\big(\!\sqrt{T}\big)
\end{align*}
Define a new comparator $\bUhat$ as follows: $\Uhat^1_z = 1$ if $z = 1$ and $\Uhat^1_z = 0$ otherwise, and $\Uhat^2_z = 2$ if $z = j+1$ and $\Uhat^2_z = 0$ otherwise. For the same sequence of labels $y_t$, we have that
\begin{align*}
    \sumT \ell_t(\bUhat)
=
	\!\sum_{t \in T_j} \max\Big\{\big(1 - \Uhat^2_1 - \Uhat^2_{j+1} + \Uhat^1_1 + \Uhat^1_{j+1}\big)^2, 0\Big\}
+
	\!\!\!\!\!\sum_{t \in [T] \setminus T_j}\!\!\!\!\! \max\Big\{\big(1 - \Uhat^1_1 + \Uhat^2_1\big)^2, 0\Big\}
\end{align*}
where the sums in the right-hand side are easily seen to be zero.
This concludes the proof.
\end{proof}
\begin{theorem}\label{th:full info lower bound}
Consider the full information setting with smooth hinge loss. For any integer $B\ge 2$, the surrogate regret of any (possibly randomized) algorithm satisfies
\[
    \Eb{\sumT \id[y_t' \neq y_t]} \ge \min_{\U\in\doma{W}}\sumT \ell_t(\U) + (B^2-1)(K-1) + \frac{K-1}{K}
\]
for some label sequence $y_1,\ldots,y_T \in [K]$ and for some sequence of feature vectors $\x_t$ such that $\norm{\x_t}_2 = 1$ for all $t$, where $\doma{W} = \theset{\W}{\norm{\W} \le B}$.
\end{theorem}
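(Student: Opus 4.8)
\emph{Proof plan for Theorem~\ref{th:full info lower bound}.}

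The plan is to build, for an arbitrary (possibly randomized) algorithm, a \emph{single} instance sequence $(\x_t,y_t)_{t\le T}$ with $\norm{\x_t}_2=1$ on which $\min_{\U\in\doma{W}}\sum_t\ell_t(\U)=0$ while the expected number of mistakes is at least $(B^2-1)(K-1)+\tfrac{K-1}{K}$. Since the comparator attains zero surrogate loss, this is exactly the claimed inequality, and no Yao-type reduction is needed: the adversary is oblivious but may exploit full knowledge of the algorithm when committing to the sequence in advance.

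\textbf{Uninformative feedback.} On each ``active'' round I would use a fresh orthonormal feature direction, $\x_t=\e_t$, so that the full-information feedback reveals nothing about future labels. The conditional law $\bar p_t(\cdot)$ of the prediction $y_t'$ (averaging over the learner's internal randomness, given the past) is then a function of the past \emph{labels} only, which the adversary can simulate forward while constructing the sequence. Picking $y_t$ among the least likely labels under $\bar p_t$ forces $\Pp(y_t'\neq y_t\mid\text{history})\ge\tfrac{K-1}{K}$, so each active round contributes at least $\tfrac{K-1}{K}$ to the expected mistake count.

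\textbf{A zero-loss comparator and its budget.} For an active round with $\x_t=\e_t$ and true label $c$, placing a single entry equal to $1$ in row $c$, column $t$ of $\bUhat$ gives $\inner{\bUhat^c}{\x_t}=1$ and $\inner{\bUhat^k}{\x_t}=0$ for $k\neq c$, hence margin $1$ and $\ell_t(\bUhat)=0$. Doing this for every active round, the norm of $\bUhat$ is governed by the per-class usage counts $N_c=|\{t:y_t=c\}|$, and the constraint $\norm{\bUhat}\le B$ caps each $N_c$ at $B^2$. So the number of active rounds we can afford while keeping the comparator at zero loss is of order $KB^2$; the remaining rounds are filled by repeating one easy example $(\e_0,1)$, which costs only one extra unit of budget and produces no further mistakes.

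\textbf{Assembling the bound and the main obstacle.} Using roughly $K(B^2-1)$ active rounds (so that every $N_c\le B^2-1$) plus one further active round, the per-round estimate yields expected mistakes at least $K(B^2-1)\tfrac{K-1}{K}+\tfrac{K-1}{K}=(B^2-1)(K-1)+\tfrac{K-1}{K}$, while $\min_{\U}\sum_t\ell_t(\U)=0$, and one checks $\norm{\x_t}_2=1$ and $\norm{\bUhat}\le B$ by construction. The subtle step --- the one I expect to be hardest --- is reconciling the two demands on the label sequence: the mistake bound needs each $y_t$ to lie in the set of $\bar p_t$-unlikely labels, whereas the comparator bound needs the induced counts $N_c$ to stay balanced so that no row of $\bUhat$ overflows its budget. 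I would resolve this by selecting, on each round, a label that is simultaneously $\bar p_t$-unlikely and lies in a not-yet-full row, and arguing by a pigeonhole/counting bound (the total number of active rounds being below $KB^2$ means only few rows can be full at any time) that such a choice always exists; the fact that the feedback is uninformative is precisely what prevents the learner from concentrating $\bar p_t$ so as to frustrate this selection.
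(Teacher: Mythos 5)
There is a genuine counting gap in your plan that is off by a factor of $K$, and it is exactly the obstacle the paper's proof is designed to overcome. If the comparator $\bUhat$ places a weight of $1$ in a fresh column for each active round, then $\norm{\bUhat}_2^2$ equals the \emph{total} number of active rounds (the Frobenius/vectorized $2$-norm sums the squared entries across all rows, not per row). The constraint $\norm{\bUhat}_2 \le B$ therefore caps the total number of active rounds at $B^2$, not each $N_c$ at $B^2$. With only $B^2$ zero-loss active rounds, your per-round estimate yields an expected mistake count of at most $B^2 \frac{K-1}{K}$, which for large $K$ falls short of the required $(B^2-1)(K-1)$ by roughly a factor of $K$. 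The pigeonhole argument you anticipate being "hardest" is therefore aimed at a constraint that is already irrelevantly weak; no choice of labels rescues a zero-loss comparator from this budget wall.

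The paper's resolution is to \emph{not} insist on a zero-loss comparator in the active phase. It places a weight of only $1/K$ on the true label's column (rather than $1$), so each active round costs only $1/K^2$ of budget, allowing $M = (B^2-1)K^2$ active rounds. The comparator then suffers smooth-hinge loss $(1-1/K)^2$ per active round, while uniform random labels (no adaptive selection needed) force the learner to pay $\frac{K-1}{K}$ in expectation. The per-round surrogate-regret contribution is
\[
    \frac{K-1}{K} - \Big(1 - \frac1K\Big)^2 = \frac{K-1}{K}\cdot\frac1K,
\]
and multiplying by $M = (B^2-1)K^2$ rounds gives precisely $(B^2-1)(K-1)$. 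A final active round with a full-weight comparator entry (cost $1$ of budget, zero loss for $t>M+1$) supplies the extra $\frac{K-1}{K}$. The missing idea is thus the trade-off between comparator loss and budget consumption: small comparator weights buy many more rounds at the price of a controlled surrogate loss, and the arithmetic works out favorably because the surrogate loss grows quadratically in the margin deficit while the mistake probability grows linearly.
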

\begin{proof}
We sample the labels $y_t$ uniformly at random for the first $M+1$ rounds, where $M = (B^2-1)K^2$. Then we set $y_t = y_{M+1}$ for each $t > M+1$. The feature vectors $\x_t$ have $M+1$ components. For $t=1,\ldots,M$ we set the components $x_{t,z}$ of the feature vectors $\x_t$ as $x_{t,t} = 1$ and $x_{t,z} = 0$ for $z \neq t$. For each $t \ge M + 1$, we set $x_{t,M+1} = 1$ and $x_{t,i} = 0$ for all $i=1,\ldots,M$.
We now define a comparator $\bUhat$ as follows. For each $z=1,\ldots,M$ we set $\Uhat^y_z = \frac{1}{K}$ for $y=y_t$ and $\Uhat^y_z = 0$ otherwise. Then we set $\Uhat^y_{M+1} = 1$ if $y = y_{M+1}$ and $\Uhat^y_{M+1} = 0$ otherwise.
Note that, deterministically, $\norm{\bUhat}_2^2 = 1 + \sum_{t=1}^M \big(U_t^{y_t}\big)^2 = 1 + \frac{M}{K^2} = B^2$.
Now fix any (possibly randomized) algorithm $A$.
With these choices, in the first $M$ rounds we have
\begin{align*}
	\Eb{\sum_{t = 1}^M \id[y_t' \neq y_t]} - \ell_t(\hat{\U})
=
	M\frac{K-1}{K} - M\left(1 - \frac{1}{K}\right)^2 = M\left(\frac{1}{K} - \frac{1}{K^2}\right)~.
\end{align*} 
In the next $T-M$ rounds we have
\begin{align*}
    \Eb{\sum_{t=M+1}^T \id[y_t' \neq y_t]} \ge \frac{K-1}{K}
\qquad\text{and}\qquad
	\sum_{t=M+1}^T \ell_t(\bUhat) = 0~.
\end{align*}
The above expectations are both with respect to the random draw of $y_1,\ldots,y_{M+1}$ and to $A$'s internal randomization. This implies that there exists a sequence $y_1,\ldots,y_{M+1}$ such that the two above bounds hold in expectation with respect to $A$'s internal randomization. Putting the two bounds together concludes the proof.
\end{proof}

\section{Details of Section \ref{sec:experiments} (Experiments)}\label{app:experiments}

\begin{figure}
    \centering
    \includegraphics[width = \textwidth]{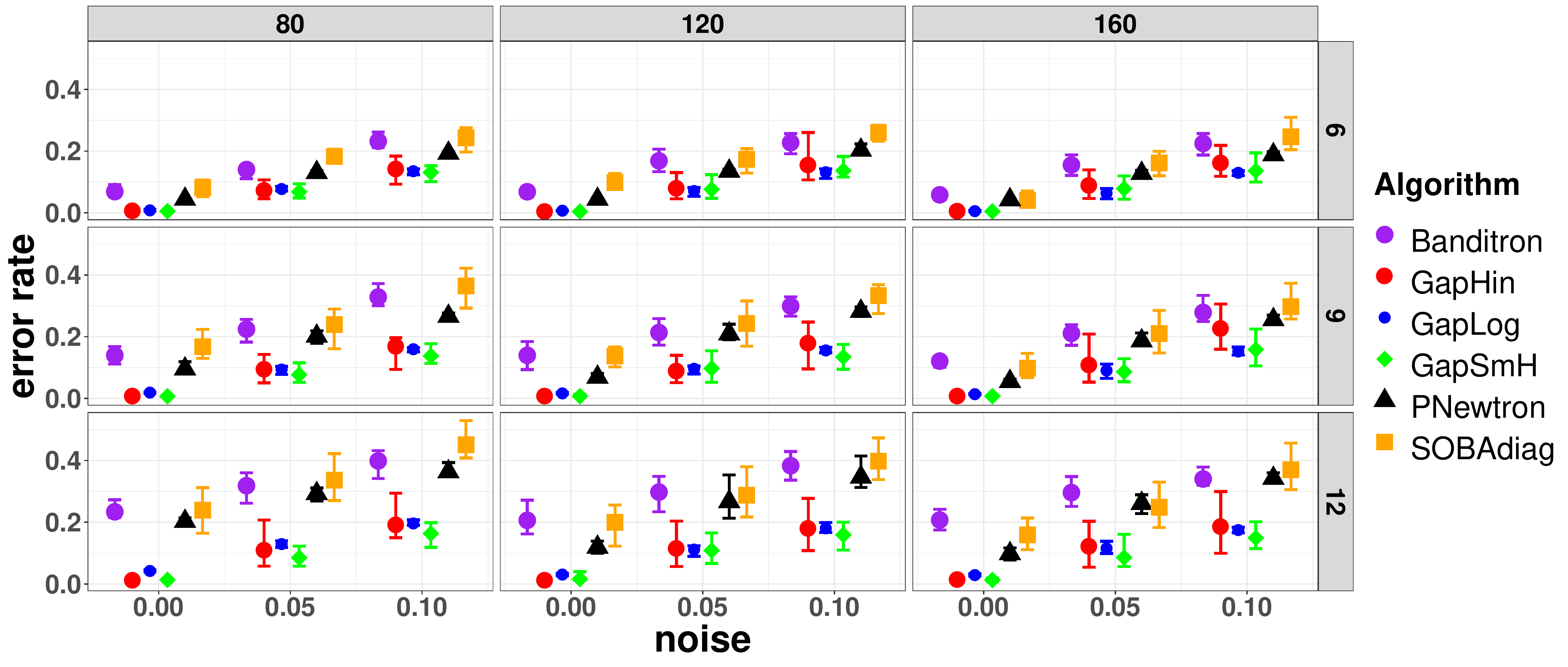}
    \caption{Results of the synthetic experiments for the bandit setting. The parameters of algorithms are set to 1, except for $T$. The rows are the different values for $K$ and the columns are the different values for $d$. The whiskers represent the minimum and maximum error rates of the ten repetitions.}
    \label{fig:SynOnlyT}
\end{figure}

\begin{figure}
    \centering
    \includegraphics[width = \textwidth]{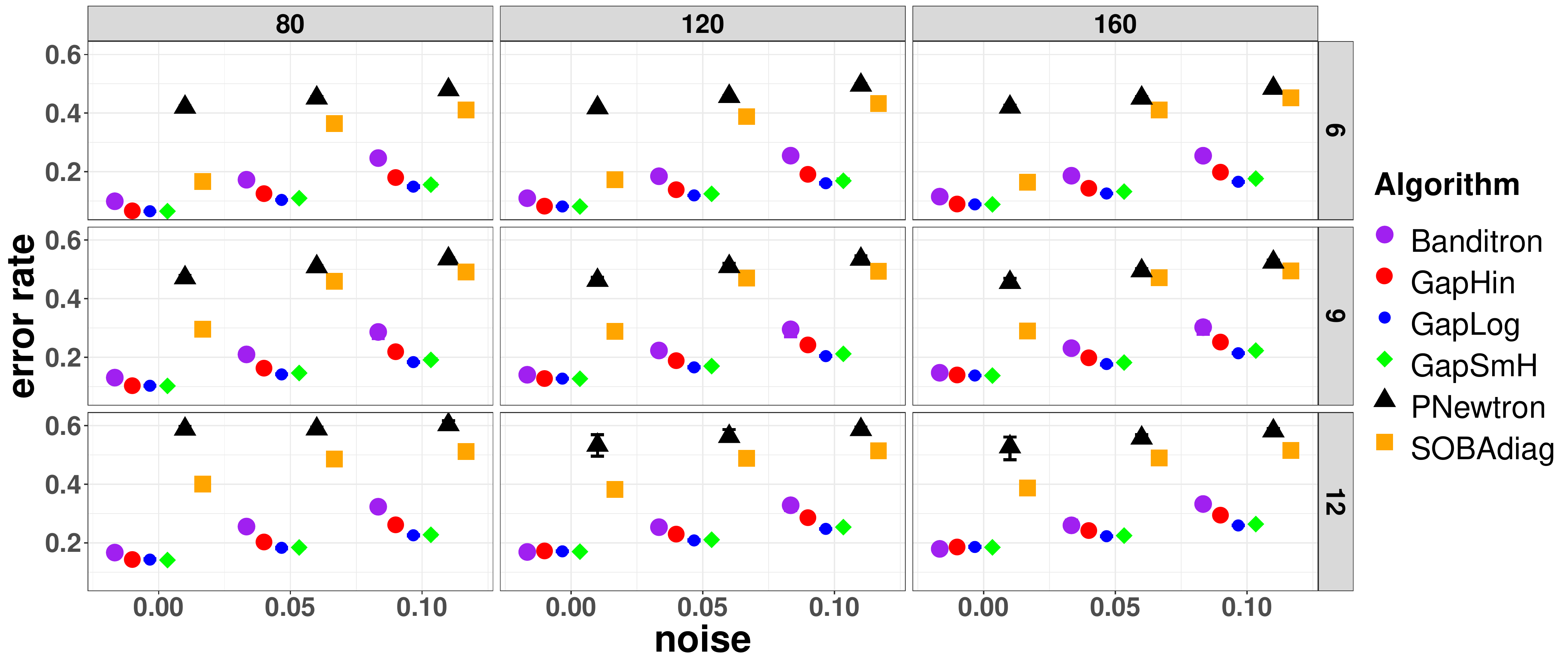}
    \caption{Results of the synthetic experiments for the bandit setting with theoretical tuning. The rows are the different values for $K$ and the columns are the different values for $d$. The whiskers represent the minimum and maximum error rates of the ten repetitions.}
    \label{fig:SynTheo}
\end{figure}

\begin{figure}
    \centering
    \includegraphics[width = \textwidth]{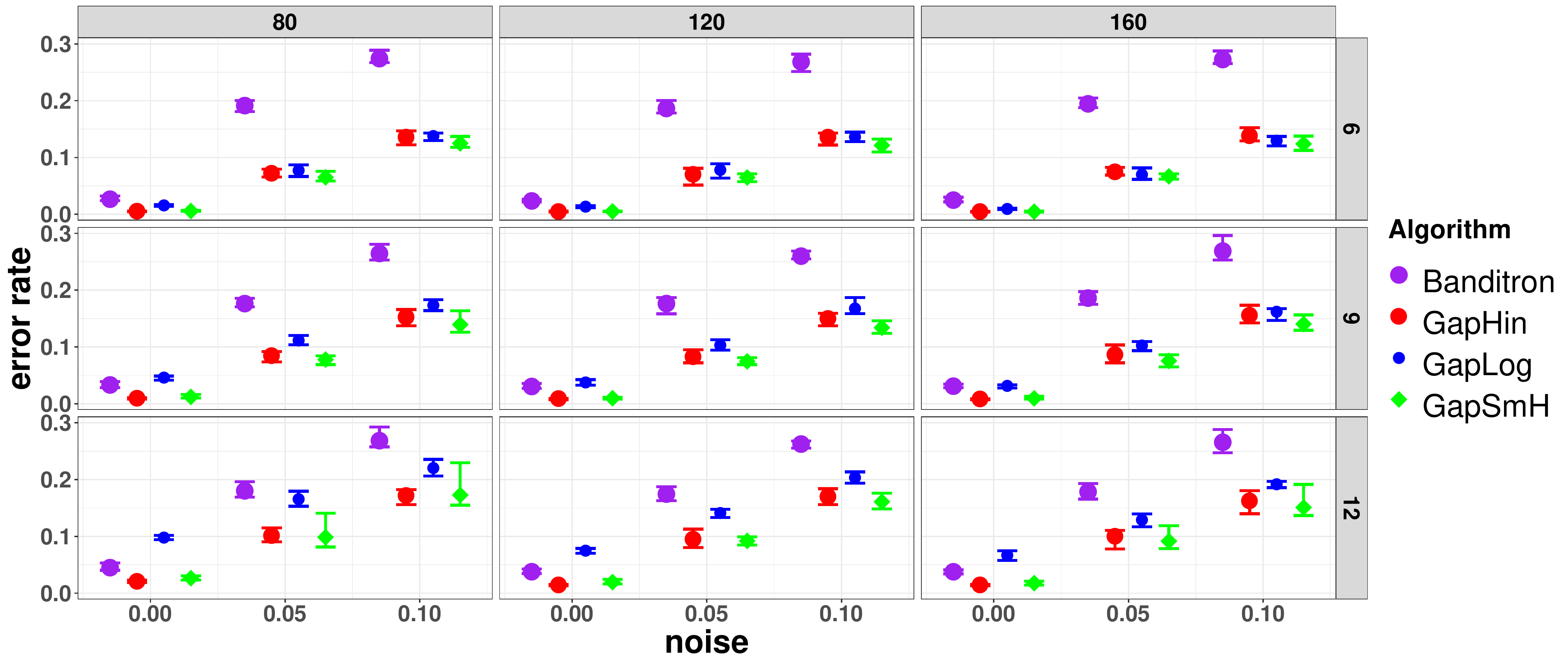}
    \caption{Results of the synthetic experiments for multiclass spam filtering. The parameters of algorithms are set to 1, except for $T$. The rows are the different values for $K$ and the columns are the different values for $d$. The whiskers represent the minimum and maximum error rates of the ten repetitions.}
    \label{fig:SynRAonlyT}
\end{figure}

\begin{figure}
    \centering
    \includegraphics[width = \textwidth]{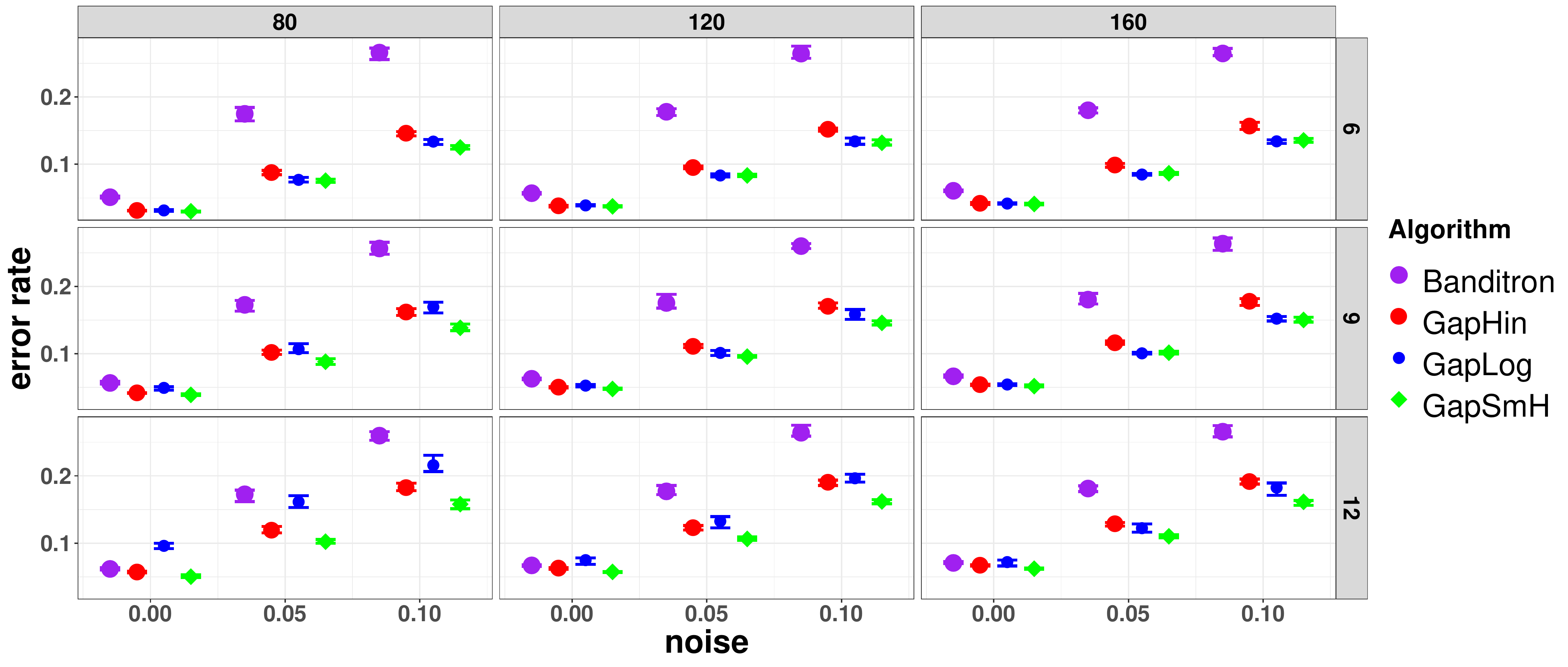}
    \caption{Results of the synthetic experiments for multiclass spam filtering with theoretical tuning. The rows are the different values for $K$ and the columns are the different values for $d$. The whiskers represent the minimum and maximum error rates of the ten repetitions.}
    \label{fig:SynRAtheo}
\end{figure}

\begin{figure}
    \centering
    \includegraphics[width = \textwidth]{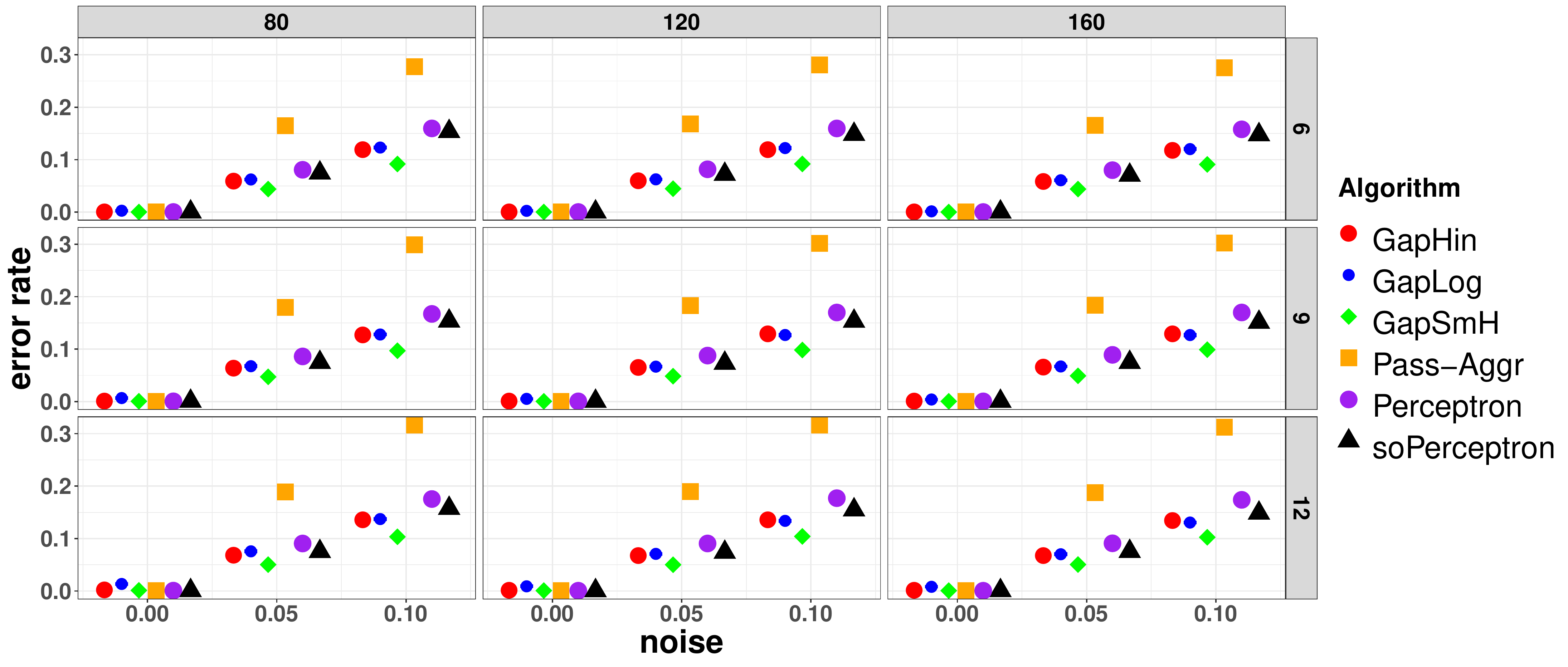}
    \caption{Results of the synthetic experiments for the full information setting. The rows are the different values for $K$ and the columns are the different values for $d$. The whiskers represent the minimum and maximum error rates of the ten repetitions.}
    \label{fig:SynFull}
\end{figure}

\begin{figure}
    \centering
    \includegraphics[width = \textwidth]{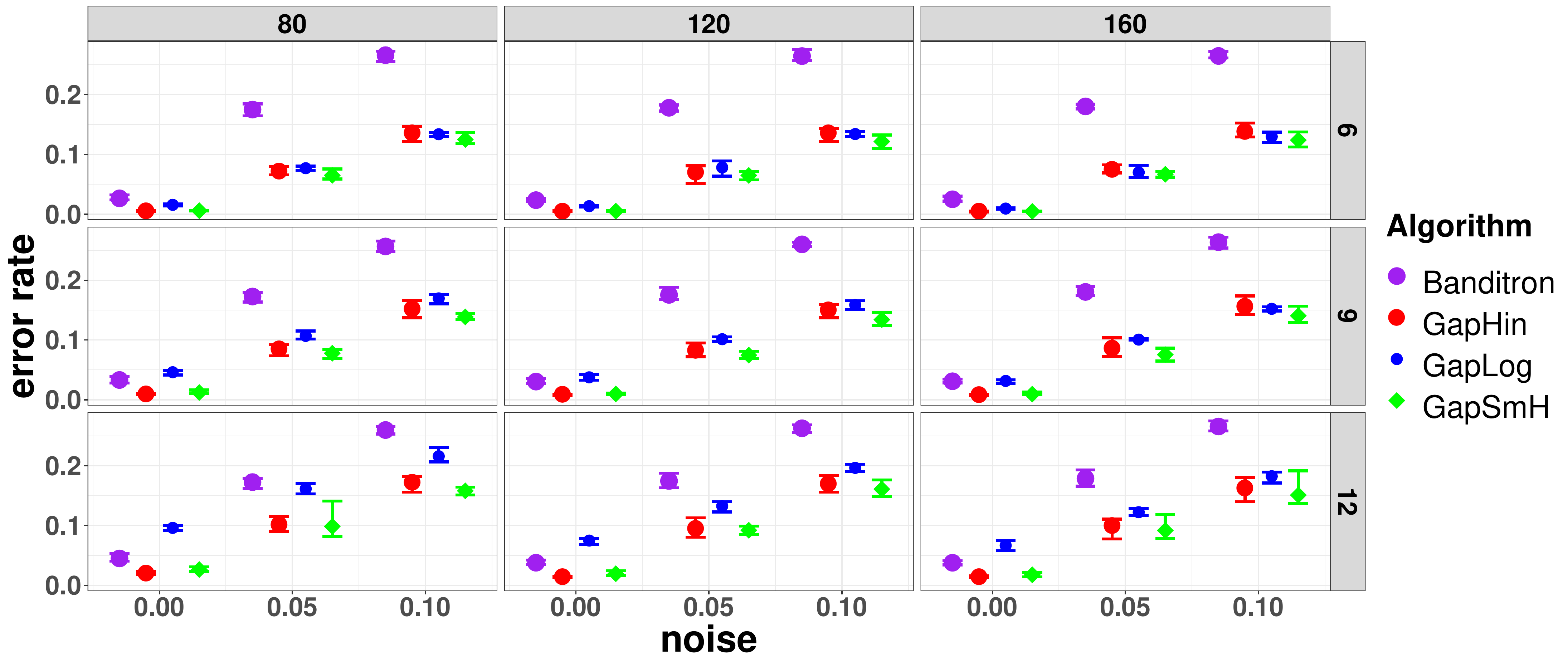}
    \caption{Results of the synthetic experiments for the multiclass spam filtering. The plot shows the best results of algorithms with parameters suggested by theory, or tuned with all parameters set to 1, except for $T$. The rows are the different values for $K$ and the columns are the different values for $d$. The whiskers represent the minimum and maximum error rates of the ten repetitions.}
    \label{fig:SynbestRA}
\end{figure}

The feature vectors $\x_t \in \{0, 1\}^d$ and class labels are generated as follows. For each class we reserve the first $10 d'$ bits to generate ``keywords''. For each class, $1 d'$ to $5 d'$ of these bits are randomly turned on to represent the keywords for that class. The remaining $30 d'$ bits are reserved for unrelated words, of which $5 d'$ are randomly turned on. For each $t$ we select a class uniform at random and set $\x_t$ to be the feature vector described above. Then, with probability 0, 0.05, or 0.1, we replace the class with a uniformly at random chosen class. We varied between 6, 9, or 12 classes and varied $d' \in [2, 3 ,4]$. In the multiclass spam filtering setting we fixed $\Qset = \{1\}$, i.e., querying $y_t$ corresponds to predicting label $1$.

As suggested by \citet{hazan2011newtron}, we tuned PNewtron with $\alpha = 10$ and chose the unit ball as domain. For SOBAdiag, we used the adaptive tuning for the exploration rate in the experiment with theoretical tuning and used a fixed exploration rate in the experiment with tuning based only on $T$.

For the experiments in the partial information setting we tuned the algorithms according to what theory suggests for the worst case. Additionally, we also ran experiments with all parameters set to 1, except for $T$. Initially we only tuned the algorithms with theoretical tuning, but we found that in the bandit setting two of the algorithms we compare with did not have satisfactory performance. All parameters based on (an upper bound on) $\|\U\|$ we set to $1$ as to not advantage or disadvantage algorithms that did not use tuning based on $\|\U\|$. All experiments involving randomness due to the algorithms we repeated ten times. 

All experiments were run on a system with 8GB of ram, an Intel i5-6300U CPU, and in python 3.8.5 on a Windows 10 operating system.  The results of the experiments are summarized in Figures \ref{fig:SynOnlyT}, \ref{fig:SynTheo}, \ref{fig:SynRAonlyT}, \ref{fig:SynRAtheo}, \ref{fig:SynFull}, and \ref{fig:SynbestRA}. We also ran experiments in for the label efficient graph comparing \textproc{Gappletron} with the label efficient \textproc{Perceptron} of \citet{cesa2006worst}. However, as they assume that labels come without a cost it was not clear how to tune their algorithm. We tried several parameter values which still guarantees sublinear regret in $T$. However, with none of the choices of parameters the label efficient \textproc{Perceptron} performed as well as \textproc{Gappletron}, so we choose not to report the results of these experiments. 

In the experiments with bandit feedback, as we mentioned, Figure \ref{fig:SynTheo} shows that with theoretical tuning both PNewtron and SOBAdiag performed poorly. We suspect this is due to the tuning with $d$, as when we do not tune with $d$ the performance of these algorithms greatly improved (see Figure \ref{fig:SynOnlyT}). The error rate of \textproc{Banditron} was the lowest with theoretical tuning in roughly half of the experiments. For GapLog and GapSmH the performance also improved when only tuning with $T$, especially in experiments with no noise. GapHin became more unstable when tuning only with $T$, as can been seen from the spread of the results. We suspect this is due to the fact that with the hinge loss, \textproc{Gappletron} explores less than with the smooth hinge loss and the logistic loss. Note that with the smooth hinge loss \textproc{Gappletron} explores less than with the logistic loss, which also seems to become apparent from the range of performance of these two versions of \textproc{Gappletron}. With theoretical tuning, in low noise settings Banditron is on par with the performance of all versions of \textproc{Gappletron}, but with high noise GapLog and GapSmH outperform all other algorithms. With tuning that only depends on $T$, GapLog and GapSmH strictly outperform all other algorithms. Figure \ref{fig:Synbestbandit} contains the results for the best version of each bandit algorithm, which shows that GapLog and GapSmH outperform all other algorithms.

In the multiclass spam filtering setting we compared  \textproc{Gappletron} with the importance weighted version of Banditron, which explored the revealing action with probability $\max\{\half, (X^2/T)^{1/3}\}$ or with probability $\max\{\half, (1/T)^{1/3}\}$, where the former is the theoretical tuning and $\|\x_t\|_2 \leq X$.

In multiclass spam filtering with theoretical tuning (Figure \ref{fig:SynRAtheo}), \textproc{Banditron} had the lowest error rate in the no-noise experiments. For experiments with noise we see that as $K$ increases the performance of \textproc{Gappletron} deteriorates compared to the performance of \textproc{Banditron}. We suspect this is due the $\sqrt{K}$ in the exploration of \textproc{Gappletron}, which does not appear in the exploration of \textproc{Banditron}\footnote{Although no bound exists for this algorithm in literature, one can adapt the proof of \citet{kakade2008efficient} to prove a $O((X\|\U\|)^{1/3}T^{2/3})$ surrogate regret bound.}. In Figure \ref{fig:SynRAonlyT} we can see that with tuning based solely on $T$, the spread of the algorithms seems to increase, as was the case in the bandit setting. Either GapHin or GapSmH had the lowest error rate in these experiments, which is also true when comparing the algorithms across the tuning for the exploration rate (Figure \ref{fig:SynbestRA}). The performance of GapLog get worse as $K$ increases, is was the case in the full information setting. We suspect this is due the fact that GapLog explores more than GapHin and GapSmH. While in the bandit setting extra exploration gives additional information, in multiclass spam filtering it does not provide additional information and it only leads to making more mistakes. 

In the full information setting we compare \textproc{Gappletron} with the diagonal version of the second-order Perceptron, soPerceptron \citep{cesa2005second}, the multiclass Perceptron, and the passive-aggressive version of the multiclass perceptron \citep{crammer2006online}. In Figure \ref{fig:SynFull}, we can see that if there is no label noise, essentially all algorithms find the separating hyperplane. Note that GapLog has the worst performance in this case. This is due to the fact that with the logistic loss, \textproc{Gappletron} never stops with playing at random, leading to sometimes unnecessarily playing the wrong action. We also see this behavior in experiments with label noise, where GapLog performs worse than the other versions of \textproc{Gappletron}, although its performance in still either on par with or better than the non-\textproc{Gappletron} algorithms in these experiments. Overall, in the full information experiments GapSmH appears to have the best performance.

\end{document}